\PassOptionsToPackage{unicode}{hyperref}
\PassOptionsToPackage{hyphens}{url}
\PassOptionsToPackage{dvipsnames,svgnames,x11names}{xcolor}
\documentclass[
  12pt]{article}
\usepackage{mathtools}

\usepackage{amsmath,amssymb}
\usepackage{amsthm}
\usepackage{mathrsfs}
\usepackage{comment}

\usepackage{iftex}
\ifPDFTeX
  \usepackage[T1]{fontenc}
  \usepackage[utf8]{inputenc}
  \usepackage{textcomp} 
\else 

  \usepackage{unicode-math}
\usepackage{url}
  \defaultfontfeatures{Scale=MatchLowercase}
  \defaultfontfeatures[\rmfamily]{Ligatures=TeX,Scale=1}
\fi
\usepackage{lmodern}
\ifPDFTeX\else  
\fi
\IfFileExists{upquote.sty}{\usepackage{upquote}}{}
\IfFileExists{microtype.sty}{
  \usepackage[]{microtype}
  \UseMicrotypeSet[protrusion]{basicmath} 
}{}
\makeatletter
\@ifundefined{KOMAClassName}{
  \IfFileExists{parskip.sty}{%
    \usepackage{parskip}
  }{
    \setlength{\parindent}{0pt}
    \setlength{\parskip}{6pt plus 2pt minus 1pt}}
}{
  \KOMAoptions{parskip=half}}
\makeatother
\usepackage{xcolor}
\makeatletter
\ifx\paragraph\undefined\else
  \let\oldparagraph\paragraph
  \renewcommand{\paragraph}{
    \@ifstar
      \xxxParagraphStar
      \xxxParagraphNoStar
  }
  \newcommand{\xxxParagraphStar}[1]{\oldparagraph*{#1}\mbox{}}
  \newcommand{\xxxParagraphNoStar}[1]{\oldparagraph{#1}\mbox{}}
\fi
\ifx\subparagraph\undefined\else
  \let\oldsubparagraph\subparagraph
  \renewcommand{\subparagraph}{
    \@ifstar
      \xxxSubParagraphStar
      \xxxSubParagraphNoStar
  }
  \newcommand{\xxxSubParagraphStar}[1]{\oldsubparagraph*{#1}\mbox{}}
  \newcommand{\xxxSubParagraphNoStar}[1]{\oldsubparagraph{#1}\mbox{}}
\fi
\makeatother

\usepackage{longtable,booktabs,array}
\usepackage{calc} 
\usepackage{etoolbox}
\makeatletter
\patchcmd\longtable{\par}{\if@noskipsec\mbox{}\fi\par}{}{}
\makeatother
\IfFileExists{footnotehyper.sty}{\usepackage{footnotehyper}}{\usepackage{footnote}}
\makesavenoteenv{longtable}
\usepackage{graphicx}
\makeatletter
\def\maxwidth{\ifdim\Gin@nat@width>\linewidth\linewidth\else\Gin@nat@width\fi}
\def\maxheight{\ifdim\Gin@nat@height>\textheight\textheight\else\Gin@nat@height\fi}
\makeatother
\setkeys{Gin}{width=\maxwidth,height=\maxheight,keepaspectratio}
\makeatletter
\def\fps@figure{htbp}
\makeatother

\makeatletter
\@ifpackageloaded{caption}{}{\usepackage{caption}}
\AtBeginDocument{%
\ifdefined\contentsname
  \renewcommand*\contentsname{Table of contents}
\else
  \newcommand\contentsname{Table of contents}
\fi
\ifdefined\listfigurename
  \renewcommand*\listfigurename{List of Figures}
\else
  \newcommand\listfigurename{List of Figures}
\fi
\ifdefined\listtablename
  \renewcommand*\listtablename{List of Tables}
\else
  \newcommand\listtablename{List of Tables}
\fi
\ifdefined\figurename
  \renewcommand*\figurename{Figure}
\else
  \newcommand\figurename{Figure}
\fi
\ifdefined\tablename
  \renewcommand*\tablename{Table}
\else
  \newcommand\tablename{Table}
\fi
}
\@ifpackageloaded{float}{}{\usepackage{float}}
\floatstyle{ruled}
\@ifundefined{c@chapter}{\newfloat{codelisting}{h}{lop}}{\newfloat{codelisting}{h}{lop}[chapter]}
\floatname{codelisting}{Listing}

\makeatother
\makeatletter
\@ifpackageloaded{caption}{}{\usepackage{caption}}
\@ifpackageloaded{subcaption}{}{\usepackage{subcaption}}
\makeatother

\ifLuaTeX
  \usepackage{selnolig}  
\fi
\usepackage[]{natbib}
\usepackage{bookmark}

\IfFileExists{xurl.sty}{\usepackage{xurl}}{} 
\hypersetup{
  pdftitle={Title},
  pdfauthor={Author 1; Author 2},
  pdfkeywords={3 to 6 keywords, that do not appear in the title},
  colorlinks=true,
  linkcolor={blue},
  filecolor={Maroon},
  citecolor={Blue},
  urlcolor={Blue},
  pdfcreator={LaTeX via pandoc}}

\usepackage{dsfont}
\usepackage{bm}
\usepackage[capitalise,sort]{cleveref}

\newcommand{\E}{\mathbb{E}}
\newcommand{\N}{\mathbb{N}}

\newcommand{\R}{\mathbb{R}}

\newcommand{\diff}{\,\mathrm{d}}
\usepackage{comment}

\newtheorem{theorem}{Theorem}[section]

\newtheorem{proposition}[theorem]{Proposition}

\theoremstyle{definition}

\newtheorem{remark}[theorem]{Remark}

\begin{document}

\def\spacingset#1{\renewcommand{\baselinestretch}%
{#1}\small\normalsize} \spacingset{1}


  \title{\bf Continuous-Time Learning of Probability Distributions: A Case Study in a Digital Trial of Young Children with Type 1 Diabetes}
  \author{
    Antonio Álvarez-López\thanks{Universidad Autónoma de Madrid}\\
    Universidad Autónoma de Madrid\\
    and \\
    Marcos Matabuena\thanks{Mohamed bin Zayed University of Artificial Intelligence}\\
    Mohamed bin Zayed University of Artificial Intelligence
  }
  \maketitle


\bigskip
\begin{abstract}
Understanding how biomarker distributions evolve over time is a central challenge in digital health and chronic disease monitoring. In diabetes, changes in the distribution of glucose measurements can reveal patterns of disease progression and treatment response that conventional summary measures miss. Motivated by a 26-week clinical trial comparing the closed-loop insulin delivery system t:slim X2 with standard therapy in children with type 1 diabetes, we propose a probabilistic framework to model the continuous-time evolution of time-indexed distributions using continuous glucose monitoring data (CGM) collected every five minutes. We represent the glucose distribution as a Gaussian mixture, with time-varying mixture weights governed by a neural ODE. We estimate the model parameter using a distribution-matching criterion based on the maximum mean discrepancy. The resulting framework is interpretable, computationally efficient, and sensitive to subtle temporal distributional changes. Applied to CGM trial data, the method detects treatment-related improvements in glucose dynamics that are difficult to capture with traditional analytical approaches.
\end{abstract}

\noindent
{\it Keywords:} Continuous glucose monitoring; Digital health; Distribution dynamics; Neural ODEs; Gaussian mixture models; Maximum Mean Discrepancy.
\vfill

\newpage
\spacingset{1.8} 

\newpage
\spacingset{1.8} 



\section{Introduction}\label{s:intro}

Characterizing the distribution of a random variable is a classical problem in statistics \cite{silverman2018density} and remains a central challenge in modern machine learning \cite{bengio2017deep}, where accurate distribution representations are essential for tasks such as text generation and automated reporting \cite{mesko2023imperative}. 
More broadly, many scientific questions require understanding not only individual observations but also how the full distribution of a process evolves over time. This perspective is particularly relevant in clinical applications. 

In digital health, estimating the distribution of individual physiological time-series data over specific time periods enables the construction of individual representations that capture their underlying physiological processes with high precision \cite{Matabuena2021Glucodensities, matabuena2023distributional, ghosal2023distributional}. Recent studies show that, when used properly, such representations can reveal clinically relevant patterns that traditional (non-digital) biomarkers do not detect \cite{Katta2024Interpretable, Matabuena2024Glucodensity, park2025beyond, matabuena2026exploratory}.


In this paper, motivated by digital health applications, we study the problem of continuously estimating a time-indexed distribution $\{F_t\}_{t\in[0,T]}$ from sequentially observed data. The goal is to learn how the underlying distribution evolves and to represent that evolution in a way that is both flexible and interpretable. Standard approaches are often unsatisfactory in this setting. Extending classical kernel density estimators (KDEs) \cite{chacon2018multivariate} to include time typically leads to a strong sensitivity to tuning parameters and to the curse of dimensionality, while flow-based generative models \cite{Papamakarios2021Normalizing} may be less interpretable and can require substantial training efforts. Semiparametric alternatives offer partial remedies. For example, time-varying models such as Generalized Additive Models for Location, Scale, and Shape (GAMLSS) \cite{rigby2005generalized} alleviate some of these issues, but most implementations are designed for scalar responses and may impose rigid functional forms. More recent multilevel functional approaches based on functional-quantile representations \cite{10.1214/26-AOAS2139I} offer interpretability but rely on linear dynamics, which limits their ability to capture complex non-linear relationships and multivariate distributions.

To address these limitations
, we propose a continuous-time Gaussian mixture framework in which distributional dynamics are represented through time-varying mixture weights governed by a neural ODE.



\subsection*{Problem formulation}

Let $T>0$. For each $t\in[0,T]$, let $X_t\in\mathbb{R}^d$ denote a random vector representing the quantity of interest at time $t$. Its (cumulative) distribution function is
\begin{equation}\label{eqn:eq1}
  F_t(x)
  \;\coloneqq\;
  \mathbb{P}\bigl(X_t\le x\bigr)
  \;=\;
  \int_{(-\infty,x]} f_t(r)\diff r,\qquad x\in\R^d,
\end{equation}
\noindent where inequality and integral are taken component-wise when $d>1$, and $f_{t}(\cdot)$ is the probability density function (when it exists) at time $t$. 

Our target object is $F_t(\cdot)$, or equivalently, the density curve \(t\mapsto f_t\), from which \(F_t\) can be recovered through \eqref{eqn:eq1}. In practice, however, the process is not observed continuously in time. Instead, data are available on a discrete time grid
\begin{equation}\label{eq:time.grid}
\tau_m \coloneqq \{t_0,\ldots,t_m\}\subset[0,T].
\end{equation}
At each \(t_i\in\tau_m\), we observe a sample drawn from the distribution \(\mu_{t_i}\),
\begin{equation}\label{eq:obs.DH}
X_{t_i,1},\ldots,X_{t_i,N_i}\sim \mu_{t_i},\qquad t_i\in\tau_m.
\end{equation}
Depending on the application, these observations may be treated either as approximately independent snapshots across time or as part of a longitudinal setting with temporal dependence. In both cases, the statistical problem is to recover a coherent continuous-time representation of the underlying distributional dynamics from these discrete observations.

We model each \(f_t\) as a Gaussian mixture with \(K\) components,
\[
  f_t(x)
  \;=\;
  \sum_{s=1}^{K}\alpha_s(t)\,\mathcal{N}\bigl(x\mid m_s,\Sigma_s\bigr),
\]
where \(m_s\in\mathbb{R}^d\) and \(\Sigma_s\in\mathscr S_d^+(\mathbb{R})\) are the mean vector and covariance matrix of the \(s\)th Gaussian component, respectively, and the weight vector $\alpha(t)\coloneqq[\alpha_1(t),\dots,\alpha_K(t)]^{\top}$ lies in the probability simplex
\begin{equation}\label{eq:simp}
\alpha(t)\in\Delta^{K-1}
  \coloneqq
  \Bigl\{\,w\in(\mathbb{R}_{\ge0})^K \;\mid\; \sum_{s=1}^{K} w_s = 1\,\Bigr\}.
\end{equation}

The component means and covariance matrices are shared over time, while the mixture weights vary continuously with $t$. This shared-dictionary representation is natural in our motivating application, where the Gaussian components may be viewed as latent glycemic regimes whose locations and scales remain relatively stable over the study period, while their relative prevalence changes over time. At the same time, this is a strong structural assumption: by allowing temporal variation only through the mixture weights, we trade some modeling flexibility for interpretability and a more parsimonious characterization of distributional change. We model the resulting weight dynamics through a neural ODE. As $K$ increases, Gaussian mixtures provide substantial approximation flexibility, whereas for moderate values of $K$ the resulting weight trajectories remain smooth, interpretable and statistically tractable.

\subsection*{Digital health motivation and distributional data analysis}

Our motivation comes from the analysis of glucose distributions in longitudinal diabetes trials \cite{battelino2023continuous}, where glucose is constantly recorded using continuous glucose monitoring (CGM). Under free-living conditions, individual glucose time series cannot be directly aligned, making the raw temporal stochastic process difficult to compare between participants \cite{Ghosal2024Multivariate,Matabuena2021Glucodensities,Matabuena2024Glucodensity}. In this setting, the time-varying probability distribution provides a more natural biomarker to characterize the evolution of glucose metabolism \cite{Katta2024Interpretable,park2025beyond,Matabuena2021Glucodensities}. Compared to conventional CGM summary statistics—such as mean glucose or time-in-range metrics—this representation conveys richer information by capturing the full spectrum of low, moderate and high glucose values within a unified functional profile \cite{Matabuena2021Glucodensities,Katta2024Interpretable}.

More broadly, distributional data analysis \cite{ghosal2023distributional,szabo2016learning} is an emerging area that treats probability distributions, or collections of them, as statistical objects for unsupervised and supervised learning, including the prediction of clinical outcomes \cite{Matabuena2021Glucodensities}. Biomedical applications are among their most prominent use cases. In digital health, measurements collected by continuous glucose monitors, accelerometers, or imaging modalities such as functional magnetic resonance imaging (fMRI) are increasingly represented through empirical distributions that serve as latent descriptions of underlying physiological processes \cite{Ghosal2025Distributional,Ghosal2024Multivariate,Matabuena2024Glucodensity,matabuena2022physical}. In recent years, several regression frameworks have been proposed in which predictors, responses, or both are represented as probability distribution functions \cite{matabuena2023distributional,Ghosal2025Distributional,Ghosal2024Multivariate, Ghosal26Survival,matabuena2025predicting}. A related line of work represents probability distributions as random objects in metric spaces and develops statistical procedures for that setting (see, e.g., \cite{lugosi2024uncertainty}). Despite this progress, there remains no general framework for modeling moderate- to high-dimensional distributions that simultaneously offers flexibility and interpretability. The methodology introduced here is intended to help bridge this gap.

\subsection*{Contributions}

This paper develops a statistical framework to model the continuous-time evolution of probability distributions from longitudinal data and shows its practical value in a case study of digital health. Our main contributions are as follows:

\begin{enumerate}
\item We propose a general framework for modeling the dynamics of multivariate probability distributions in continuous time by combining Gaussian mixture representations with neural ODE smoothing, yielding an interpretable estimator.

\item We introduce an estimation procedure based on a Maximum Mean Discrepancy objective \cite{gretton2012kernel}. This avoids the need to specify and optimize a full likelihood under temporal dependence, while producing a simple differentiable loss with closed-form expressions for Gaussian mixtures under Gaussian kernels. At each time point, the empirical loss function takes the form of a V-statistic \cite{serfling2009approximation}.

\item We demonstrate the practical utility of the proposed methods in a biomedical application by analyzing continuous glucose monitoring data from a longitudinal trial in young children with type 1 diabetes \cite{wadwa2023trial}. Our approach provides clinically meaningful insights on glucose dynamics and the benefits of the new closed-loop insulin system compared to standard therapy, including treatment-related changes that are less apparent from conventional analytical approaches.

\item We provide theoretical support for the first stage of the procedure, prior to temporal smoothing. In particular, we establish an approximation result for the shared-dictionary representation and a finite-sample bound for the minimum-MMD estimator of the mixture weights at each observed time point. The mathematical results and the corresponding proofs are given in the Supplementary Material (\Cref{sec:the}).
\end{enumerate}




\section{Case study: closed-loop insulin delivery in young children with type 1 diabetes}\label{sec:case}

Treatment of type 1 diabetes in young children remains a particularly challenging task \cite{schoelwer2024use, ware2024eighteen}. In children younger than 6 years of age, insulin doses are small, while food intake, meal timing, and physical activity are often unpredictable, making dosing decisions especially difficult. Young children may also exhibit greater glycemic variability than older children and adults. As a result, treatment strategies and therapeutic goals are often harder to define in this population, and only a limited number of hybrid closed-loop systems \cite{kitagawa2025artificial, hughes2023digital} have received formal approval from the U.S. Food and Drug Administration for children under 6 years of age.

This case study is motivated by a randomized clinical trial evaluating the \textit{t:slim X2} insulin pump with Control-IQ Technology (Tandem Diabetes Care) in young children with type 1 diabetes \cite{wadwa2023trial}. The \textit{t:slim X2} system is a hybrid closed-loop device that uses continuous glucose monitoring (CGM) measurements to guide automated insulin delivery through basal rate adjustments and correction boluses every five minutes. Although this technology has been extensively studied in older children, adolescents and adults \cite{beck2023meta, stahl2025efficacy}, the evidence in children under 6 years of age has remained relatively limited.

The trial \cite{wadwa2023trial} enrolled 102 children aged 2--5 years and randomized them in a 2:1 ratio to closed-loop control (\(n=68\)) or standard care (\(n=34\)) for 26 weeks. Our goal in revisiting these data is not simply to reassess treatment efficacy using conventional endpoints, but to examine whether a distributional representation of CGM measurements can reveal treatment-related changes in glucose regulation that are less apparent from standard summary measures.

This question is motivated by the growing need for analytical tools that characterize glucose behavior beyond conventional CGM summaries, such as mean glucose, time in range, and related compositional metrics. Although these summaries are clinically useful, they do not fully capture the richness of CGM data, which contain information on multiple time scales, including short-term fluctuations and rate of change. These features may reflect clinically significant differences in glucose regulation and response to treatment.

Recent work has proposed \textit{glucodensity} as a functional representation of a glucose time series through the marginal distribution of CGM measurements \cite{Matabuena2021Glucodensities}. This framework can capture aspects of glucose behavior that are not fully summarized by standard scalar metrics. However, a univariate glucodensity does not directly capture temporal dynamics, such as whether glucose levels change rapidly or more gradually over time. To address this limitation, multivariate extensions incorporate dynamic features \cite{Matabuena2024Glucodensity}, including the rate of change and, potentially, acceleration, through joint density representations such as \((G, \dot G)\) or \((G, \dot G, \ddot G)\). These multivariate representations provide a natural framework for studying how glucose levels and glucose dynamics evolve over time.

In this paper, we reanalyze the trial data \cite{wadwa2023trial} using a multivariate glucodensity framework \cite{Matabuena2024Glucodensity} designed to capture both the distribution of glucose values and key aspects of glucose dynamics. We summarize each participant's time-varying distribution using a Gaussian mixture model with \(K=5\) components and a shared component dictionary over time. The weights of the participant-specific mixture then define longitudinal trajectories that characterize the evolution of individual glucose distributions and, in the multivariate setting, the corresponding glucose dynamics during follow-up.

\Cref{fig:representative_individuals_d2k5} illustrates the structure of the data and the proposed representation for three representative participants (two in the intervention group and one in the control group). For each participant, we show the raw CGM time series, the estimated trajectories of the mixture weights, and the fitted bivariate densities based on glucose and its rate of change at the beginning and end of the trial.

Our primary objective in this case study is to assess whether this distributional framework yields clinically interpretable comparisons between treatment groups and whether incorporating glucose dynamics into the glucodensity representation provides additional insight beyond conventional summary measures. In the original article, \cite{wadwa2023trial} found that glucose levels were within the target range for a greater proportion of time under the closed-loop system than in standard care. However, the authors did not find clear differences in the time spent in other glycemic ranges such as hypoglycemia or, for some subgroups of individuals, in other common diabetes biomarkers such as glycated hemoglobin (HbA1c). We analyze this dataset from a distributional perspective with the goal of enriching and extending the original analysis.

\begin{figure}[t!]
    \centering

    \begin{subfigure}[b]{\textwidth}
        \centering
        \begin{minipage}[b]{0.23\textwidth}
            \centering
            \includegraphics[width=\textwidth]{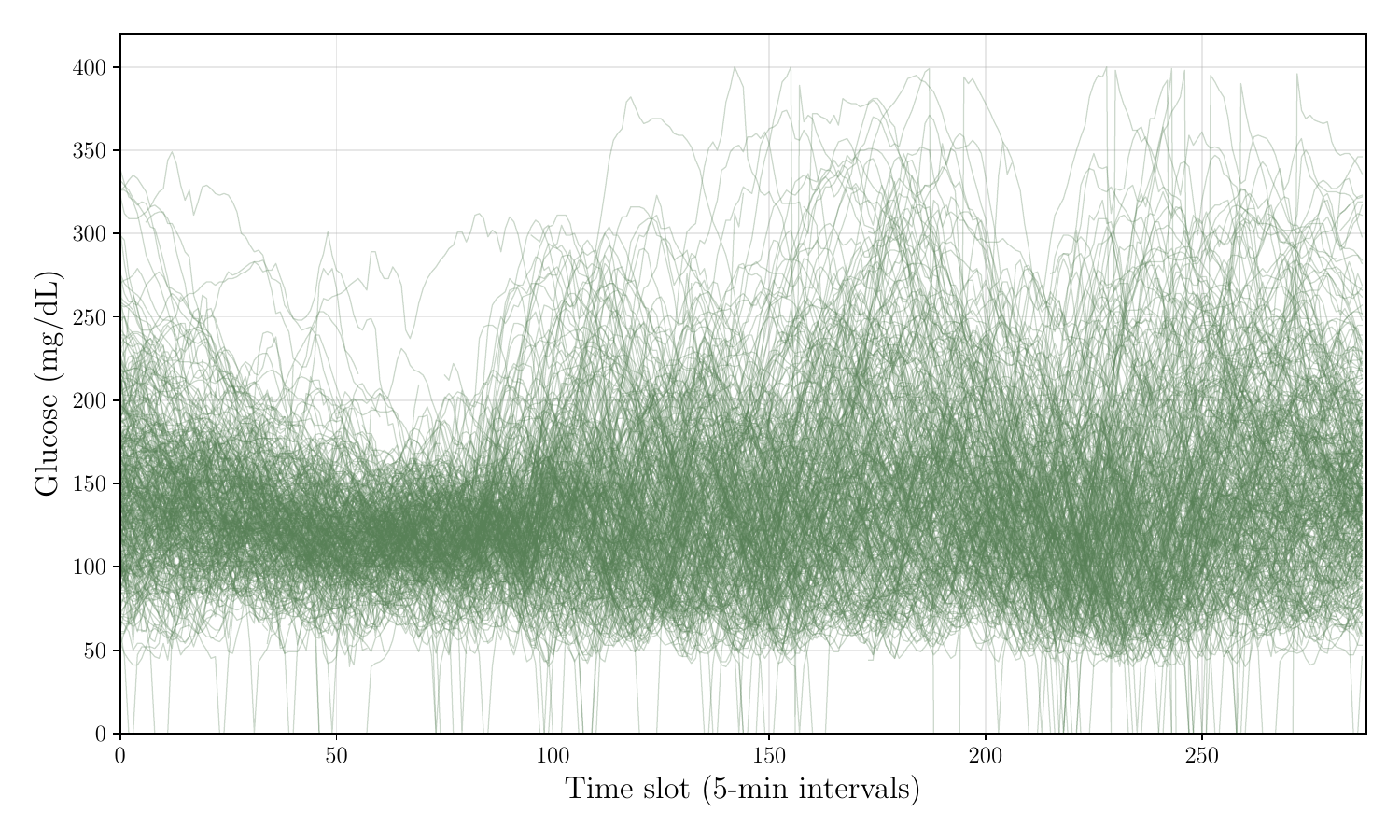}
        \end{minipage}
        \hfill
        \begin{minipage}[b]{0.23\textwidth}
            \centering
            \includegraphics[width=\textwidth]{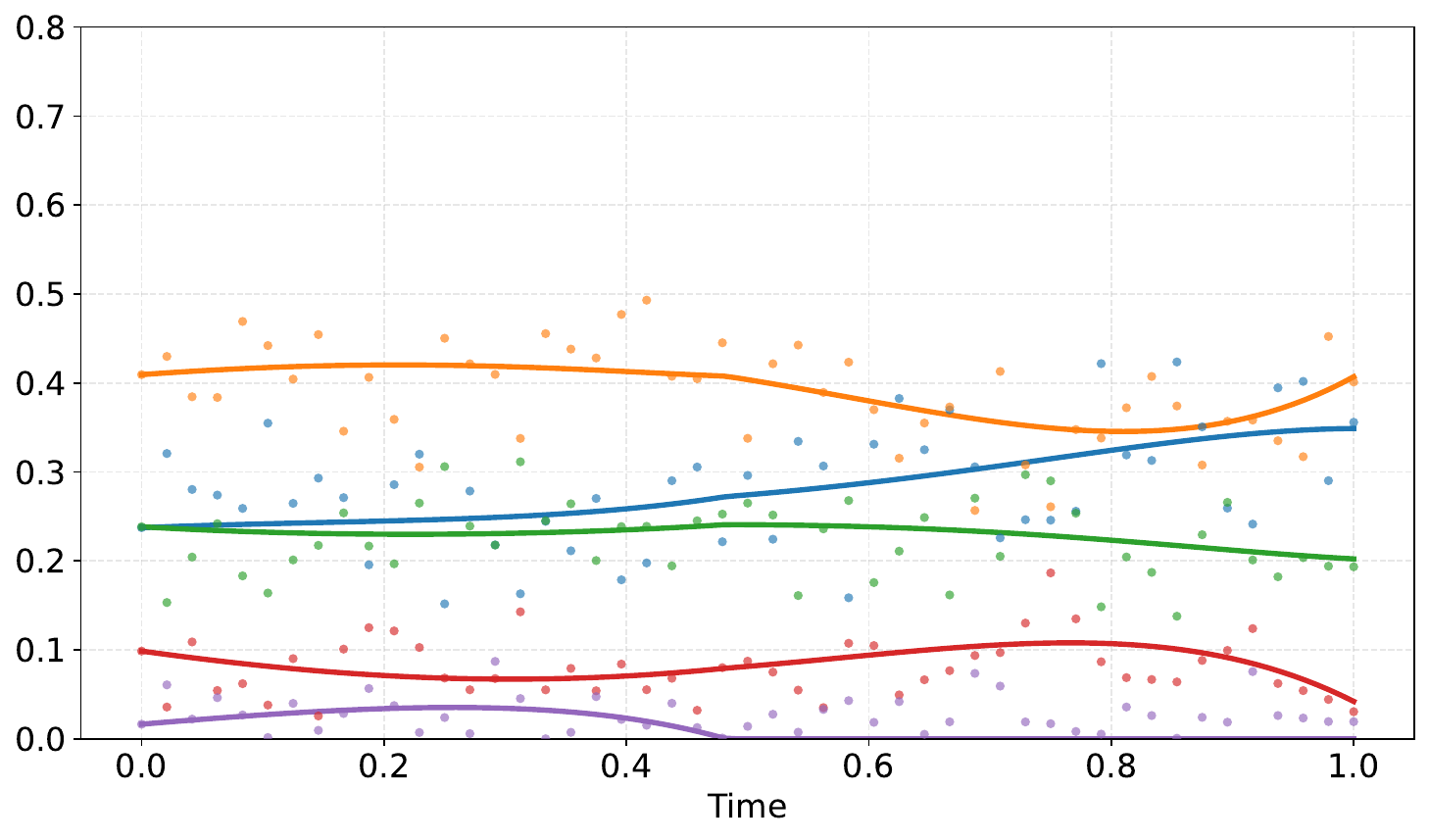}
        \end{minipage}
        \hfill
        \begin{minipage}[b]{0.23\textwidth}
            \centering
            \includegraphics[width=\textwidth]{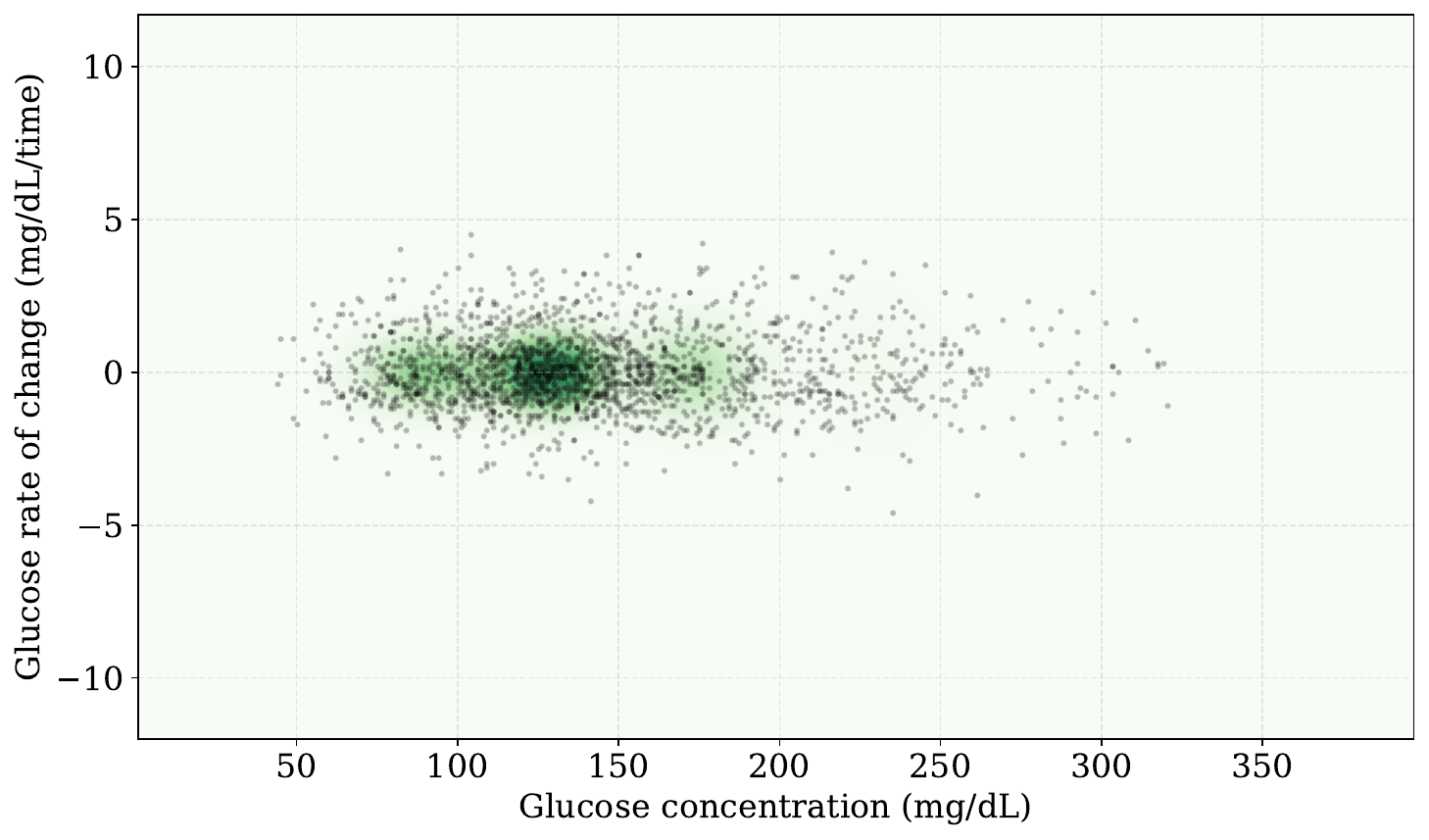}
        \end{minipage}
          \hfill
        \begin{minipage}[b]{0.23\textwidth}
            \centering
            \includegraphics[width=\textwidth]{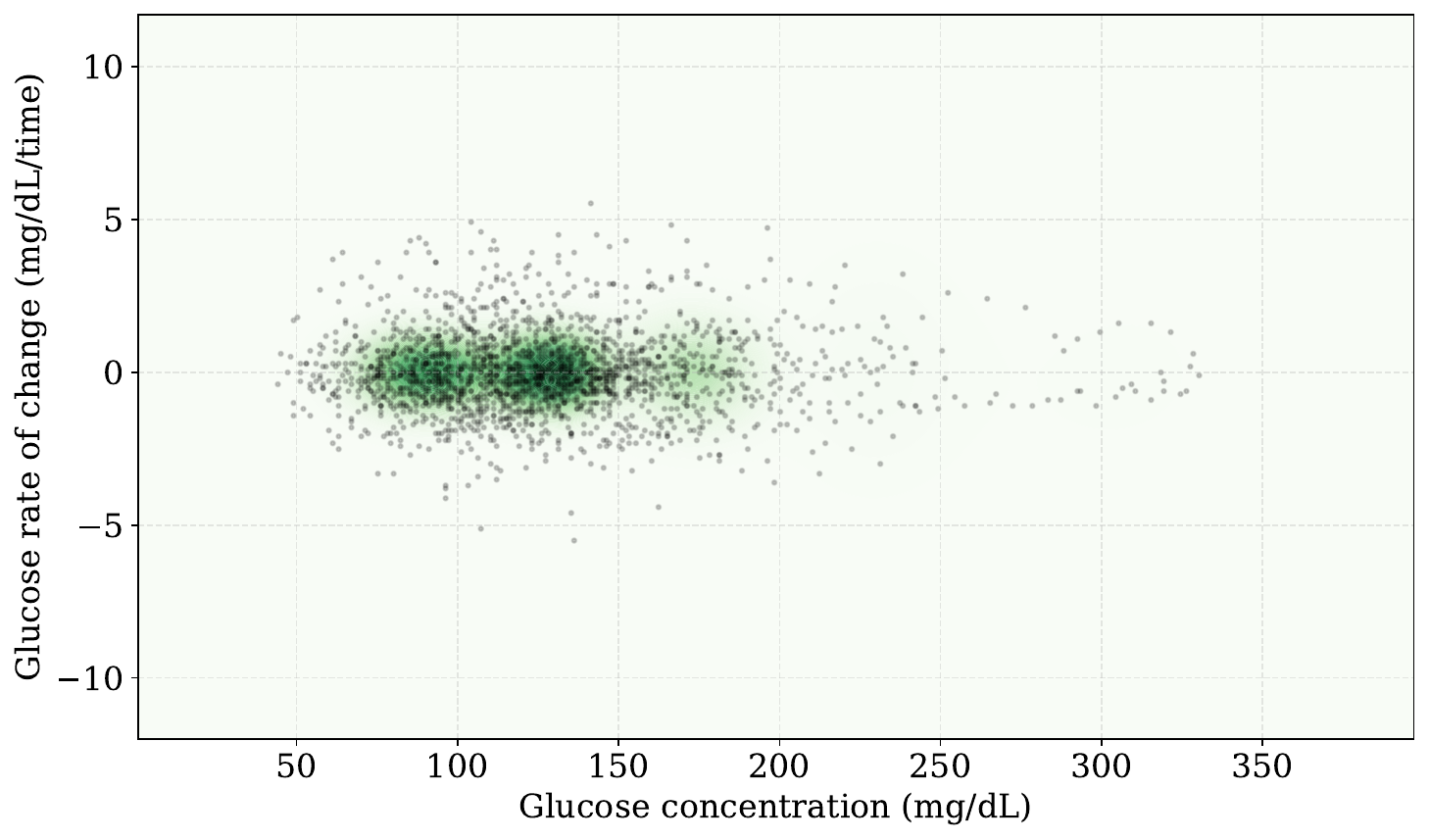}
        \end{minipage}
        \caption*{Participant 20 (Treatment)}
    \end{subfigure}

    \vspace{0.5cm}
      
    \begin{subfigure}[b]{\textwidth}
        \centering
        \begin{minipage}[b]{0.23\textwidth}
            \centering
            \includegraphics[width=\textwidth]{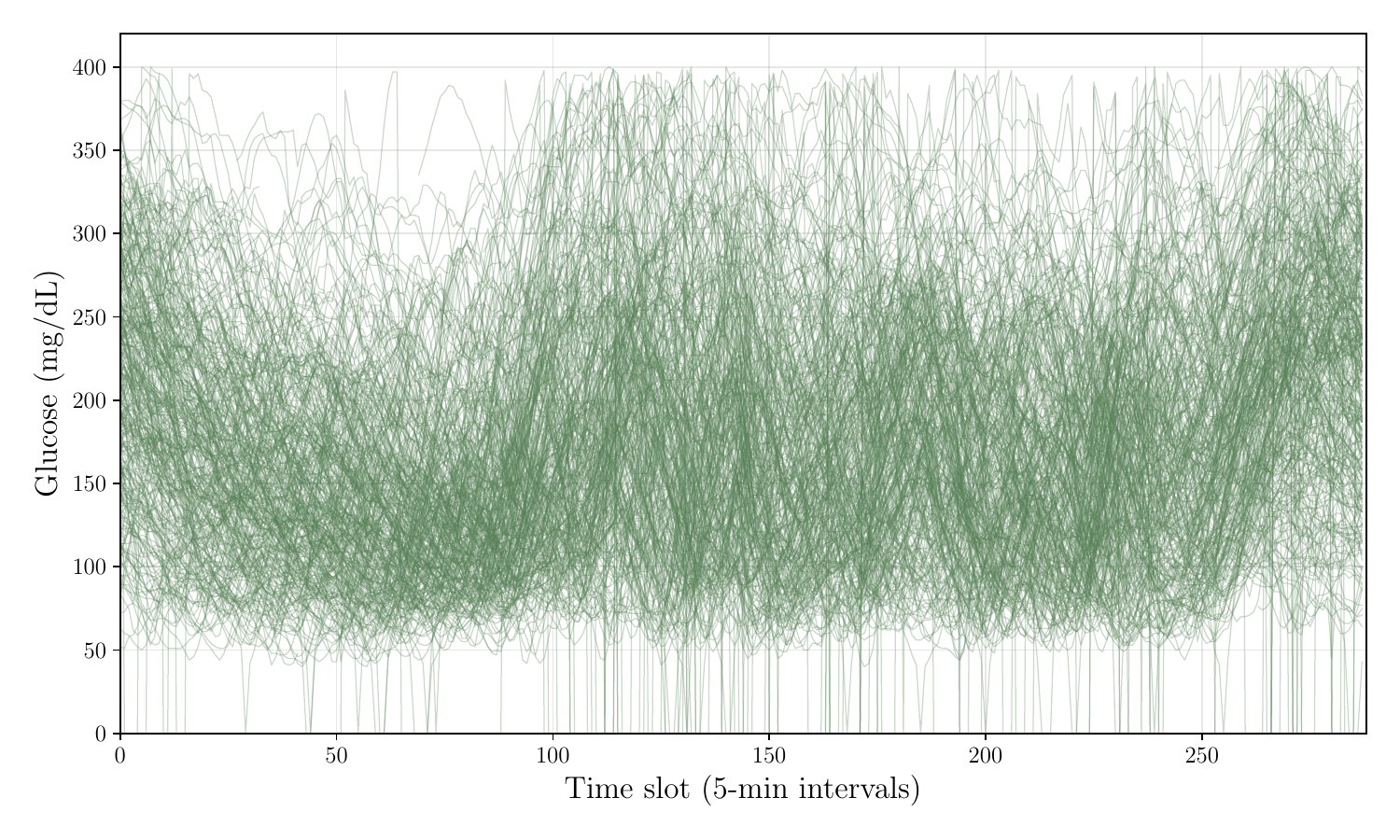}
        \end{minipage}
        \hfill
        \begin{minipage}[b]{0.23\textwidth}
            \centering
            \includegraphics[width=\textwidth]{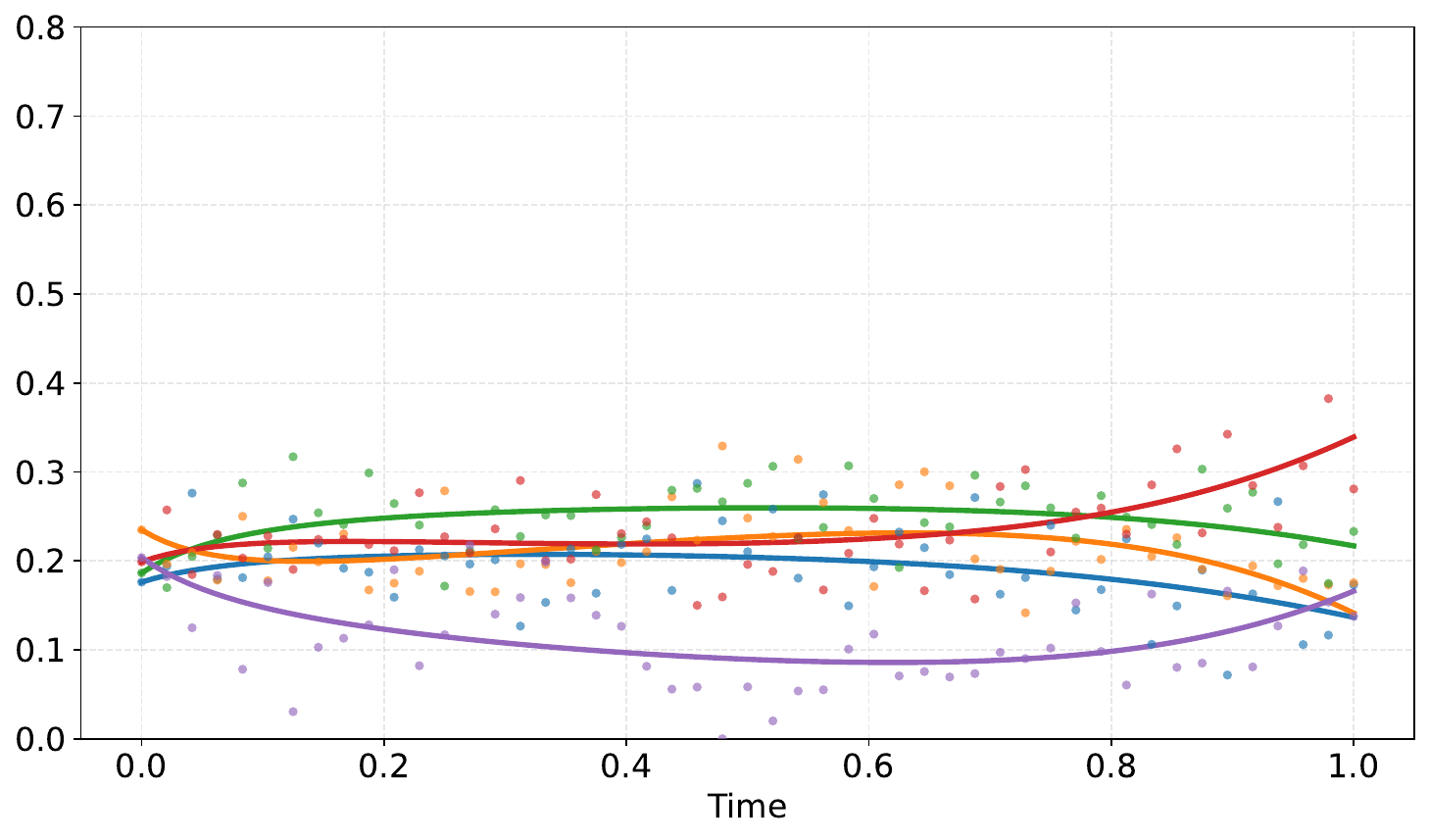}
        \end{minipage}
        \hfill
        \begin{minipage}[b]{0.23\textwidth}
            \centering
            \includegraphics[width=\textwidth]{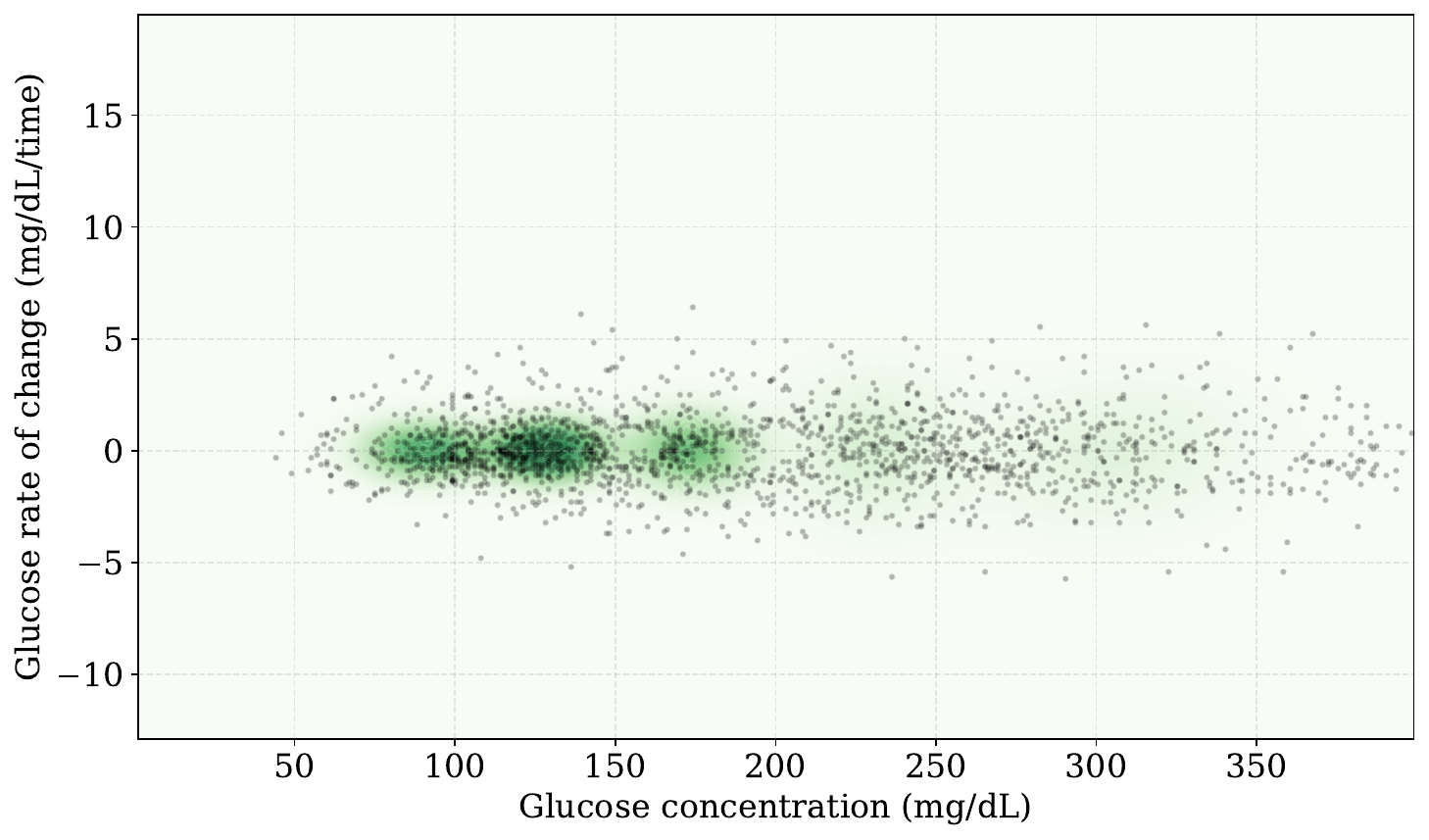}
        \end{minipage}
          \hfill
        \begin{minipage}[b]{0.23\textwidth}
            \centering
            \includegraphics[width=\textwidth]{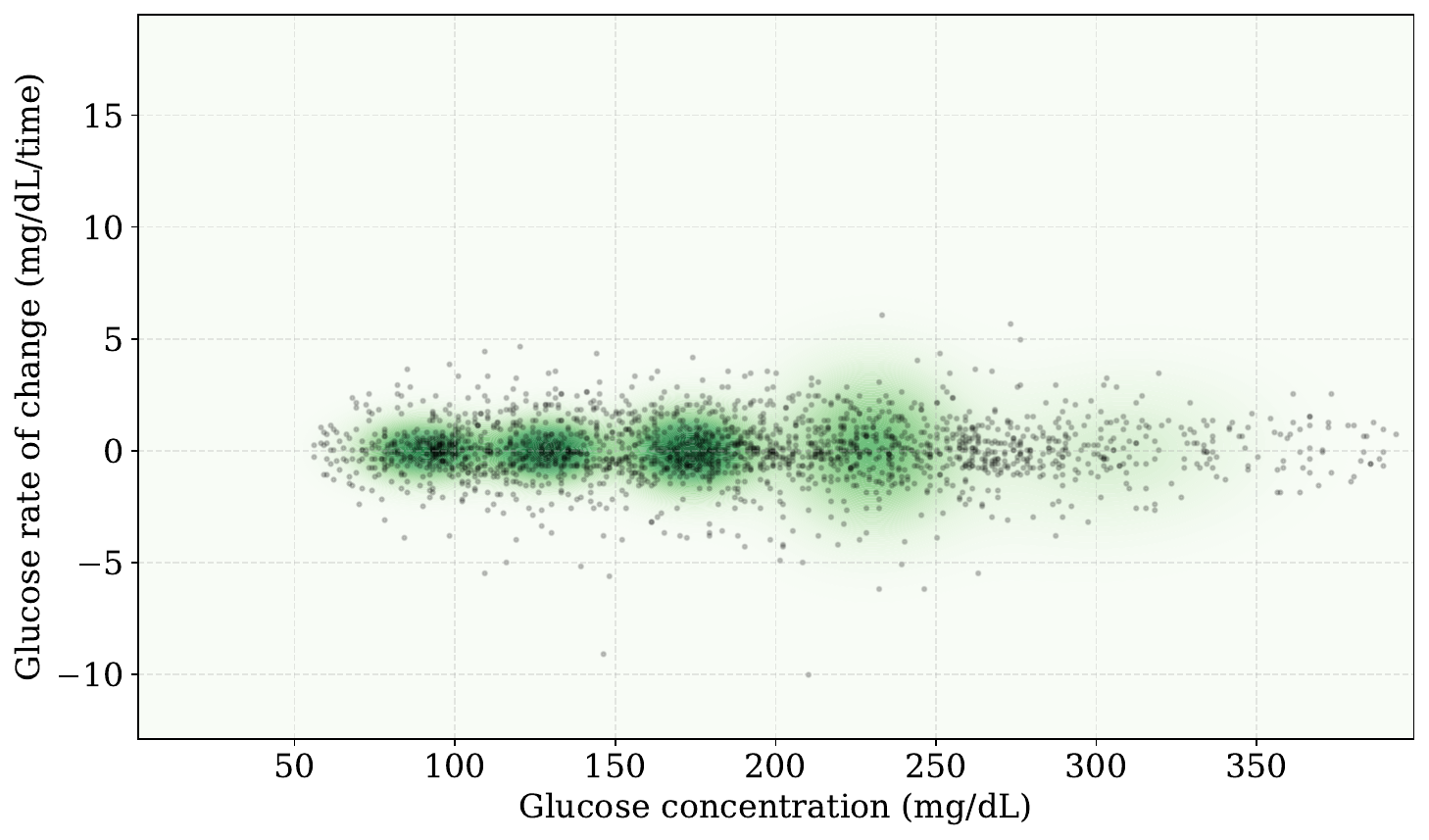}
        \end{minipage}
        \caption*{Participant 58 (Treatment)}
    \end{subfigure}
    
    \vspace{0.5cm}
    
     \begin{subfigure}[b]{\textwidth}
        \centering
        \begin{minipage}[b]{0.23\textwidth}
            \centering
            \includegraphics[width=\textwidth]{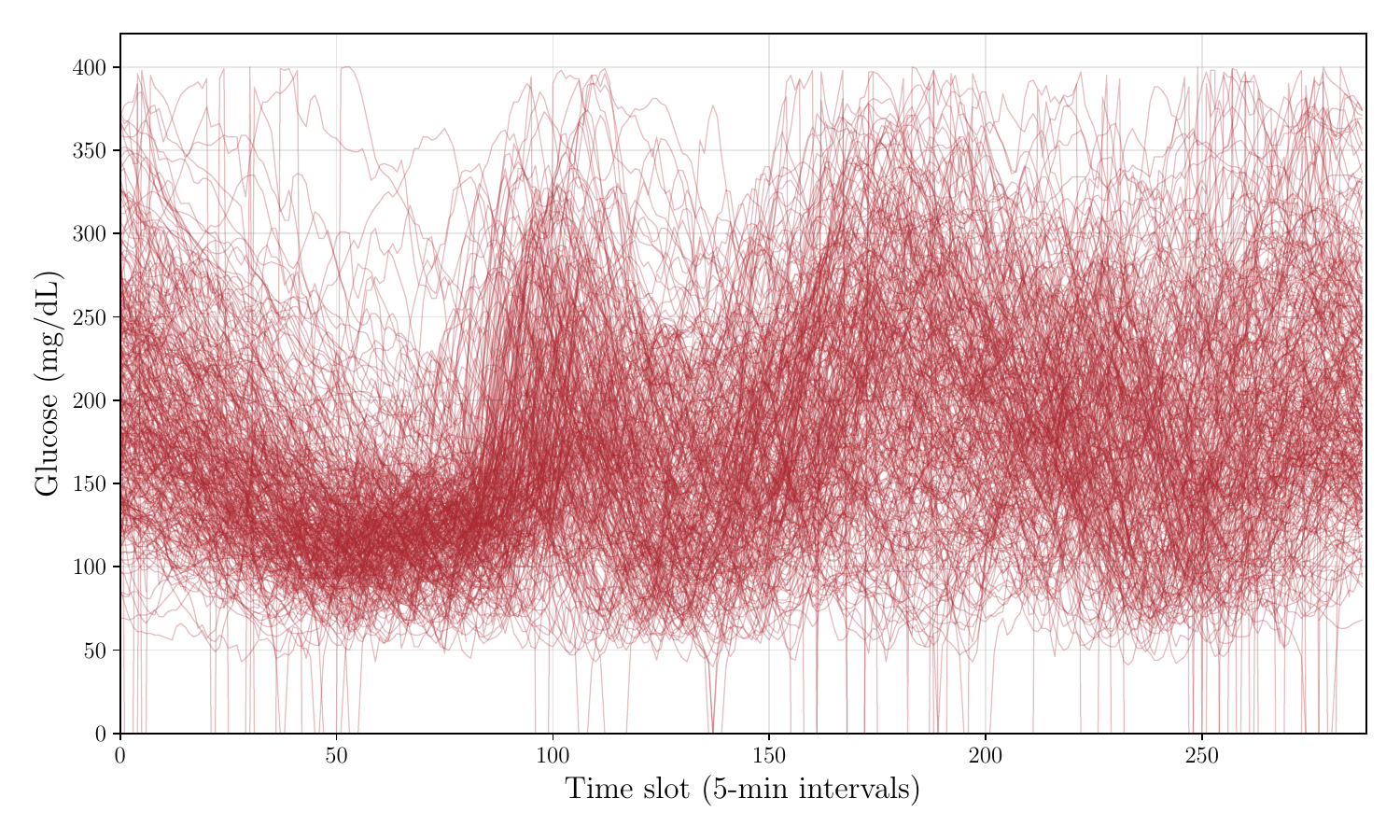}
        \end{minipage}
        \hfill
        \begin{minipage}[b]{0.23\textwidth}
            \centering
            \includegraphics[width=\textwidth]{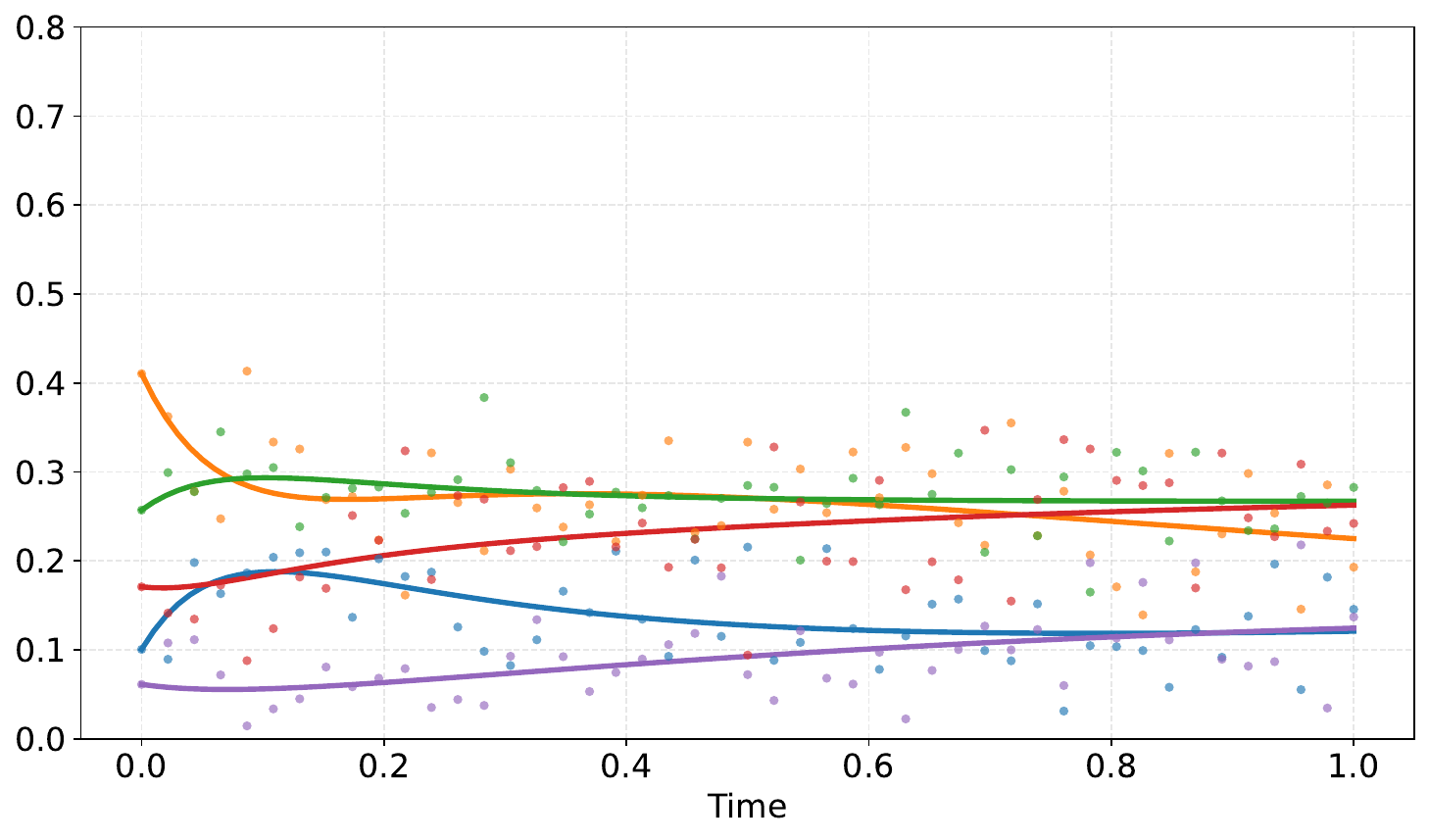}
        \end{minipage}
        \hfill
        \begin{minipage}[b]{0.23\textwidth}
            \centering
            \includegraphics[width=\textwidth]{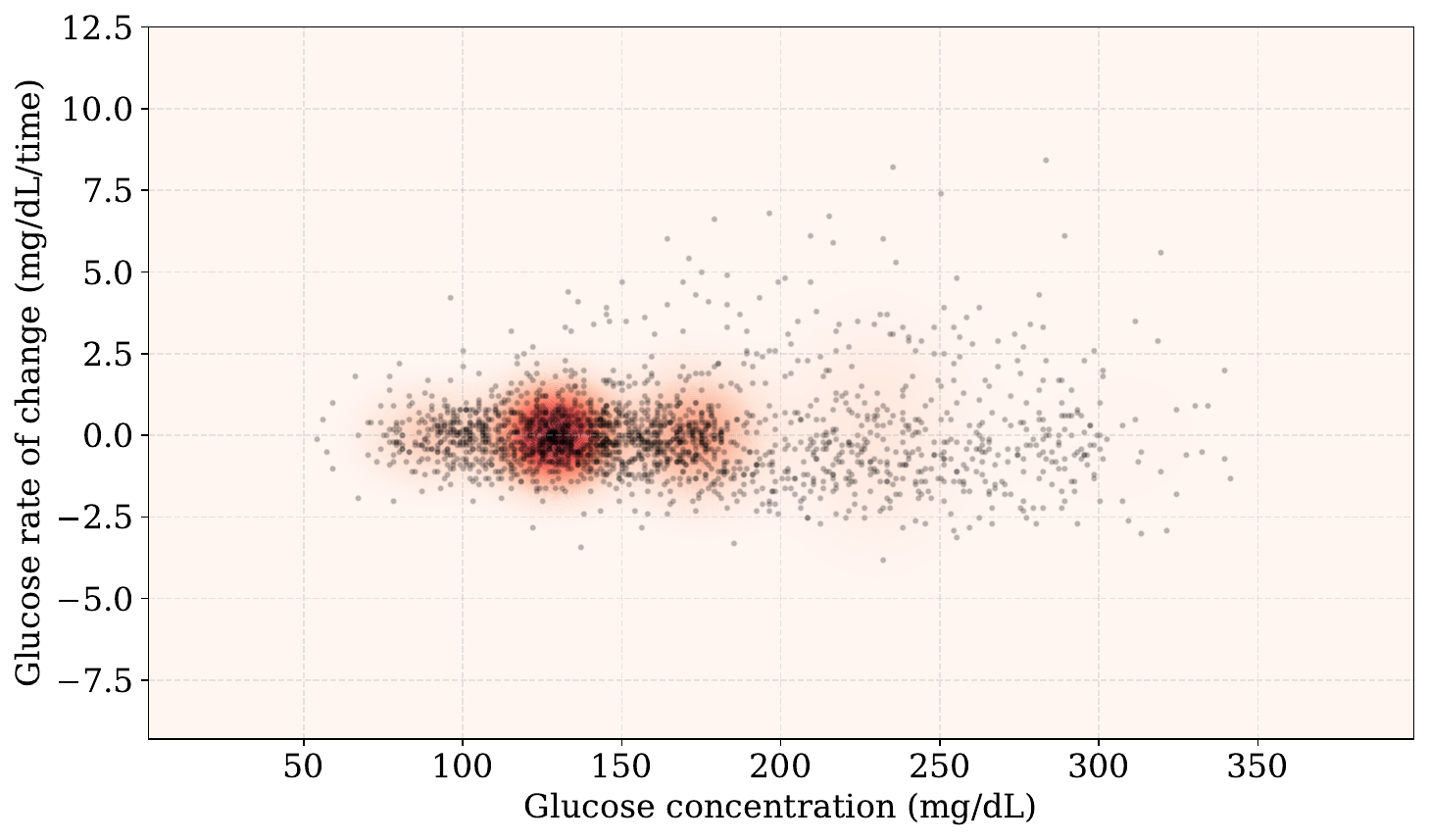}
        \end{minipage}
          \hfill
        \begin{minipage}[b]{0.23\textwidth}
            \centering
            \includegraphics[width=\textwidth]{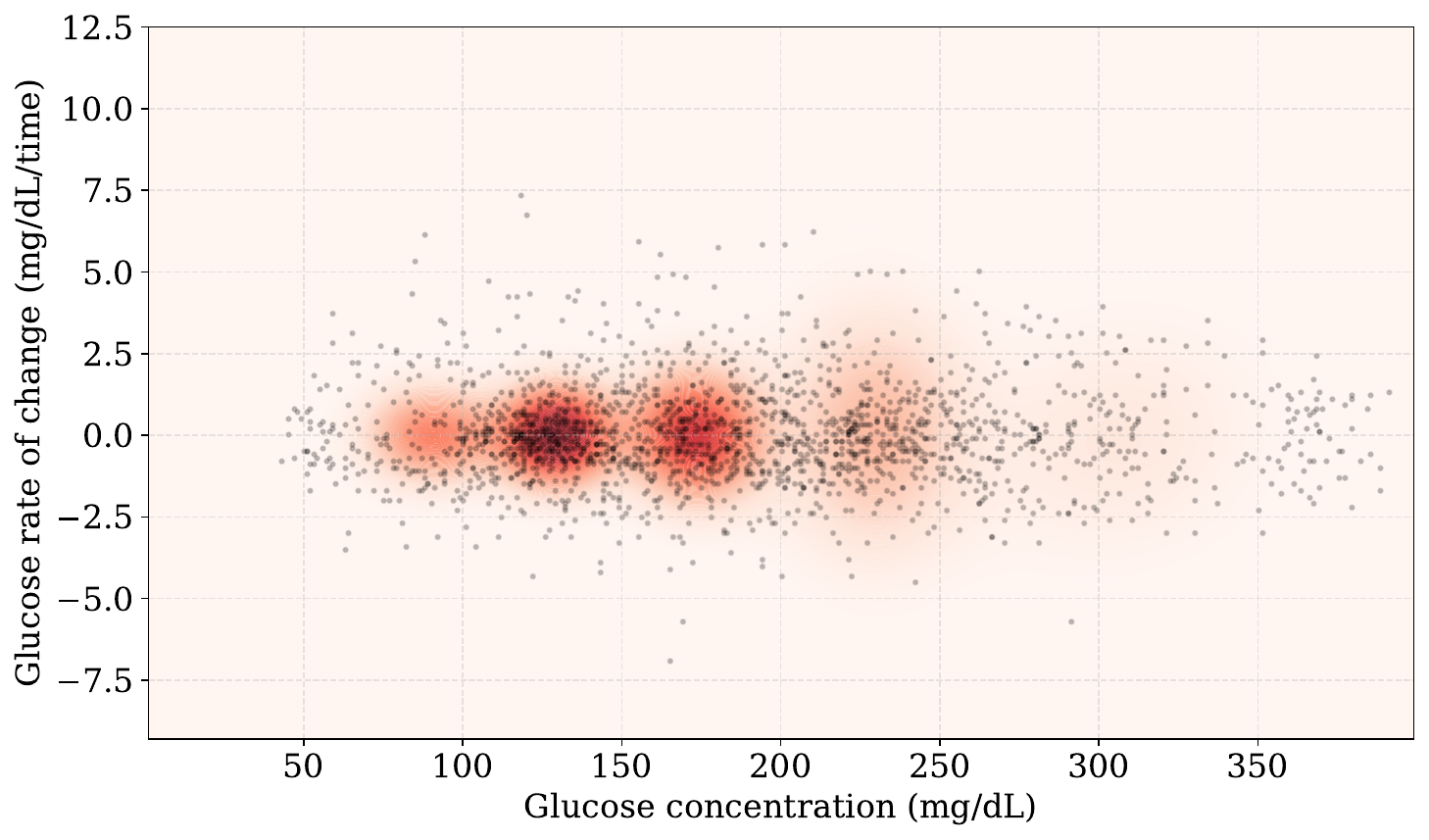}
        \end{minipage}
        \caption*{Participant 82 (Control)}
    \end{subfigure}
    
   \caption{Individual participant analysis for the bivariate model (glucose and its first derivative) using $K=5$ Gaussian components. Each row corresponds to one participant (IDs 20, 58, and 82). \textbf{Left:} raw CGM time series showing glucose concentration (mg/dL) over the observation period. \textbf{Middle left:} estimated weight trajectories $\alpha(t) = (\alpha_k(t))_{k=1}^5$ learned by the neural ODE, representing the evolution of the mixture proportions over normalized time $t \in [0,1]$. \textbf{Middle right and right:} contours of the fitted bivariate Gaussian mixture density at the initial and final times, summarizing the joint distribution of glucose level and its rate of change.}
    
    \label{fig:representative_individuals_d2k5}
\end{figure}

\section{Methodology}\label{sec-meth}
We introduce a Gaussian mixture framework for learning time-varying distributions from longitudinal data observed on a discrete time grid. 
The method represents distributional dynamics in continuous time, yields interpretable subject-specific trajectories, and supports distributional comparisons across groups (intervention arms) and downstream statistical inference. 
Although motivated by digital health applications, the framework is broadly applicable to settings in which an underlying distribution evolves continuously over time.

\subsection{Background}

Our proposed framework combines two mathematical ingredients: maximum mean discrepancy, which we use to fit the model at the observed time points, and neural ODEs, which model the continuous-time evolution of the underlying distribution.

\subsubsection*{Maximum Mean Discrepancy.}
To compare probability distributions $\mu$ and $\nu$ on a common measurable space $\mathcal{X}$, we use the \emph{maximum mean discrepancy} (MMD) \cite{gretton2012kernel,muandet2017kernel}. To do so, MMD represents each distribution as an element in a Hilbert space.

Let $\mathcal{H}$ be the reproducing kernel Hilbert space (RKHS) induced by a positive definite kernel $k \colon \mathcal{X}\times\mathcal{X} \to \mathbb{R}$. If $\mu$ is a probability distribution such that $\int k(x,x)\diff\mu(x)<\infty$ (e.g., if $k$ is bounded), then the \emph{kernel mean embedding} of $\mu$, denoted by $m_{\mu} \in \mathcal{H}$, is well defined:
\[  m_{\mu}(\cdot) \coloneqq \int k(\,\cdot\,,x)\diff \mu(x).
\]
Formally, MMD evaluates the distance between two distributions as the $\mathcal{H}$-norm of the difference between their embeddings:
\begin{equation}\label{eq:mmd_def}
\operatorname{MMD}^{2}(\mu,\nu)
  \coloneqq
  \lVert m_{\mu} - m_{\nu} \rVert_{\mathcal{H}}^{2}
  =
  \mathbb{E}\bigl[k(X,X')\bigr]
  +
  \mathbb{E}\bigl[k(Y,Y')\bigr]
  -
  2\,\mathbb{E}\bigl[k(X,Y)\bigr],
\end{equation}
where $X,X'\stackrel{\text{i.i.d.}}\sim \mu$ and $Y,Y'\stackrel{\text{i.i.d.}}\sim \nu$.

If the mapping $\mu \mapsto m_{\mu}$ is injective, equivalently if the kernel $k$ is \emph{characteristic}, then MMD defines a valid metric, satisfying $\operatorname{MMD}(\mu,\nu)=0$ if and only if $\mu=\nu$, see \cite{sriperumbudur2011universality,sejdinovic2013equivalence}.

Throughout the paper, we fix $\mathcal{X}=\mathbb{R}^d$ and use the Gaussian kernel:
\begin{equation}\label{eq:Gkernel}
k(x,y)\coloneqq \exp\left(-\frac{\|x-y\|_2^2}{2\sigma^2}\right),
\end{equation}
where $\sigma>0$ is a bandwidth parameter. Since the Gaussian kernel is bounded and characteristic, the kernel mean embedding is well-defined, and the associated MMD defines a metric on probability distributions. In practice, the choice of $\sigma$ strongly impacts the sensitivity of the metric to different scales. We set it using the median heuristic \cite{garreau2017large}: $\sigma$ is chosen as the median of the pairwise Euclidean distances between sample points.

\noindent\textbf{Intuition.} MMD measures how far apart two distributions are in the feature space induced by the kernel \(k\). In our setting, at each observed time point \(t_i\) in the grid \eqref{eq:time.grid}, we use MMD to compare the empirical distribution associated with the observations in \eqref{eq:obs.DH} to a fitted Gaussian mixture distribution.

\noindent\textbf{Why MMD?} (i) With a Gaussian kernel, the discrepancy between an empirical distribution and a Gaussian mixture admits closed-form terms, which leads to stable and computationally efficient updates. (ii) Characteristic kernels yield well-posed fitting objectives. (iii) Empirical evidence suggests that MMD-based procedures are more robust than likelihood-based methods under temporal dependence and model misspecification \cite{CheriefAbdellatif2022Finite,Alquier2024Universal,Gao2021Maximum,Alquier2023Estimation}.
 

\subsubsection*{Neural ODEs.} We replace discrete layers by the continuous evolution of a hidden state
\[
\dot z(t)=v_\phi\bigl(z(t),t\bigr),\qquad z(t_0)=z_0,
\]
where \(v_\phi:\mathbb R^q\times[t_0,t_1]\to\mathbb R^q\) is a learnable vector field, typically parameterized by a multilayer perceptron. The trajectory \(z(t)\) is computed numerically, and gradients with respect to \(\phi\) can be obtained using adjoint methods \cite{Massaroli2020Dissecting}.

Neural ODEs have been used to model latent trajectories in continuous time \cite{kidger2020neural,Rubanova2019Latent,Jia2019Neural}, including in biomedical applications \cite{qian2021integrating}. In our framework, the latent trajectory is used to parameterize the mixture weights over time. This provides a smooth continuous-time interpolation of the weights estimated on the discrete grid \eqref{eq:time.grid}, without imposing a rigid parametric form on their evolution.

\noindent\textbf{Why a neural ODE?} Our object of interest is the continuous-time distribution of CGM in free-living environments, where measurements are irregular and not directly aligned across participants. Modeling the weight trajectories $\alpha(t)$ through a neural ODE is advantageous for several reasons. (i) It naturally accommodates irregularly sampled data without requiring ad hoc grid alignment, while its continuous nature mitigates sensor noise. (ii) It provides a parameter-efficient framework (a single vector field encodes arbitrary depth) to generate smooth, continuous latent trajectories. (iii)  By contrast, discrete sequence models \cite{wang2024timemixer,wu2023timesnet} are effective in forecasting raw traces but do not directly target the continuous-time distributional dynamics central to our aims.

\subsection{Our model}
\label{sec:ourmodel}
We model the continuous-time density \(f_t\) as a Gaussian mixture:
\begin{equation}\label{eq:model}
f_{\theta(t)}(x)=\sum_{s=1}^K \alpha_s(t)\,\mathcal N(x\mid m_s,\Sigma_s),
\end{equation}
where the component means \(\{m_s\}_{s\in{[K]}}\) and covariance matrices \(\{\Sigma_s\}_{s\in{[K]}}\) are shared across time, while the mixing weights $\alpha_s(t)$ vary continuously with $t$ (within the simplex). This shared-dictionary structure makes the representation comparable across both time points and individuals.

This modeling choice is theoretically grounded in Wiener--Tauberian approximation arguments \cite{Wiener1932Tauberian}. As detailed in \cref{thm:uniform_shared_dictionary}, under mild regularity conditions, these shared-dictionary mixtures can uniformly approximate any continuous curve of densities in $L^1(\mathbb{R}^d)$. Specifically, for any $\varepsilon > 0$, choosing a sufficiently large $K$ ensures:
\[
\sup_{t\in[0,T]} \lVert f_t - f_{\theta(t)} \rVert_{L^1(\mathbb{R}^d)} < \varepsilon.
\]
In practice, we fix a moderate $K$ to preserve interpretability, selecting its value based on the specific application (cf.\ \cref{sec:CGM}).

Formally, our model is defined by the time-dependent parameter vector:
\[
\theta(t) \coloneqq \bigl(\alpha_1(t),\dots,\alpha_K(t),\,
           m_1,\dots,m_K,\,
           \Sigma_1,\dots,\Sigma_K\bigr) \in \Delta^{K-1} \times \mathbb{R}^{Kd} \times \mathscr{S}_d^{+}(\mathbb{R})^K,
\]
whose effective dimension at any fixed $t$ is $p = K\bigl(1 + d + \tfrac{d(d+1)}{2}\bigr) - 1$.

\subsubsection{Discrete-time MMD fitting}

Given the observation grid \(\tau_m=\{t_0,\ldots,t_m\}\) and the sample \(X_{t_i,1},\ldots,X_{t_i,N_i}\sim \mu_{t_i}\)---recall \eqref{eq:time.grid} and \eqref{eq:obs.DH}---, we define the empirical distribution by
\[
\mu_{t_i,N_i} \coloneqq \frac{1}{N_i}\sum_{j=1}^{N_i}\delta_{X_{t_i,j}},\qquad i=0,\dots,m.
\]
At each discrete time step \(t_i\), we fit a static Gaussian mixture 
\[
f_{\theta_i}(x) = \sum_{s=1}^{K} \alpha_{i,s}\,\mathcal N(x \mid m_s, \Sigma_s),
\qquad \alpha_i=(\alpha_{i,1},\dots,\alpha_{i,K})\in\Delta^{K-1},
\]
by minimizing \(\operatorname{MMD}^2(\mu_{t_i,N_i},\mu_{\theta_i})\), where \(\mu_{\theta_i}\) is the distribution with density \(f_{\theta_i}\).  We use a Gaussian kernel $k_i$ of the form \eqref{eq:Gkernel}, with bandwidth $\sigma_i$ selected by the median heuristic from the sample $\{X_{t_i,j}\}_{j=1}^{N_i}$.

By expanding the squared MMD, the objective function reduces to a convenient quadratic form in the mixing weights $\alpha_i$:
\begin{equation}\label{eq:obj}
   \mathrm{MMD}^2(\mu_{t_i,N_i},\mu_{\theta_i})
 = \alpha_i^\top I_i \, \alpha_i - 2\,\alpha_i^\top J_i + C_i,  
\end{equation}
where $C_i=\frac{1}{N_i^2}\sum_{j=1}^{N_i}\sum_{\ell=1}^{N_i} k_i(X_{t_i,j}, X_{t_i,\ell})$ depends solely on the empirical data and can therefore be omitted from the minimization, while the matrix $I_i \in \mathscr{M}_{K\times K}(\mathbb{R})$ and the vector $J_i \in \mathbb{R}^K$ admit closed-form expressions for their entries:
\begin{align*}
(I_i)_{s,r} &=\frac{(\sigma_i^2)^{d/2}}{\sqrt{\det(\Sigma_s + \Sigma_r+\sigma_i^2 \mathsf{Id})}}
\exp\Bigl(-\frac{1}{2} (m_s-m_r)^\top (\Sigma_s+\Sigma_r+\sigma_i^2 \mathsf{Id})^{-1} (m_s-m_r)\Bigr),\\
(J_i)_s &= \frac{1}{N_i}\sum_{j=1}^{N_i} \frac{(\sigma_i^2)^{d/2}}{\sqrt{\det(\Sigma_s+\sigma_i^2 \mathsf{Id})}}
\exp\Bigl(-\frac{1}{2} (X_{t_i,j}-m_s)^\top (\Sigma_s+\sigma_i^2 \mathsf{Id})^{-1}(X_{t_i,j}-m_s)\Bigr). 
\end{align*}

\paragraph*{Optimization.}
We minimize \eqref{eq:obj} using the following alternating scheme:

\noindent\textbf{1. Initialization.} We run $K$-means clustering \cite{jain2010data}
on $\bigcup_{i,j} X_{t_i,j}$ to initialize the parameters:
the means $\{m_s\}_{s=1}^K$ are the centroids of the identified clusters;  $\{\Sigma_s\}_{s=1}^K$ are the empirical covariance matrices of the points within each cluster; and the initial weights $\{\alpha_{i,s}\}_{s=1}^K$ correspond to the proportion of data points at time $t_i$ assigned to cluster $s$.

\noindent\textbf{2. Local update.} For each $t_i$, keeping means and covariances fixed, we update the weights $\alpha_i \in \Delta^{K-1}$ via the quadratic program:
\begin{equation}\label{eq:local_qp}
    \alpha_{i}
    =
    \underset{\alpha \in \Delta^{K-1}}{\operatorname{argmin}}
    \left\{
    \alpha^\top I_i \, \alpha - 2\, \alpha^\top J_i +  \sum_{s=1}^{K} \lambda _s\,\alpha_s^2
    \right\},
\end{equation}
where $\lambda=(\lambda_s)_{s=1}^K\in\mathbb{R}^K$ is a vector of ridge hyperparameters that improves  numerical conditioning and stabilizes the solution when Gaussian components become nearly collinear in the RKHS feature space.

\noindent\textbf{3. Global update.} Update $m_s$ and $\Sigma_s$ iteratively via (Adam) gradient descent on the MMD objective, keeping the current weights $\alpha_i$ fixed.





\subsubsection{Continuous-time weight evolution}

Once the discrete-time weights $\{\alpha_i\}_{i=0}^{m}$ are fitted, we use a continuous-time model for their evolution. On $\R^K$, we solve:
\begin{equation}\label{eq:node_latent}
   \begin{cases}
    \dot z(t) &= v_{\phi}\bigl(z(t),t\bigr),\qquad t\in[0,T],\\
    z(0)&=\alpha_0,
   \end{cases}
\end{equation}
where $v_\phi:\mathbb{R}^K\times[0,T]\to\mathbb{R}^K$ is a multilayer perceptron (architecture and solver hyperparameters are reported in \Cref{tab:all-hyperparams}), and then map $z(t)$ to valid mixture weights by a simplex normalization operator. Define
\begin{equation}\label{eq:alpha_from_z}
\alpha(t)\;\coloneqq\;\frac{z(t)_+}{\langle \mathbf{1},z(t)_+\rangle}\in\Delta^{K-1},
\end{equation}
(with the convention that if $\langle \mathbf{1},z(t)_+\rangle=0$, we replace $z(t)_+$ by $z(t)_+ + \varepsilon \mathbf{1}$ for a small $\varepsilon>0$). 

The parameters $\phi$ are optimized by matching the ODE predictions to the fitted weights $\alpha_i$:
\begin{equation}\label{eq:ode_loss}
\mathcal{L}(\phi)
=
\sum_{i=0}^{m}\|\alpha(t_i;\phi) - \alpha_i\|^2 + \nu\|\phi\|^2,
\end{equation}
where $\alpha(t_i;\phi)$ is obtained by integrating \eqref{eq:node_latent} and applying \eqref{eq:alpha_from_z} at time $t_i$, and $\nu\geq0$ is a ridge hyperparameter.

\paragraph*{Permutation symmetry.}
Because $(m_s,\Sigma_s)$ are shared over time and kept fixed after the global fit, component labels are anchored. The neural ODE stage only evolves $\alpha_s(t)$, so trajectories cannot exchange labels, removing permutation ambiguity.

\begin{remark}\label{rem:softmax_init}
As an alternative to the simplex normalization based on the positive part (see \eqref{eq:alpha_from_z}), one may evolve logits $u(t)\in\R^K$ and set
\[
\dot u(t)=v_\phi(u(t),t),\qquad \alpha(t)=\operatorname{softmax}(u(t))\in\Delta^{K-1}.
\]
If $\alpha(0)=\alpha_0\in\Delta^{K-1}$ has strictly positive components, then choosing
\[
u(0)=\log(\alpha_0)+c\,\mathbf{1}\qquad (c\in\R\ \text{arbitrary})
\]
ensures $\alpha(0)=\alpha_0$, since $\operatorname{softmax}$ is invariant under shifts by $c\,\mathbf{1}$.
If some components of $\alpha_0$ are zero, one may initialize with
$\alpha_0^{(\varepsilon)}\propto \alpha_0+\varepsilon\mathbf{1}$ for a small $\varepsilon>0$ and set
$u(0)=\log(\alpha_0^{(\varepsilon)})$.
\end{remark}

\section{Simulation study}

We benchmark finite-sample performance against representative baselines. Unlike competing approaches, our method prioritizes interpretability through the time-varying weights $\alpha_s(\cdot)$ for $s=1,\dots,K$. The results indicate that this emphasis on interpretability does not come at the expense of accuracy: the proposed method remains competitive in statistical error and, in several multivariate settings, outperforms the alternatives. Additional details are provided in the Supplementary Material (\Cref{sec:simulations_DH2}).



\section{Case Study: CGM Trial}
\label{sec:CGM}

From a distributional data analysis perspective, the objective of this case study is to show that the proposed methodology can leverage the thousands of glucose measurements recorded by continuous glucose monitoring (CGM) more effectively than conventional scalar summaries. Standard CGM metrics are naturally embedded in the glucodensity framework \cite{Matabuena2021Glucodensities}; however, our objective here is to show that multivariate functional representations can also reveal clinically significant aspects of glucose regulation that are not fully captured by standard summaries alone.

From a modeling perspective, our goal is to illustrate the interpretability of the proposed framework to characterize longitudinal differences in glycemic profiles between the two study arms. In particular, we focus on identifying distributional differences between treatment and control over time and on assessing whether incorporating glucose dynamics through rates of change improves the characterization of these differences between groups over the course of follow-up.

\subsection*{Preliminaries and scientific questions}

As described in \Cref{sec:case}, our analysis is motivated by data from the randomized clinical trial published in the \emph{New England Journal of Medicine} entitled ``Trial of Hybrid Closed-Loop Control in Young Children with Type 1 Diabetes'' \cite{wadwa2023trial}.%
\footnote{Data are publicly available at \url{https://public.jaeb.org/datasets/diabetes}.}
This study evaluated hybrid closed-loop control in children under 6 years of age and represents an important clinical setting in which to assess longitudinal changes in glucose regulation.

A total of $102$ participants \cite{wadwa2023trial} with type~1 diabetes mellitus were randomized in a 2:1 ratio to a closed-loop \emph{treatment} arm or to a \emph{control} arm receiving standard diabetes care. The clinical background and additional details of the study were presented in \Cref{sec:case}.

To illustrate the continuous-time model introduced in \Cref{sec-meth}, we consider the case $d=2$, in which glucose is treated as the first coordinate and the rate of glucose change as the second coordinate. Our analysis addresses the following questions:
\begin{enumerate}
    \item Are there statistically significant differences between the treatment and control groups in their glucodensity representations from baseline to the end of follow-up?
    \item How do these differences evolve over time, including at intermediate time points, and do they reveal temporal response patterns that are not captured by endpoint summaries alone?
    \item Do the two groups differ not only in the distribution of glucose values, but also in glucose dynamics, as reflected by the rate of glucose change? More broadly, does incorporating rate-of-change information improve the detection or characterization of differences between groups?
\end{enumerate}

\subsection*{Modeling the bivariate distribution of glucose trajectories}

Let $G_i(u)$ denote the CGM measurement of participant $i$ at time $u\in[0,\tau_i]$, and let $V_i(u)$ denote its rate of change. In practice, CGM is observed on a discrete time grid, and our analysis is carried out over longitudinal windows indexed by $t$ (for example, weekly or 10-day intervals). For participant $i$, let $(t-1,t]$ denote a generic analysis window and let $n_{it}$ be the number of CGM measurements recorded in that window.

For each participant $i$ and window $(t-1,t]$, we consider the bivariate sample
\[
(G_{it\ell},V_{it\ell}), \qquad \ell=1,\dots,n_{it},
\]
where $G_{it\ell}$ is the glucose measurement at the $\ell$th observation in the window and
\[
V_{it\ell}
=
\frac{G_{it\ell}-G_{it,\ell-1}}{\Delta u}
\]
is the corresponding finite-difference rate of change, with $\Delta u$ denoting the CGM sampling interval. Our target is the joint distribution
\[
F_{it}(g,v)
=
\mathbb{P}\bigl(G_{it}\le g,\;V_{it}\le v\bigr),
\]
together with its associated density $f_{it}(g,v)$.

To obtain a representation that is both computationally tractable and directly comparable across participants, we approximate $f_{it}$ using a dynamic Gaussian mixture model,
\[
f_{it}(g,v)
=
\sum_{s=1}^{K}\alpha_{its}\,
\mathcal{N}\bigl((g,v)\mid m_s,\Sigma_s\bigr),
\]
where $\mathcal{N}(\cdot\mid m_s,\Sigma_s)$ denotes the bivariate Gaussian density with mean $m_s\in\mathbb{R}^2$ and positive-definite covariance matrix $\Sigma_s\in\mathscr{S}_2^{+}$. The component-specific parameters $\{(m_s,\Sigma_s)\}_{s=1}^K$ are shared across participants and time windows, whereas the mixing weights $\alpha_{its}$ are allowed to vary across participants and over time, subject to
\[
\alpha_{its}\ge 0,
\qquad
\sum_{s=1}^{K}\alpha_{its}=1.
\]

For the bivariate analysis, we fix $K=5$ to balance flexibility and interpretability. This choice is rich enough to capture heterogeneous joint patterns in glucose level and short-term glucose dynamics while preserving a common reference structure across participants. Consequently, the global component parameters $\{(m_s,\Sigma_s)\}_{s=1}^{5}$ are estimated once from a common reference sample
, whereas only the weights $\alpha_{its}$ vary across participants and over time.

From a clinical point of view, the five components may be interpreted as a dictionary of glycemic regimes, defined by their estimated locations and covariance structure in the $(\text{glucose},\text{rate-of-change})$ space. These regimes can be grouped into broader profiles ranging from more favorable to less favorable glucose control, thereby providing a parsimonious clinical summary while preserving the finer resolution of the $K=5$ representation. \Cref{tab:global-stats} reports the global means and covariance matrices of the five Gaussian components.

\subsection*{Overview of findings}

Throughout follow-up, the treatment arm exhibits a gradual redistribution of mixture weight toward more favorable glycemic regimes, whereas the control arm remains comparatively stable or shifts toward less favorable profiles. These differences between groups are modest at early follow-up, but become more pronounced toward the end of the intervention. The bivariate representation further shows that the effect of treatment is not limited to glucose levels alone, but also involves changes along the rate-of-change dimension, suggesting reduced short-term glucose fluctuations. Finally, quantile-based summaries indicate that these changes are heterogeneous between participants and are primarily driven by a subset of individuals rather than by a uniform shift throughout the cohort.

\subsection*{Temporal evolution of mixture weights between groups}

To assess whether differences between arms emerge gradually during follow-up, \Cref{fig:trajectories_comparison} displays the estimated trajectories of the component weights in the treatment and control arms for the five components of the bivariate model. In general, the trajectories are smooth and fairly stable, indicating that the underlying glucose distribution evolves gradually rather than abruptly. This temporal regularity suggests that treatment-related differences are unlikely to be fully captured by baseline-versus-endpoint comparisons alone and motivates the use of a continuous longitudinal representation.

Although the group-average trajectories remain relatively close for some components, several systematic differences emerge. In particular, component~1 tends to carry a larger average weight in the treatment arm, whereas components~2 and~5 tend to be somewhat more prominent in the control arm. By contrast, components~3 and~4 show weaker separation at the level of group means. Since higher weights indicate that a participant spends more time in the glycemic regime represented by the corresponding component, these patterns suggest a gradual redistribution of time spent in clinically distinct glucose profiles over the course of follow-up.

From a clinical perspective, the most relevant descriptive pattern is the increasing weight of the component associated with more favorable glucose control in the treatment arm, together with the relative persistence of less favorable components in the control arm. Thus, even at the descriptive level, the dynamic mixture representation suggests that the intervention is associated with a progressive shift toward improved glycemic regulation.

\subsection*{Redistribution in the bivariate glucose space}

To examine how the joint glucose distribution changes over time, \Cref{fig:density_comparison} summarizes the evolution of the bivariate densities in the $(\text{glucose},\text{rate-of-change})$ space over six-week intervals, for example, between weeks 20 and 26 at the group level. In the control arm, the initial and final densities remain relatively similar, and the corresponding difference surface is spatially heterogeneous, with alternating positive and negative regions and no clear dominant direction of change. In addition, the baseline density in the control arm appears to assign relatively more mass to hypoglycemic regions and to higher glucose concentrations than in the treatment arm.

By contrast, the treatment arm displays a more structured mass redistribution in the bivariate space, characterized by a marked positive band in the mid-to-high glucose range and compensatory negative regions elsewhere. Taken together, these patterns suggest that the intervention is associated with a more systematic modification of the joint glucose distribution than is observed in the control arm. Importantly, the observed redistribution is not confined to the glucose axis alone: changes along the rate-of-change axis indicate that the intervention also affects short-term glucose dynamics, with a pattern consistent with reduced glucose fluctuations over time.

This figure directly addresses the third scientific question. Compared to a univariate glucodensity analysis, the bivariate representation reveals how treatment-related changes are jointly organized in glucose level and glucose dynamics, providing a richer characterization of the evolving metabolic profile.

\subsection*{Temporal inference for between-group differences}

To formally assess whether mixture-weight trajectories differ between groups, we apply the exploratory wild bootstrap MMD procedure described in the Supplementary Material (\Cref{sec:inference}). For each component-specific trajectory $\widehat{\alpha}_{s}(t)$ (with $t\in[0,T]$ and $s\in[5]$), the test evaluates whether the treatment and control groups differ at the distributional level over time, while adjusting for the sequential structure and serial dependence of the longitudinal CGM data.

The resulting time-varying $p$-values are reported in \Cref{fig:mmd_wildbootstrap_combined}. The strongest evidence of between-group differences is observed for components~3 and~4, whose $p$-values  remain below the $0.05$ threshold for most of the follow-up period. Component~1 approaches the threshold during the middle of follow-up and becomes clearly significant near the end, whereas components~2 and~5 remain non-significant throughout.

These findings show that the treatment effect is not uniformly distributed across the component representation but is instead concentrated in a subset of glycemic regimes. More generally, the inferential evidence strengthens towards the end of the intervention, consistent with the gradual separation seen in the estimated weight trajectories. This temporal pattern suggests that the effect of the closed-loop intervention accumulates over time rather than appearing immediately after the initiation of treatment.

\subsection*{Heterogeneity of response}

To characterize the heterogeneity of the response to treatment, \Cref{fig:quantiles_comparison} reports, for each component $s\in[5]$, the empirical quantile curves of the centered weight trajectories
\[
Z_{is}(t)=\hat{\alpha}_{is}(t)-\hat{\alpha}_{is}(0),
\qquad t\in[0,T],\quad i\in[n],
\]
in the two arms. For each fixed $t\in [0,T]$, these curves describe the cross-sectional distribution of deviations from baseline within each treatment arm. In most components, the pointwise median trajectory remains close to zero during follow-up, indicating that the typical participant experiences only a modest change relative to baseline. However, the interquartile envelopes and the outer quantile bands widen with time, showing that the dispersion of $Z_{is}(t)$ is substantial and that the temporal redistribution of the mixture weights is driven primarily by a subset of individuals.

The clearest separation between groups is observed for components~1 and~2. In component~1, which is associated with a more favorable glucose-control region, the treatment arm remains more tightly concentrated around zero, while its upper quantiles become positive at later follow-up times. This indicates that a subgroup of treated participants gradually shifts weight toward this favorable component. By contrast, the control arm exhibits a more pronounced negative displacement of its central quantiles, indicating a reduction relative to baseline. For component~2, the control arm shows a stronger positive shift in both the central and upper quantiles, suggesting increasing weight in a less favorable glycemic profile over time. For the remaining components, the pointwise quantile curves are more similar across groups, although their spread still indicates appreciable subject-specific variability.

This heterogeneity analysis complements the MMD results. Although the MMD test detects global distributional differences over time and is most sensitive for components~3 and~4, the quantile summaries reveal that the most clinically interpretable subgroup-level separation occurs in components~1 and~2. The two analyses therefore highlight different aspects of the treatment effect: one at the level of global distributional inference and the other at the level of subject-specific response heterogeneity.

In general, these quantile summaries indicate that the treatment effect is not well described by a homogeneous location shift that acts uniformly between participants. Rather, the effect is heterogeneous, with the main temporal redistribution concentrated in specific subpopulations and in a limited subset of mixture components. In particular, the treatment arm shows evidence of an increase in weight in the component associated with the normoglycemic range, whereas the control arm tends to shift toward a less favorable profile. This supports the view that responses to the intervention are individualized, while still revealing an overall trend toward improved glucose regulation under treatment.

\subsection*{Summary of the results}

Together, the bivariate glucodensity analysis yields four main conclusions. First, the treatment and control arms differ not only in their endpoint distributions but also in the way their glycemic profiles evolve over time. Second, these differences become more apparent toward the end of follow-up, indicating a progressive treatment effect rather than an immediate separation after the initiation of treatment. Third, incorporating rate-of-change information reveals treatment-related changes in glucose dynamics that would not be visible from a purely marginal analysis of glucose values alone. Fourth, the effect of treatment is heterogeneous between participants, with the clearest improvements concentrated in a subset of children who move toward mixture components associated with more favorable glycemic regulation. Our results complement and extend those of \cite{wadwa2023trial} by highlighting the importance of glucose rate-of-change information to detect differences between treatment arms in children younger than six years of age. The role of the glucose rate-of-change \cite{richardson2025normal} remains relatively underexplored in clinical trials, particularly in the pediatric population considered here. More broadly, these findings provide a more holistic distributional view of the data and extend the conclusions that can be drawn from conventional automated summary measures based on glucodensity representations.

\begin{figure}[h!]
    \centering
    
    \begin{subfigure}[b]{0.32\textwidth}
        \centering
        \includegraphics[width=\textwidth]{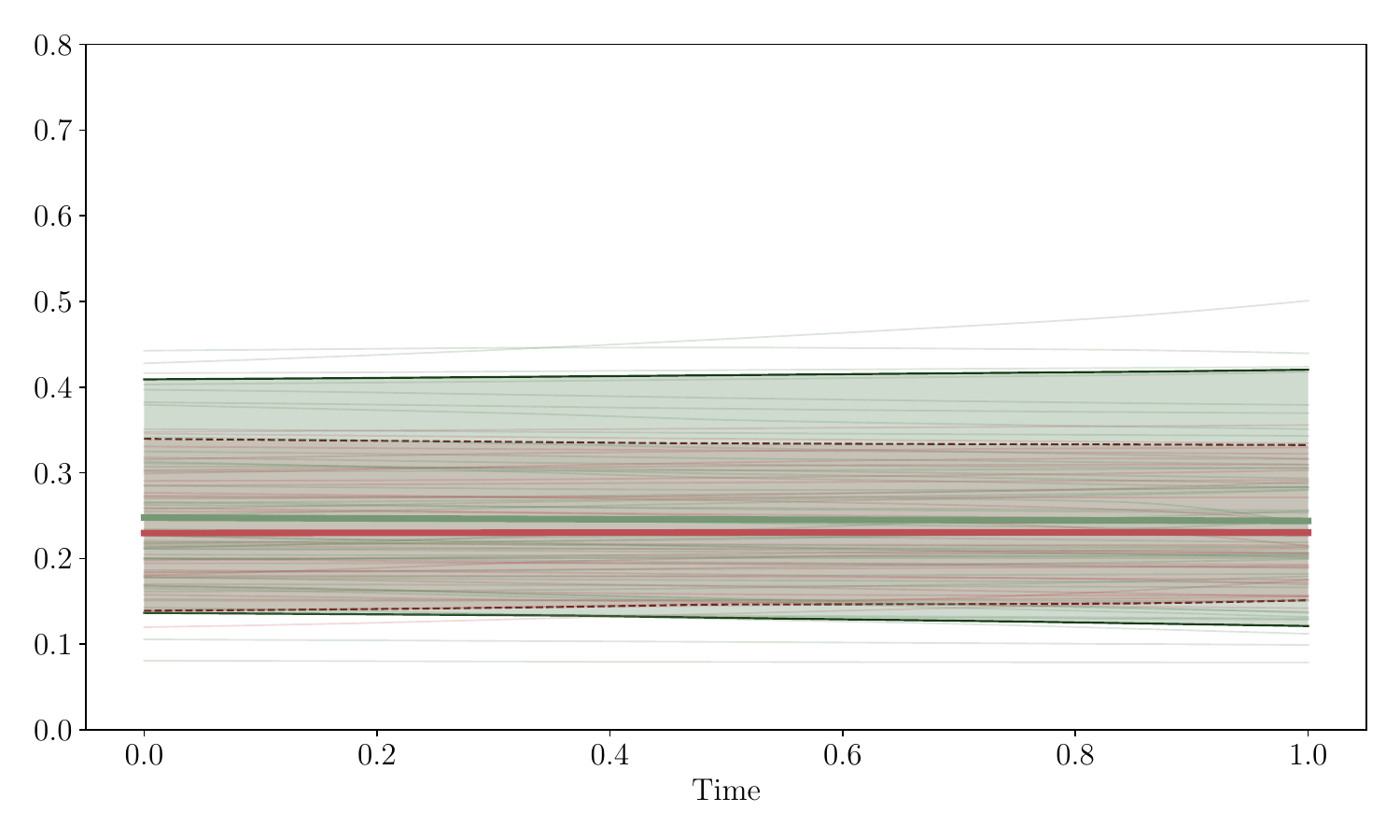}
        \caption{Component 1}
        \label{fig:traj_comp1}
    \end{subfigure}
    \hfill
    \begin{subfigure}[b]{0.32\textwidth}
        \centering
        \includegraphics[width=\textwidth]{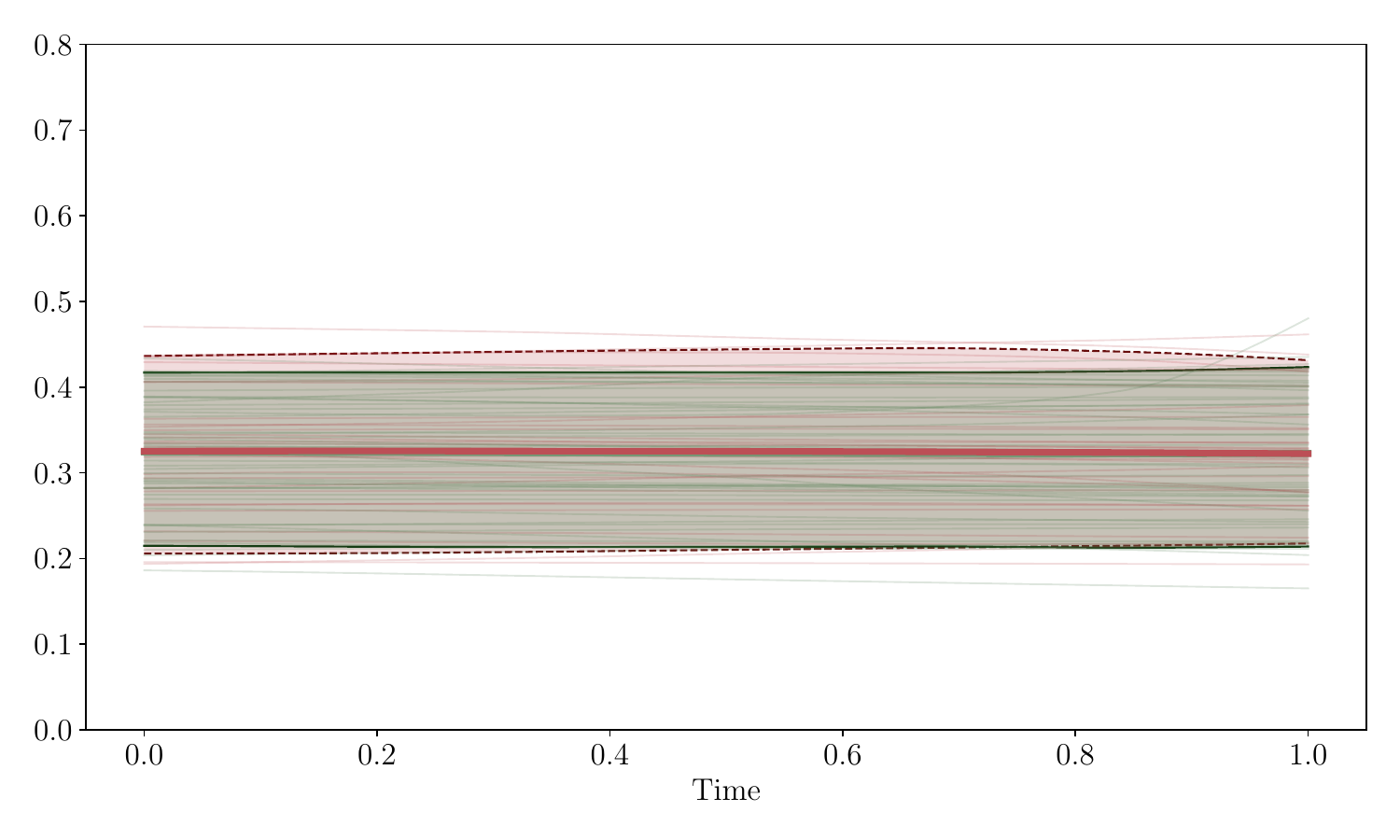}
        \caption{Component 2}
        \label{fig:traj_comp2}
    \end{subfigure}
    \hfill
    \begin{subfigure}[b]{0.32\textwidth}
        \centering
        \includegraphics[width=\textwidth]{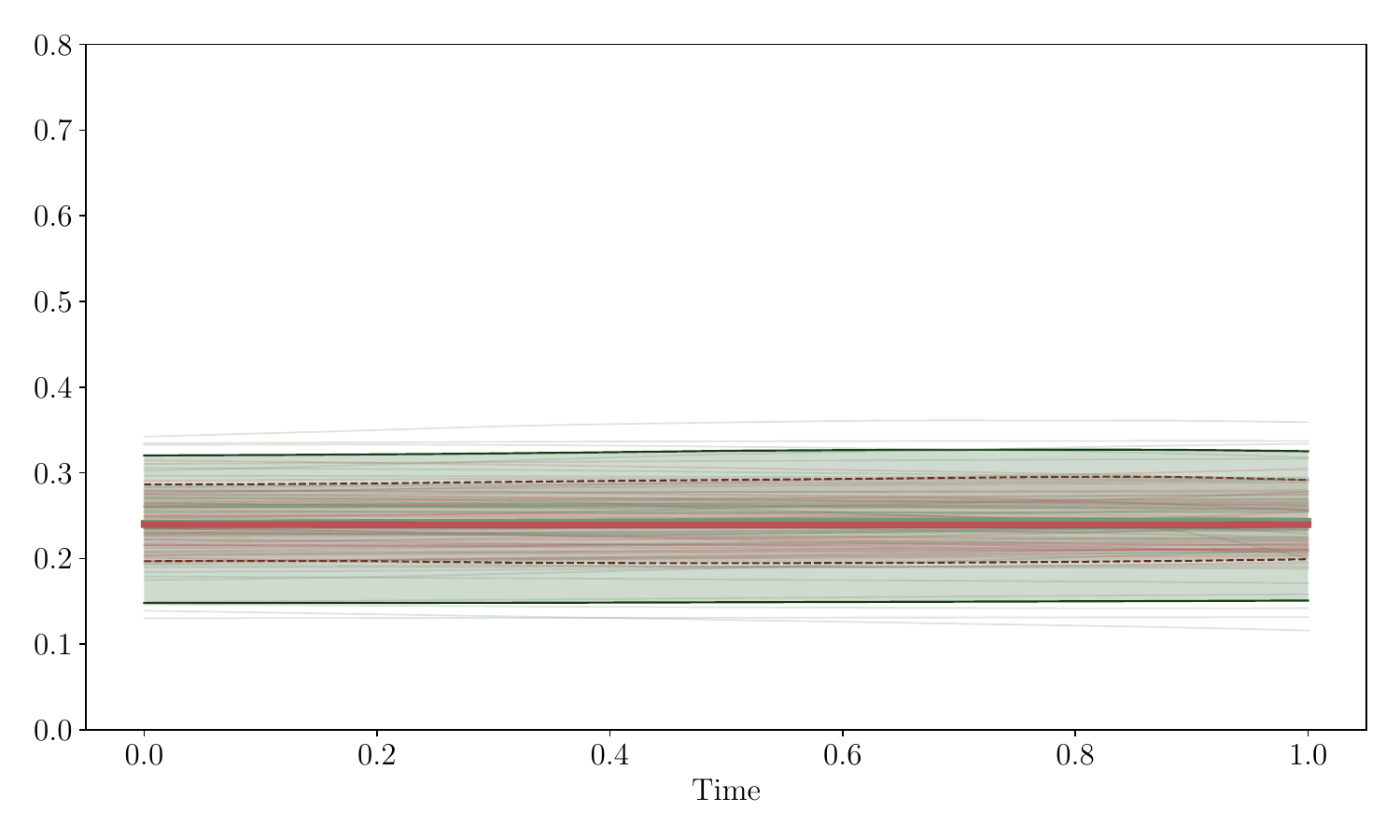}
        \caption{Component 3}
        \label{fig:traj_comp3}
    \end{subfigure}
    
    \vspace{1.5em}
    
    \begin{subfigure}[b]{0.32\textwidth}
        \centering
        \includegraphics[width=\textwidth]{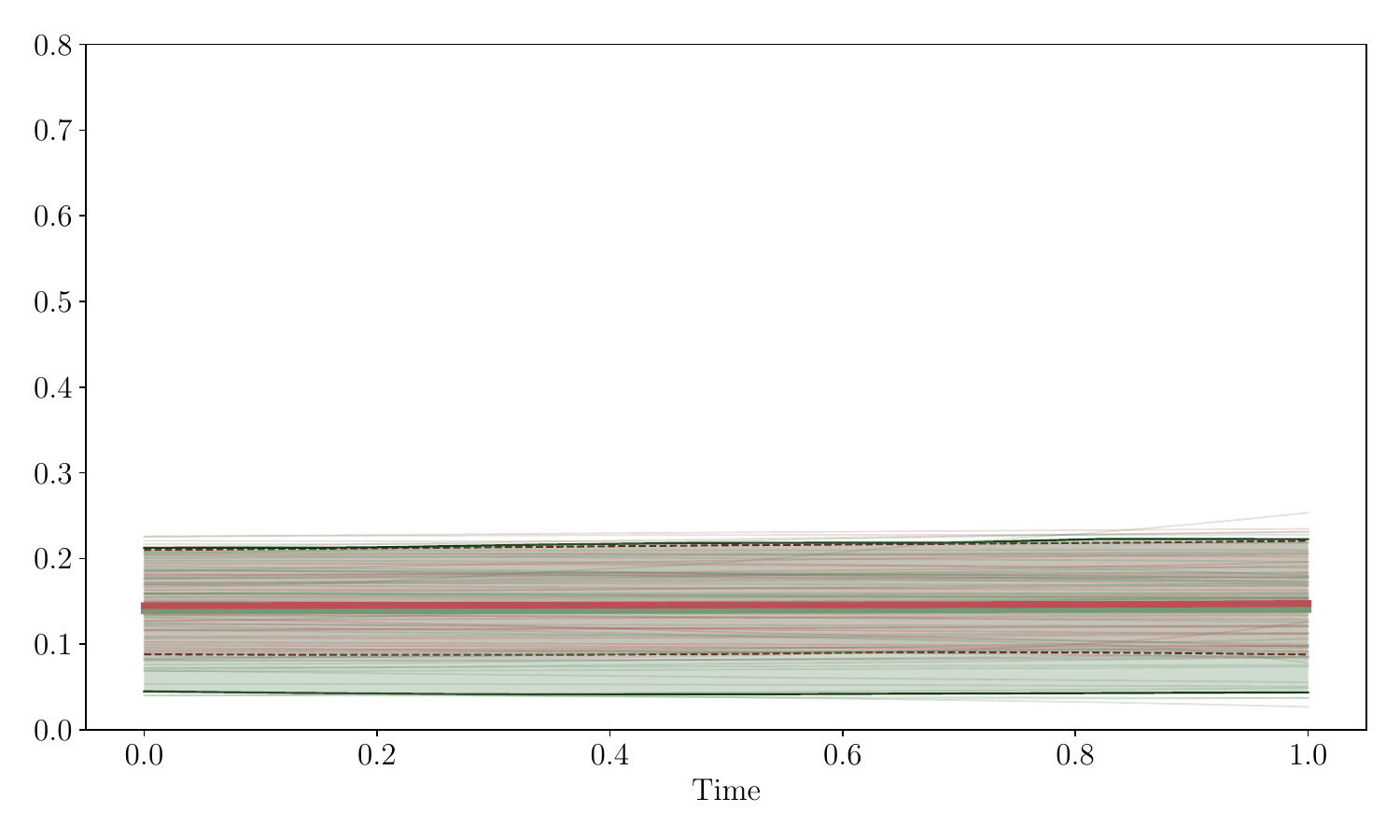}
        \caption{Component 4}
        \label{fig:traj_comp4}
    \end{subfigure}
    \hspace{0.05\textwidth}
    \begin{subfigure}[b]{0.32\textwidth}
        \centering
        \includegraphics[width=\textwidth]{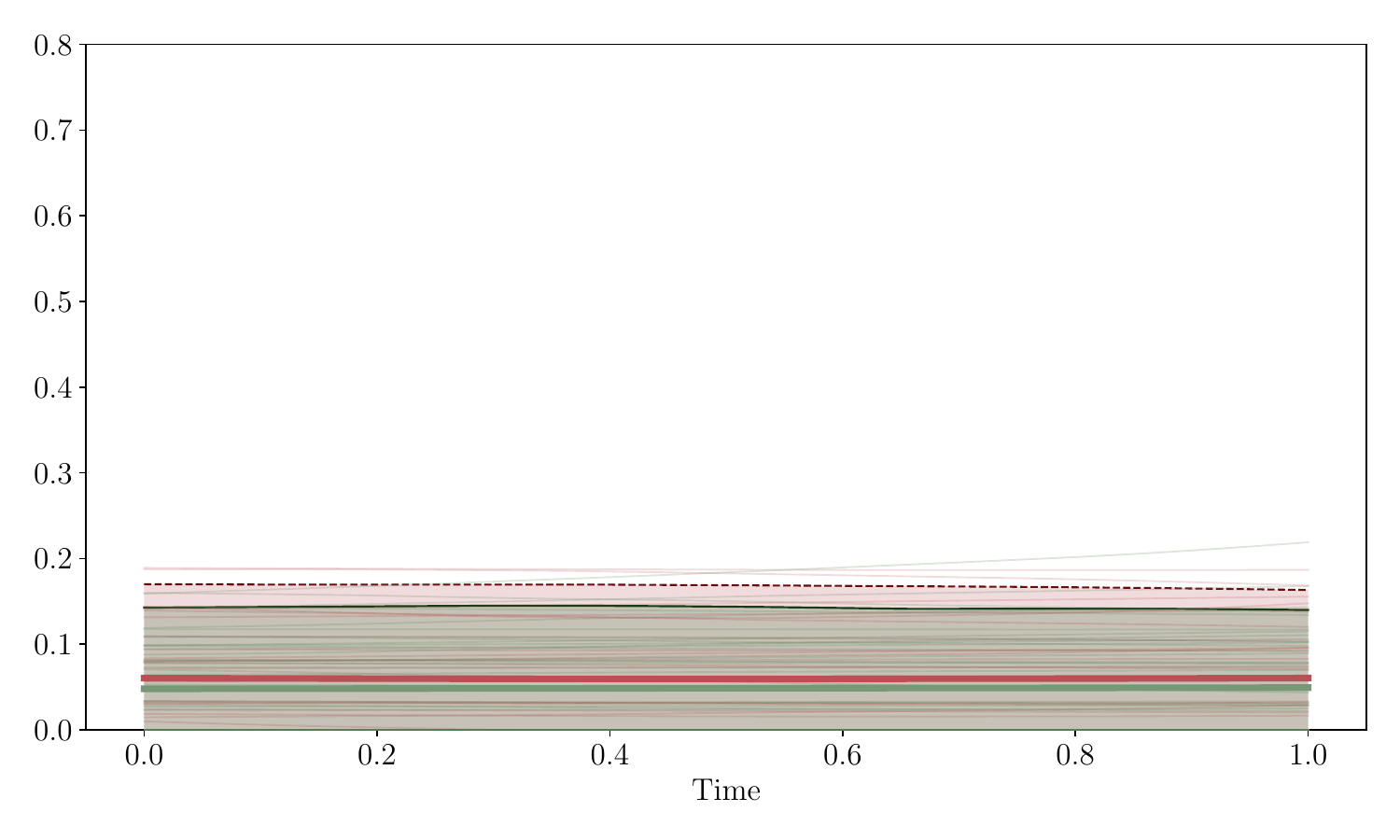}
        \caption{Component 5}
        \label{fig:traj_comp5}
    \end{subfigure}

    \caption{Comparison of weight trajectory dynamics between Treatment ({\color{ForestGreen}green}) and Control ({\color{red}red}) groups for the $d=2$ bivariate model with $K=5$ mixture components. Each panel shows the evolution of component weights $\alpha_s(t)$ between weeks 20--26 over normalized time $t \in [0,1]$. Group means are shown as thick dashed lines. The shaded bands represent a statistical envelope around the mean (e.g., between the 5th and 95th percentiles). }
    \label{fig:trajectories_comparison}
\end{figure}
\begin{figure}[h!]
    \centering
    
    \begin{subfigure}[b]{0.32\textwidth}
        \centering
        \includegraphics[width=\textwidth]{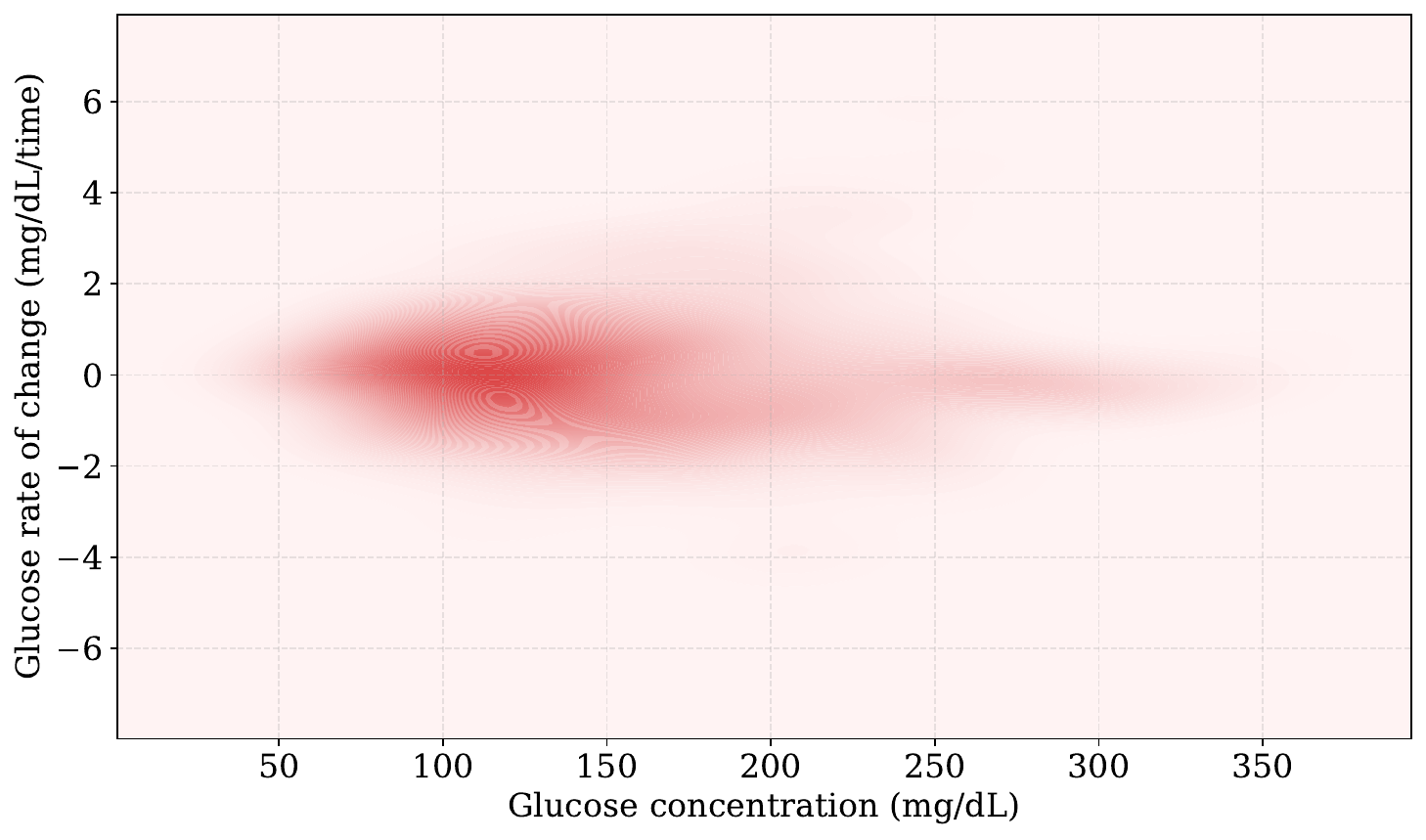}
        \caption{Control: Initial}
        \label{fig:dens_init_ctrl}
    \end{subfigure}
    \hfill
    \begin{subfigure}[b]{0.32\textwidth}
        \centering
        \includegraphics[width=\textwidth]{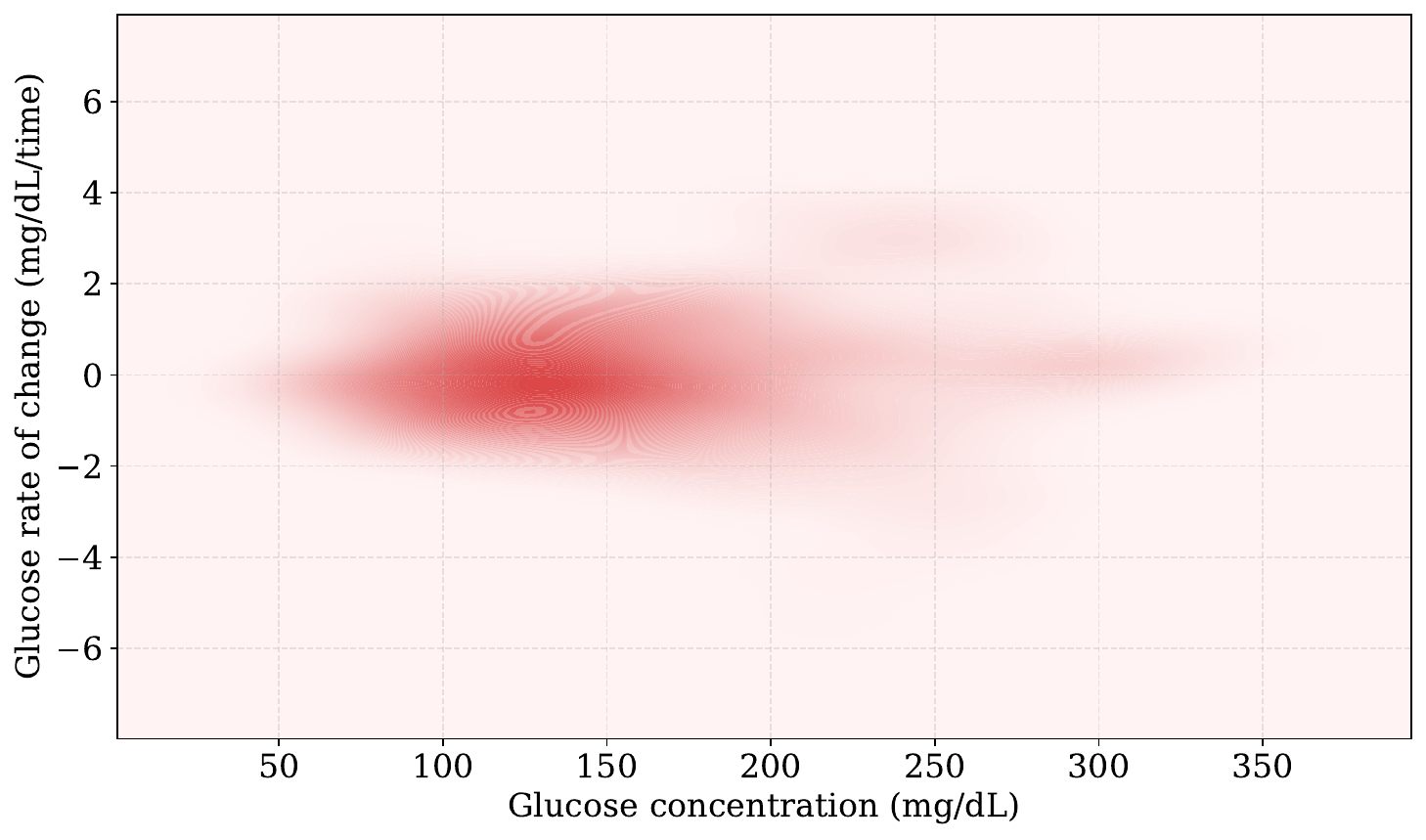}
        \caption{Control: Final}
        \label{fig:dens_fin_ctrl}
    \end{subfigure}
    \hfill
    \begin{subfigure}[b]{0.32\textwidth}
        \centering
        \includegraphics[width=\textwidth]{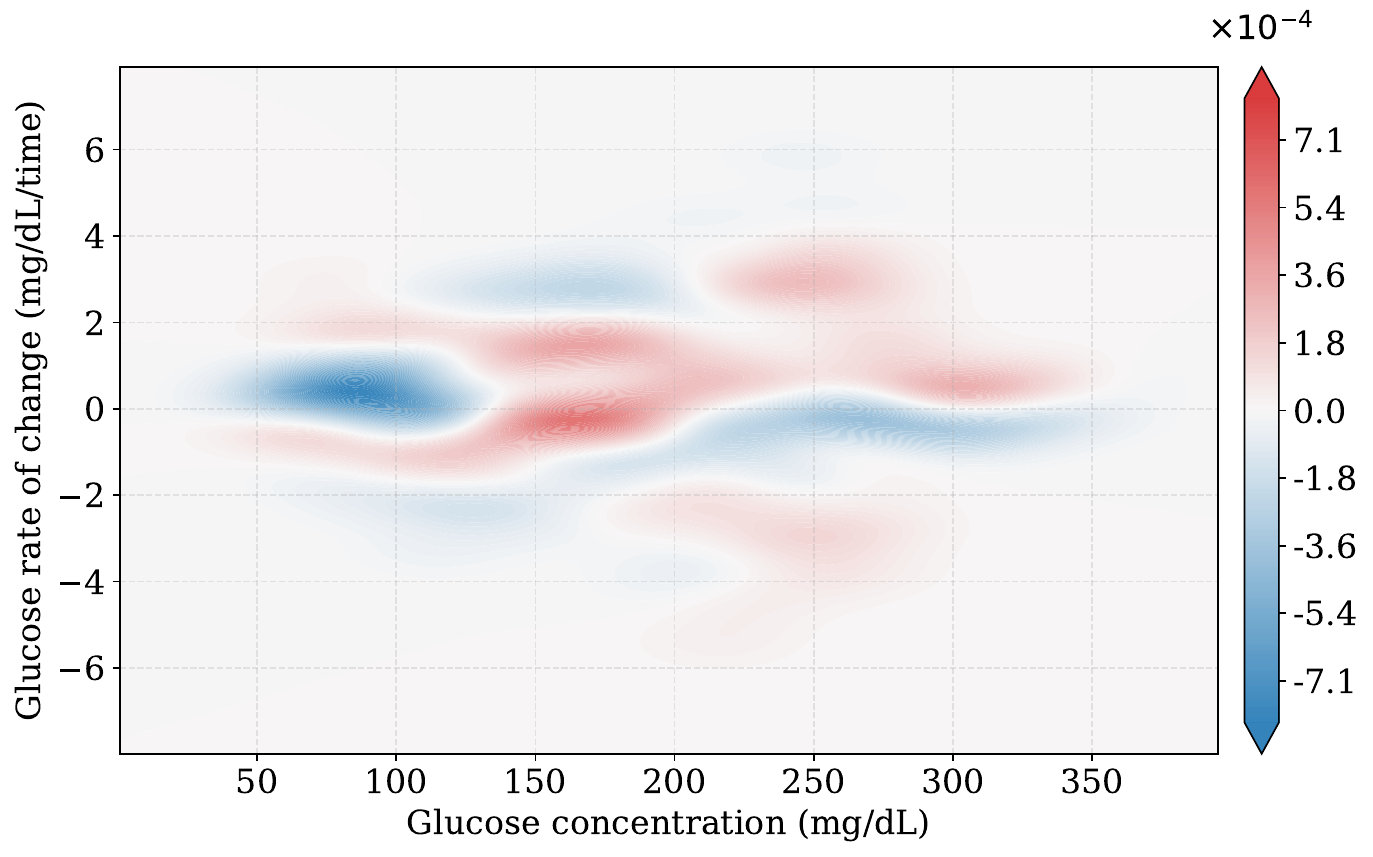}
        \caption{Control: Difference}
        \label{fig:dens_diff_ctrl}
    \end{subfigure}
    
    \vspace{1.5em}
    
    \begin{subfigure}[b]{0.32\textwidth}
        \centering
        \includegraphics[width=\textwidth]{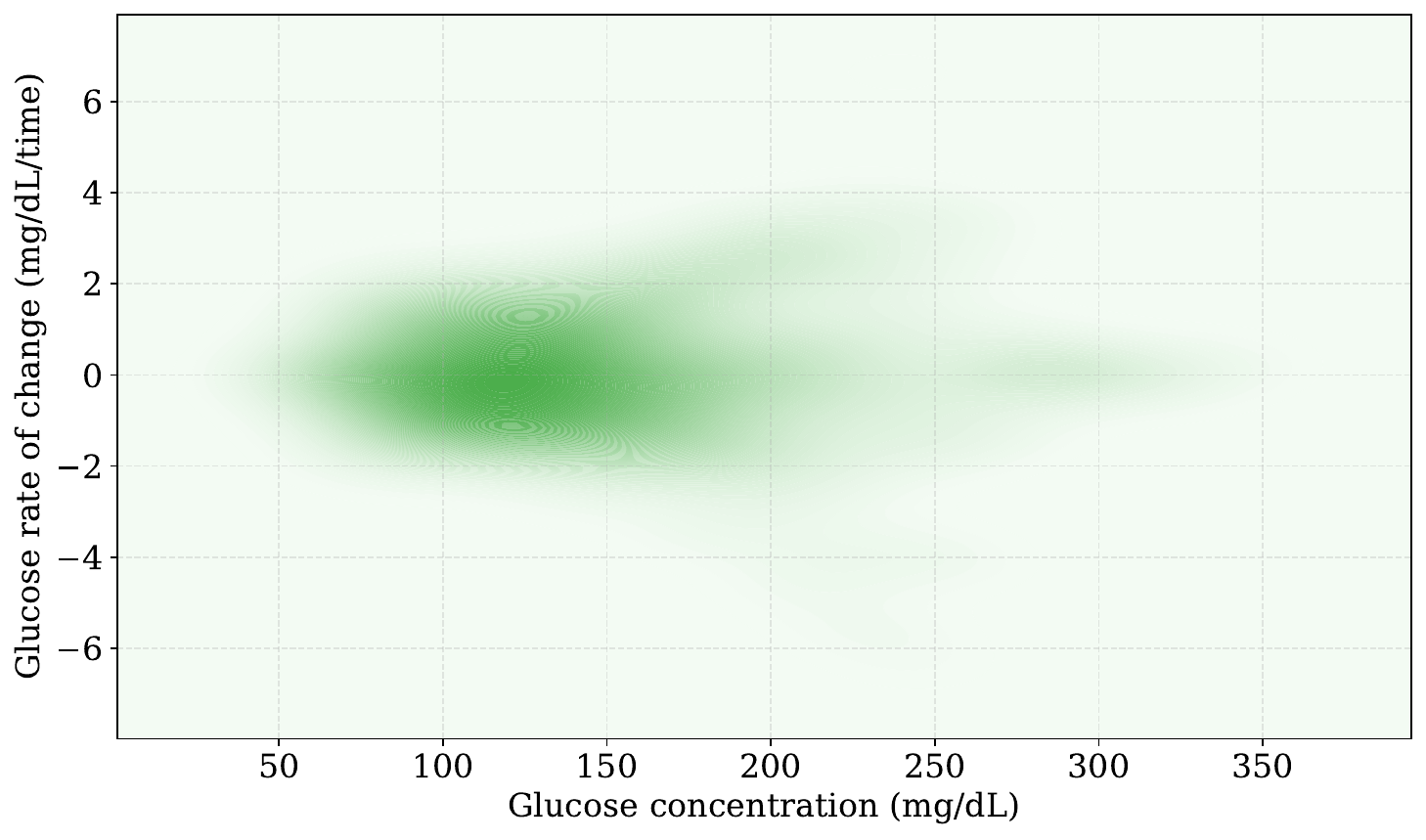}
        \caption{Treatment: Initial}
        \label{fig:dens_init_treat}
    \end{subfigure}
    \hfill
    \begin{subfigure}[b]{0.32\textwidth}
        \centering
        \includegraphics[width=\textwidth]{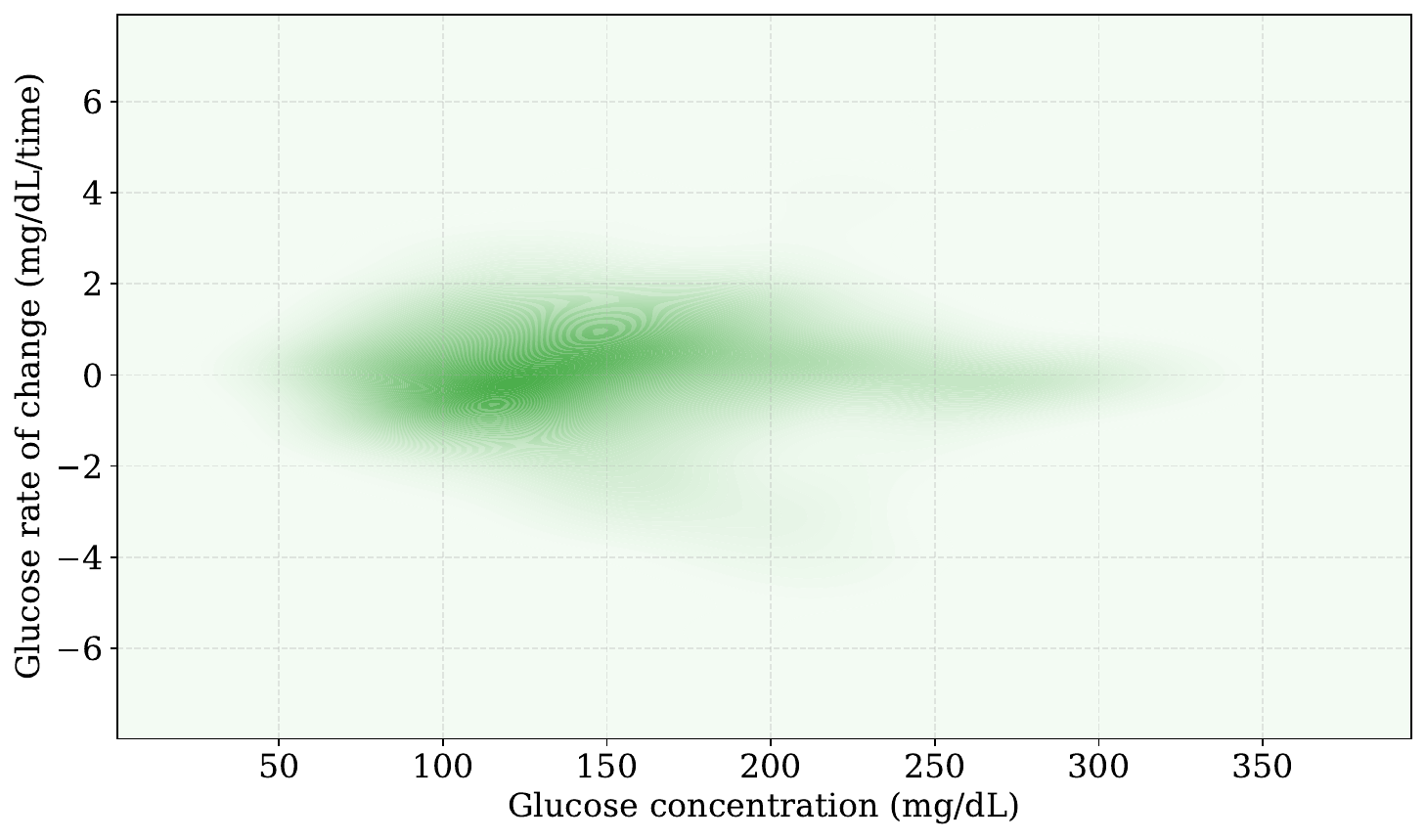}
        \caption{Treatment: Final}
        \label{fig:dens_fin_treat}
    \end{subfigure}
    \hfill
    \begin{subfigure}[b]{0.32\textwidth}
        \centering
        \includegraphics[width=\textwidth]{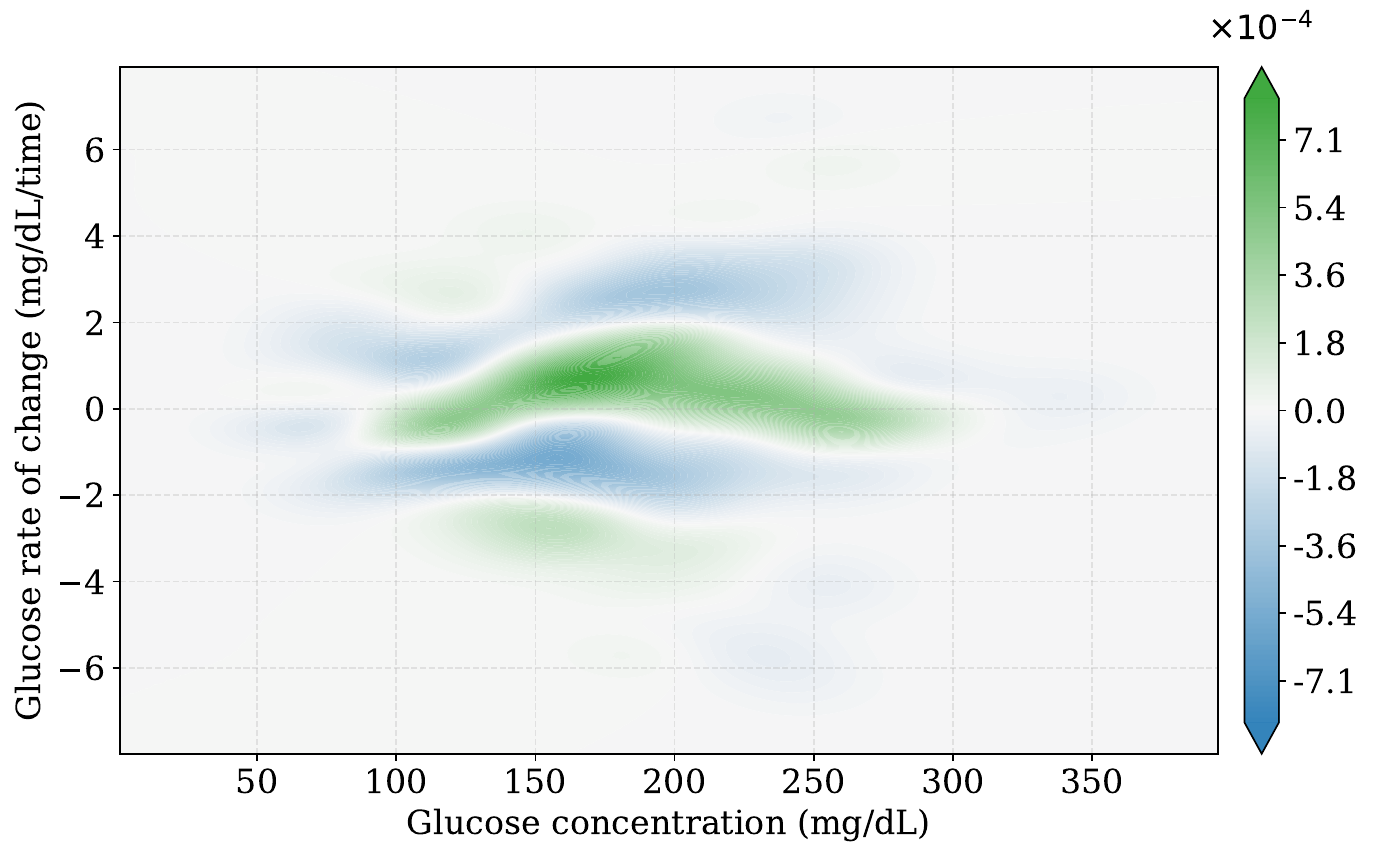}
        \caption{Treatment: Difference}
        \label{fig:dens_diff_treat}
    \end{subfigure}

    \caption{Predicted glucose density distributions between weeks 20 and 26, comparing Treatment ({\color{ForestGreen}green}) and Control ({\color{red}red}) groups for the $d=2$ model with $K=5$ mixture components. The marginal density over glucose concentration and speed is computed by drawing samples from each participant's GMM, estimating the sliced-Wasserstein barycenter in 2D, and converting the barycenter samples into a smooth density via a Gaussian KDE on the grid. The top row corresponds to the Control group and the bottom row to the Treatment group. In both rows, the first column shows the initial distribution at week 20, the second column displays the final distribution at week 26, and the third column presents the difference between the two distributions.}
    \label{fig:density_comparison}
\end{figure}

\begin{figure}[h!]
    \centering
    \includegraphics[width=0.5\linewidth]{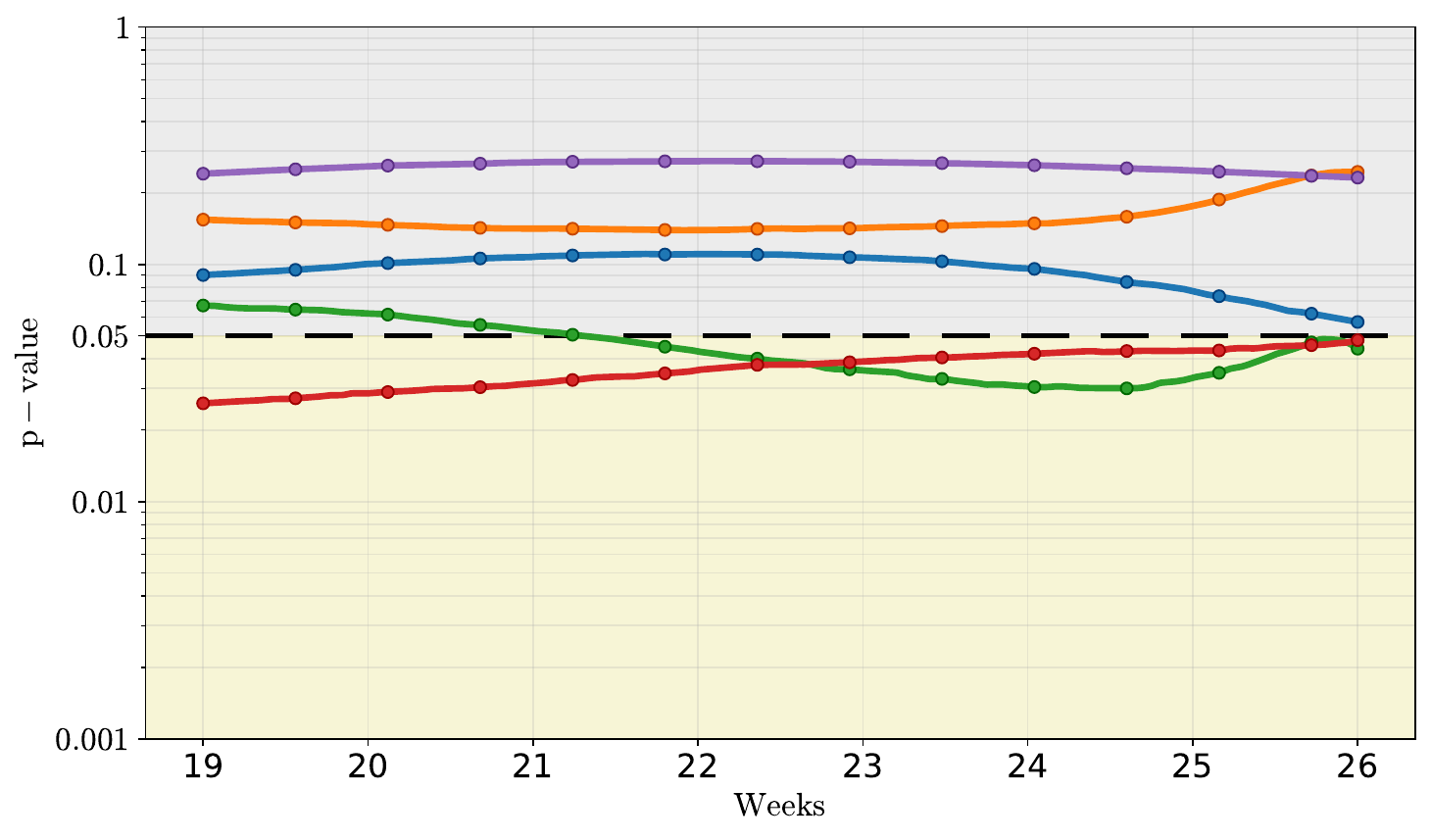}
    \caption{Wild Bootstrap MMD test $p$-values comparing Treatment vs. Control groups over time for the $d=2$ model with $K=5$ mixture components. The dashed black line indicates the significance threshold $\alpha = 0.05$. The colors correspond to the different components: {\color{blue}blue} (component 1), {\color{orange}orange} (component 2), {\color{ForestGreen}green} (component 3), {\color{red}red} (component 4), and {\color{violet}violet} (component 5).}
    \label{fig:mmd_wildbootstrap_combined}
\end{figure}

\begin{figure}[h!]
    \centering
    
    \begin{subfigure}[b]{0.32\textwidth}
        \centering
        \includegraphics[width=\textwidth]{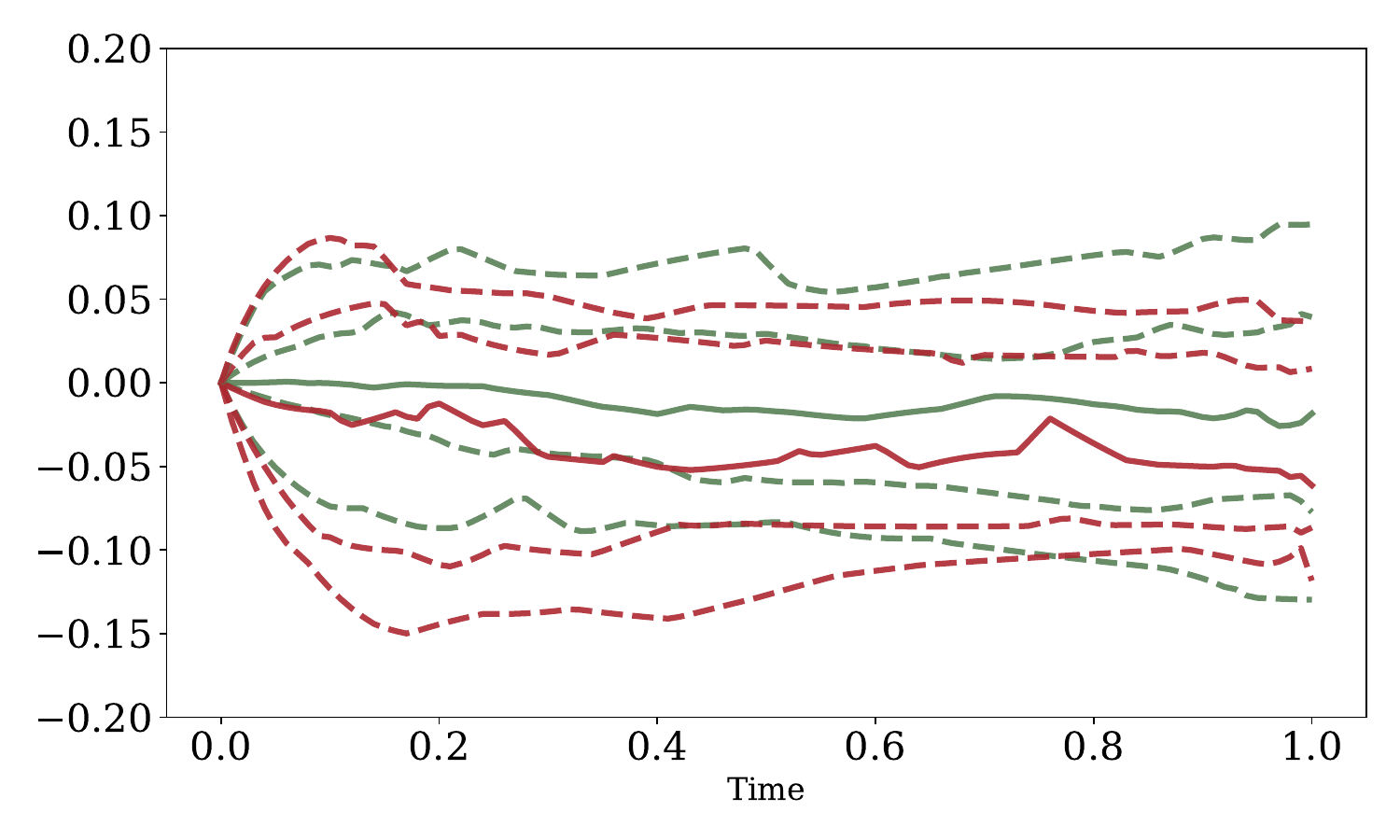}
        \caption{Component 1}
        \label{fig:quant_comp1}
    \end{subfigure}
    \hfill
    \begin{subfigure}[b]{0.32\textwidth}
        \centering
        \includegraphics[width=\textwidth]{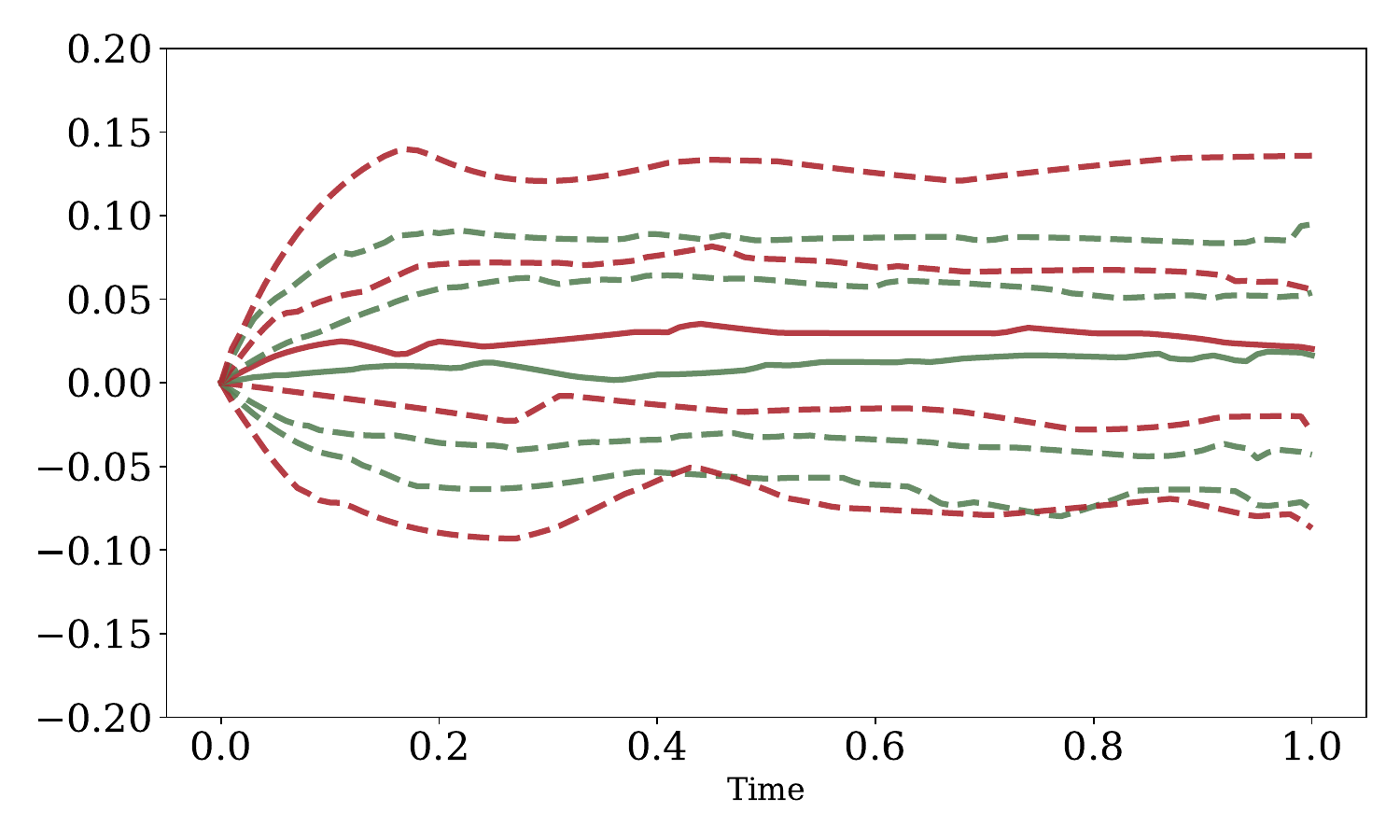}
        \caption{Component 2}
        \label{fig:quant_comp2}
    \end{subfigure}
    \hfill
    \begin{subfigure}[b]{0.32\textwidth}
        \centering
        \includegraphics[width=\textwidth]{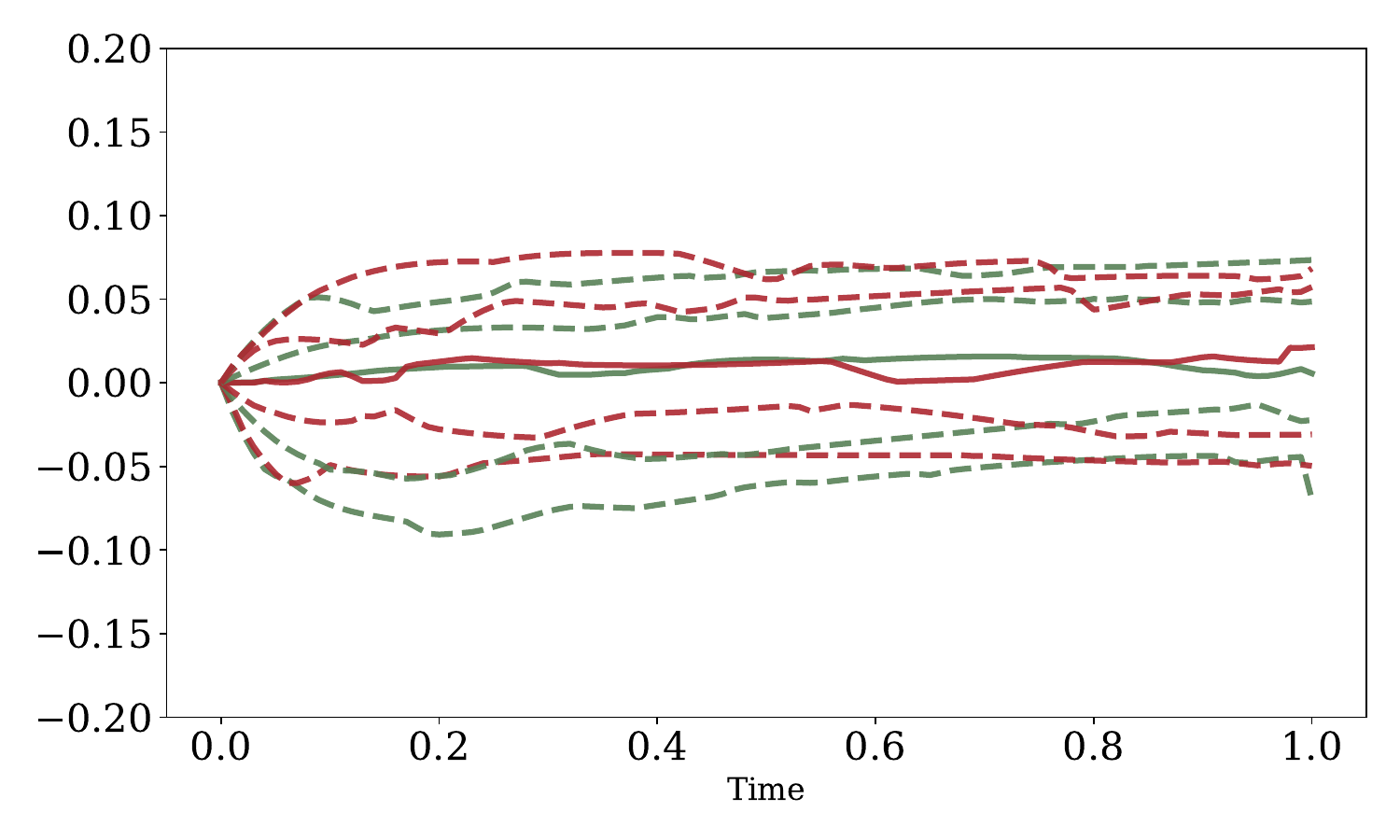}
        \caption{Component 3}
        \label{fig:quant_comp3}
    \end{subfigure}
    
    \vspace{1.5em}
    
    \begin{subfigure}[b]{0.32\textwidth}
        \centering
        \includegraphics[width=\textwidth]{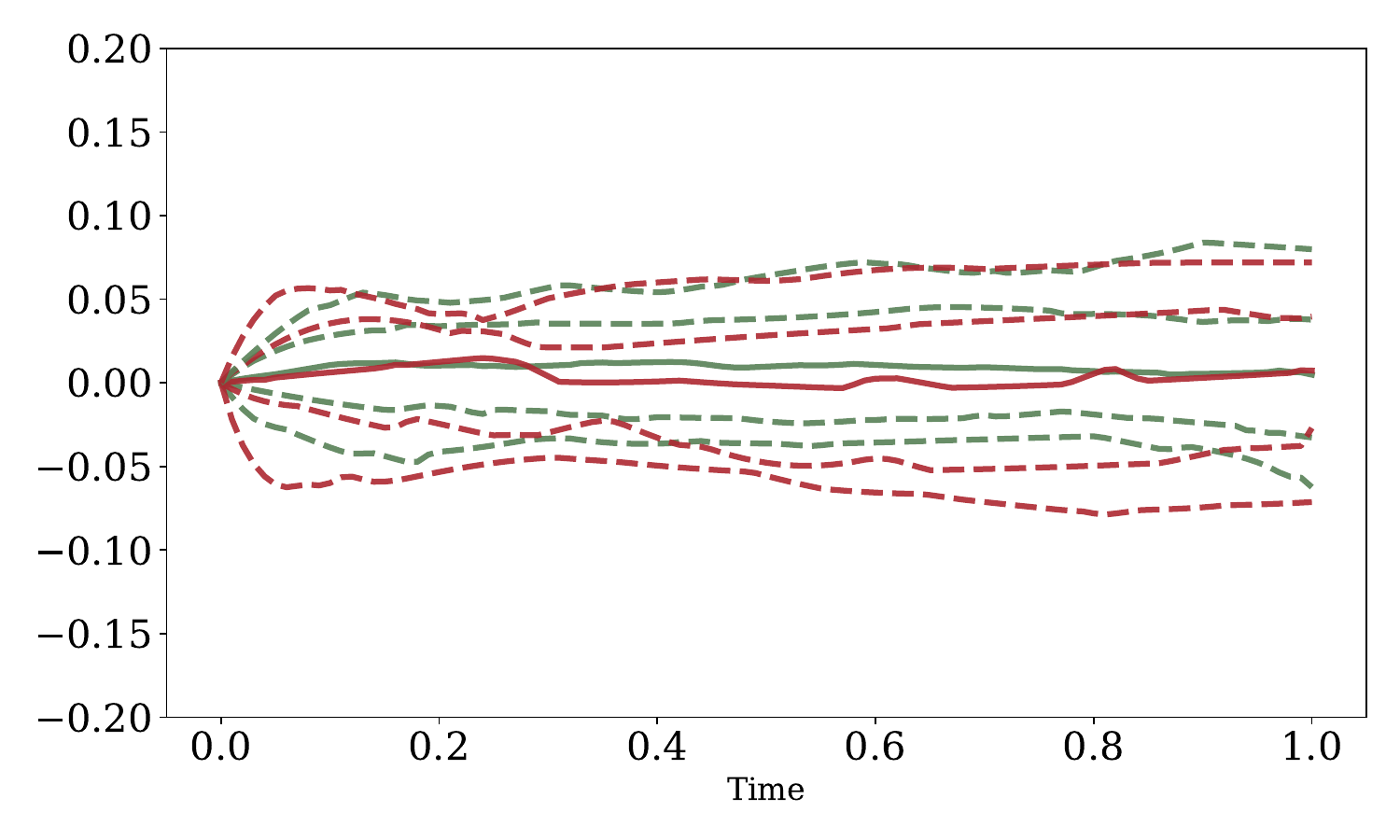}
        \caption{Component 4}
        \label{fig:quant_comp4}
    \end{subfigure}
    \hspace{0.05\textwidth}
    \begin{subfigure}[b]{0.32\textwidth}
        \centering
        \includegraphics[width=\textwidth]{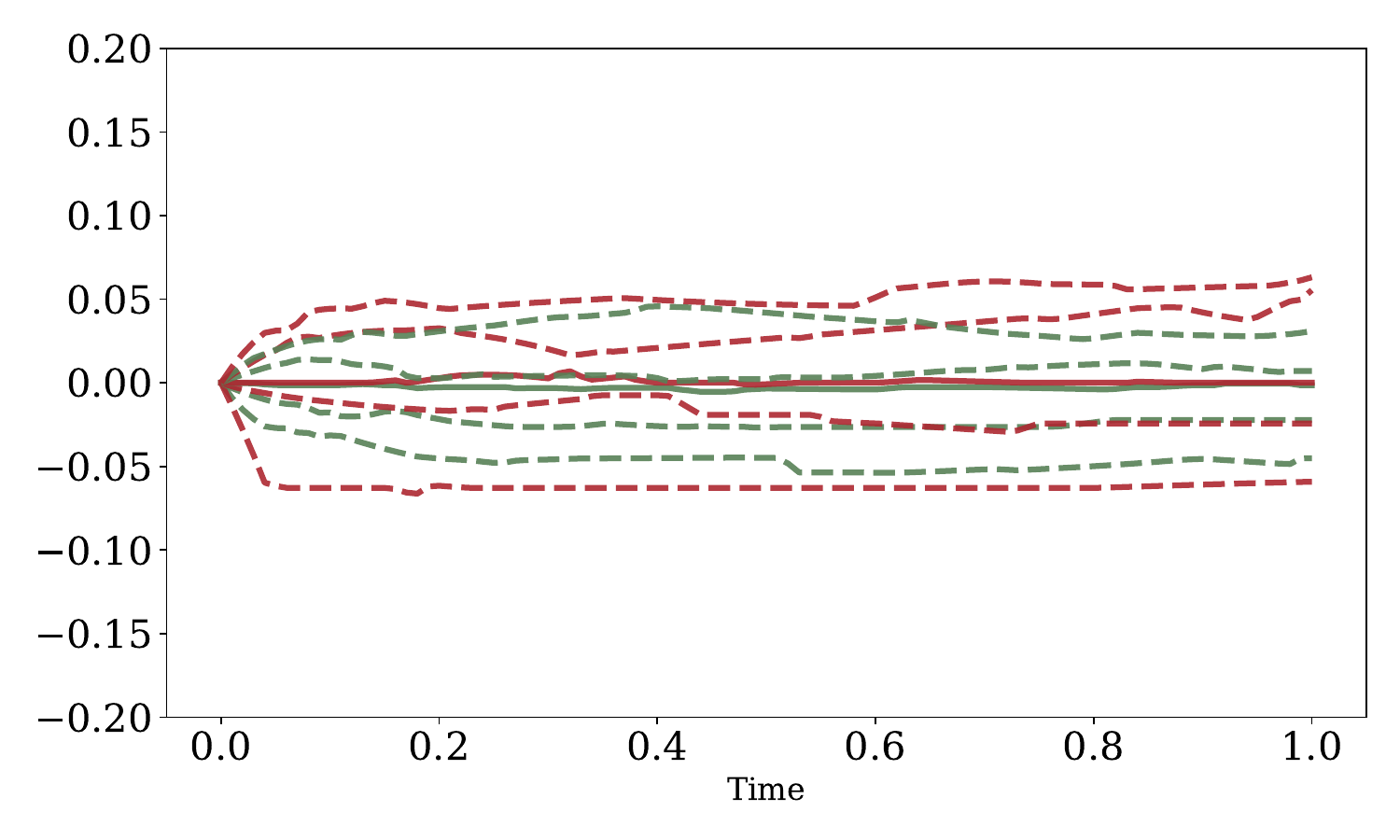}
        \caption{Component 5}
        \label{fig:quant_comp5}
    \end{subfigure}

    \caption{Quantile curves (median and 25\%--75\% bands) of the change in GMM mixture weights for each of the $K=5$ components over time, relative to their initial value, for Treatment ({\color{ForestGreen}green}) and Control ({\color{red}red}) groups. Each panel shows the temporal evolution of a component's weight deviation from baseline.}
    \label{fig:quantiles_comparison}
\end{figure}

\section{Final remarks}

We developed an interpretable statistical framework to model the dynamics of time-indexed probability distributions in longitudinal digital health studies. The proposed methodology combines a shared mixture representation with continuous-time evolution, allowing complex distributional changes to be tracked over follow-up while preserving a clinically interpretable low-dimensional structure. The simulation results reported in the Supplementary Material (\Cref{sec:simulations_DH2}) further indicate that this approach achieves an estimation precision that is competitive with existing alternatives.

In the CGM application, the framework provides a distributional view of the response to treatment that goes beyond conventional scalar summaries \cite{Battelino2022Continuous}. Rather than reducing each subject trajectory to a small set of isolated metrics, the proposed approach captures how the full glucose distribution and its short-term dynamics evolve jointly over time. This yields a richer characterization of glycemic regulation and makes it possible to distinguish global distributional changes from subject-specific response heterogeneity within a common modeling framework.

More broadly, the present work illustrates the value of multivariate continuous-time distributional modeling for modern digital health data \cite{Matabuena2024Glucodensity}, where repeated dense measurements are becoming increasingly common. The proposed framework is particularly appealing in settings where both interpretability and temporal resolution are important, since it enables changes in clinically meaningful latent regimes to be followed continuously throughout an intervention.

Several limitations and directions remain for future work. First, the framework should be evaluated across a broader range of digital health studies and intervention settings in order to better assess its robustness and generalizability. Second, scalable online and distributed implementations would enhance their utility in large-scale epidemiological studies. Third, extensions to higher-dimensional distributional representations and to functional biomarkers, including those arising in biomechanics \cite{Matabuena03042023}, could substantially broaden their applicability. In general, this work shows that distributional modeling can yield clinically significant insights beyond conventional scalar summaries and provides a foundation for further statistical methodology in digital health.

\bibliography{bibliography}

@article{schoelwer2024use,
  title={Use of diabetes technology in children},
  author={Schoelwer, Melissa J and DeBoer, Mark D and Breton, Marc D},
  journal={Diabetologia},
  volume={67},
  number={10},
  pages={2075--2084},
  year={2024},
  publisher={Springer}
}

@book{lehmann2005testing,
  title={Testing statistical hypotheses},
  author={Lehmann, Erich Leo and Romano, Joseph P},
  year={2005},
  publisher={Springer}
}

@article{matabuena2026exploratory,
  title={Exploratory analysis of smartphone-based step counts as a digital biomarker for survival in ALS patients},
  author={Matabuena, Marcos and Straczkiewicz, Marcin and Calcagno, Narghes and Burke, Katherine M and Royse, Timothy B and Iyer, Amrita and Carney, Kendall T and Hall, Sydney and Berry, James D and Onnela, Jukka-Pekka},
  journal={Frontiers in Digital Health},
  volume={7},
  pages={1705368},
  year={2026},
  publisher={Frontiers},
}

@article{gretton2012kernel,
  title={A kernel two-sample test},
  author={Gretton, Arthur and Borgwardt, Karsten M and Rasch, Malte J and Sch{\"o}lkopf, Bernhard and Smola, Alexander},
  journal={The journal of machine learning research},
  volume={13},
  number={1},
  pages={723--773},
  year={2012},
  publisher={JMLR. org}
}

@article{leucht2013dependent,
  title={Dependent wild bootstrap for degenerate U-and V-statistics},
  author={Leucht, Anne and Neumann, Michael H},
  journal={Journal of Multivariate Analysis},
  volume={117},
  pages={257--280},
  year={2013},
  publisher={Elsevier},
  doi={10.1016/j.jmva.2013.03.003}
}

@inproceedings{chwialkowski2014wild,
 author = {Chwialkowski, Kacper and Sejdinovic, Dino and Gretton, Arthur},
 booktitle = {Advances in Neural Information Processing Systems},
 editor = {Z. Ghahramani and M. Welling and C. Cortes and N. Lawrence and K.Q. Weinberger},
 publisher = {Curran Associates, Inc.},
 title = {A Wild Bootstrap for Degenerate Kernel Tests},
 volume = {27},
 year = {2014}
}

@article{matabuena2025predicting,
  title={Predicting distributions of physical activity profiles in the National Health and Nutrition Examination Survey database using a partially linear Fr{\'e}chet single index model},
  author={Matabuena, Marcos and Ghosal, Aritra and Meiring, Wendy and Petersen, Alexander},
  journal={Biostatistics},
  volume={26},
  number={1},
  pages={kxaf013},
  year={2025},
  publisher={Oxford University Press}
}

@article{matabuena2023distributional,
  title={Distributional data analysis of accelerometer data from the NHANES database using nonparametric survey regression models},
  author={Matabuena, Marcos and Petersen, Alexander},
  journal={Journal of the Royal Statistical Society Series C: Applied Statistics},
  volume={72},
  number={2},
  pages={294--313},
  year={2023},
  publisher={Oxford University Press US},
  doi = {10.1093/jrsssc/qlad007}
}

@article{Ghosal26Survival,
author = {Ghosal, Rahul and Cho, Sunwoo Emma and Matabuena, Marcos},
title = {Survival on Image Regression With Application to Partially Functional Distributional Representation of Physical Activity},
journal = {Statistical Analysis and Data Mining: An ASA Data Science Journal},
volume = {19},
number = {1},
pages = {e70068},
doi = {https://doi.org/10.1002/sam.70068},
note = {e70068 SAM-25-505.R1},
year = {2026}
}

@article{10.1214/26-AOAS2139I,
author = {Marcos Matabuena and Ciprian M. Crainiceanu},
title = {Multilevel functional distributional models with applications to continuous glucose monitoring in diabetes clinical trials},
volume = {20},
journal = {The Annals of Applied Statistics},
number = {1},
publisher = {Institute of Mathematical Statistics},
pages = {476 -- 495},
year = {2026},
doi = {10.1214/26-AOAS2139},
URL = {https://doi.org/10.1214/26-AOAS2139}
}

@article{bengio2017deep,
author = {LeCun, Yann and Bengio, Yoshua and Hinton, Geoffrey},
	doi = {10.1038/nature14539},
	isbn = {1476-4687},
	journal = {Nature},
	number = {7553},
	pages = {436--444},
	title = {Deep learning},
	volume = {521},
	year = {2015},
}

@article{wadwa2023trial,
  title={Trial of hybrid closed-loop control in young children with type 1 diabetes},
  author={Wadwa, R Paul and Reed, Zachariah W and Buckingham, Bruce A and DeBoer, Mark D and Ekhlaspour, Laya and Forlenza, Gregory P and Schoelwer, Melissa and Lum, John and Kollman, Craig and Beck, Roy W and others},
  journal={New England Journal of Medicine},
  volume={388},
  number={11},
  pages={991--1001},
  year={2023},
  publisher={Mass Medical Soc},
  doi ={10.1056/NEJMoa2210834}
}

@article{kitagawa2025artificial,
  title={Artificial pancreas: the past and the future},
  author={Kitagawa, Hiroyuki and Munekage, Masaya and Seo, Satoru and Hanazaki, Kazuhiro},
  journal={Journal of Artificial Organs},
  volume={28},
  number={4},
  pages={514--521},
  year={2025},
  publisher={Springer},
  doi ={10.1007/s10047-025-01510-1}
}

@article{hughes2023digital,
  title={Digital technology for diabetes},
  author={Hughes, Michael S and Addala, Ananta and Buckingham, Bruce},
  journal={New England Journal of Medicine},
  volume={389},
  number={22},
  pages={2076--2086},
  year={2023},
  publisher={Mass Medical Soc},
  doi ={10.1056/NEJMra2215899}
}

@article{ware2024eighteen,
  title={Eighteen-Month hybrid closed-loop use in very young children with type 1 diabetes: A single-arm multicenter trial},
  author={Ware, Julia and Allen, Janet M and Boughton, Charlotte K and Wilinska, Malgorzata E and Hartnell, Sara and Thankamony, Ajay and de Beaufort, Carine and Campbell, Fiona M and Fr{\"o}hlich-Reiterer, Elke and Fritsch, Maria and others},
  journal={Diabetes Care},
  volume={47},
  number={12},
  pages={2189--2195},
  year={2024},
  publisher={American Diabetes Association},
  doi ={10.2337/dc24-1313}
}

@article{beck2023meta,
  title={A meta-analysis of randomized trial outcomes for the t: slim X2 insulin pump with control-IQ technology in youth and adults from age 2 to 72},
  author={Beck, Roy W and Kanapka, Lauren G and Breton, Marc D and Brown, Sue A and Wadwa, R Paul and Buckingham, Bruce A and Kollman, Craig and Kovatchev, Boris},
  journal={Diabetes Technology \& Therapeutics},
  volume={25},
  number={5},
  pages={329--342},
  year={2023},
  publisher={SAGE Publications Sage CA: Los Angeles, CA},
  doi ={10.1089/dia.2022.0558}
}

@article{stahl2025efficacy,
  title={Efficacy of automated insulin delivery systems in people with type 1 diabetes: a systematic review and network meta-analysis of outpatient randomised controlled trials},
  author={Stahl-Pehe, Anna and Shokri-Mashhadi, Nafiseh and Wirth, Marielle and Schlesinger, Sabrina and Kuss, Oliver and Holl, Reinhard W and B{\"a}chle, Christina and Warz, Klaus-D and B{\"u}rger-B{\"u}sing, Jutta and Sp{\"o}rkel, Olaf and others},
  journal={EClinicalMedicine},
  volume={82},
  year={2025},
  publisher={Elsevier},
  doi ={10.1016/j.eclinm.2025.103190},
  number = {103190}
}

@article{Alquier2023Estimation,
  author = {Alquier, P. and Ch\'{e}rief-Abdellatif, B.-E. and Derumigny, A. and Fermanian, J.-D.},
  title = {Estimation of copulas via maximum mean discrepancy},
  journal = {Journal of the American Statistical Association},
  year = {2023},
  volume = {118},
  number = {543},
  pages = {1997--2012},
  ids = {alquier2023copulammd},
  doi = {10.1080/01621459.2021.2024836}
}

@article{Alquier2024Universal,
  author = {Alquier, P. and Gerber, M.},
  title = {Universal robust regression via maximum mean discrepancy},
  journal = {Biometrika},
  year = {2024},
  volume = {111},
  number = {1},
  pages = {71--92},
  ids = {alquier2024mmd},
  doi ={10.1093/biomet/asad031}
}

@article{Battelino2022Continuous,
  author = {Battelino, Tadej and Alexander, Charles M and Amiel, Stephanie A and Arreaza-Rubin, Guillermo and Beck, Roy W and Bergenstal, Richard M and Buckingham, Bruce A and Carroll, James and Ceriello, Antonio and Chow, Elaine},
  title = {Continuous glucose monitoring and metrics for clinical trials: an international consensus statement},
  journal = {The Lancet Diabetes \& Endocrinology},
  year = {2022},
  publisher = {Elsevier},
  ids = {battelino2022continuous},
}

@article{Battelino2023Continuous,
  author = {Battelino, Tadej and Alexander, Charles M and Amiel, Stephanie A and Arreaza-Rubin, Guillermo and Beck, Roy W and Bergenstal, Richard M and Buckingham, Bruce A and Carroll, James and Ceriello, Antonio and Chow, Elaine},
  title = {Continuous glucose monitoring and metrics for clinical trials: an international consensus statement},
  journal = {The lancet Diabetes \& endocrinology},
  year = {2023},
  volume = {11},
  number = {1},
  pages = {42--57},
  publisher = {Elsevier},
  ids = {battelino2023continuous},
}

@article{Matabuena03042023,
author = {Marcos Matabuena and Marta Karas and Sherveen Riazati and Nick Caplan and Philip R. Hayes},
title = {Estimating Knee Movement Patterns of Recreational Runners Across Training Sessions Using Multilevel Functional Regression Models},
journal = {The American Statistician},
volume = {77},
number = {2},
pages = {169--181},
year = {2023},
publisher = {Taylor \& Francis},
doi = {10.1080/00031305.2022.2105950},


URL = { 
    
        https://doi.org/10.1080/00031305.2022.2105950
    
    

},
eprint = { 
    
        https://doi.org/10.1080/00031305.2022.2105950
    
    

}

}

@book{Chacon2018Multivariate,
  author = {Chac\'{o}n, Jos\'{e} E and Duong, Tarn},
  title = {Multivariate kernel smoothing and its applications},
  year = {2018},
  publisher = {CRC Press},
  ids = {chacon2018multivariate},
}

@article{CheriefAbdellatif2022Finite,
  author = {Ch\'{e}rief-Abdellatif, B.-E. and Alquier, P.},
  title = {Finite-sample properties of parametric {MMD} estimation: Robustness to misspecification and dependence},
  journal = {Bernoulli},
  year = {2022},
  volume = {28},
  number = {1},
  pages = {181--213},
  ids = {CheriefAbdellatifAlquier2022},
}

@inproceedings{Gao2021Maximum,
  author = {Gao, Ruize and Liu, Feng and Zhang, Jingfeng and Han, Bo and Liu, Tongliang and Niu, Gang and Sugiyama, Masashi},
  title = {Maximum Mean Discrepancy Test is Aware of Adversarial Attacks},
  booktitle = {Proceedings of the International Conference on Machine Learning (ICML)},
  year = {2021},
  month = {7},
  pages = {3564--3575},
  publisher = {ML Research Press},
  ids = {gao2021mmd}
}

@article{Garreau2017Large,
  author = {Garreau, Damien and Jitkrittum, Wittawat and Kanagawa, Motonobu},
  title = {Large sample analysis of the median heuristic},
  journal = {arXiv preprint arXiv:1707.07269},
  year = {2017},
  ids = {garreau2017large},
}

@article{Ghosal2023Distributional,
  author = {Ghosal, Rahul and Varma, Vijay R and Volfson, Dmitri and Hillel, Inbar and Urbanek, Jacek and Hausdorff, Jeffrey M and Watts, Amber and Zipunnikov, Vadim},
  title = {Distributional data analysis via quantile functions and its application to modeling digital biomarkers of gait in Alzheimer’s disease},
  journal = {Biostatistics},
  year = {2023},
  volume = {24},
  number = {3},
  pages = {539--561},
  publisher = {Oxford University Press},
  ids = {ghosal2023distributional},
}

@article{Ghosal2024Multivariate,
  author = {Ghosal, Rahul and Matabuena, Marcos},
  title = {Multivariate Scalar on Multidimensional Distribution Regression With Application to Modeling the Association Between Physical Activity and Cognitive Functions},
  journal = {Biometrical Journal},
  year = {2024},
  volume = {66},
  number = {7},
  pages = {e202400042},
  publisher = {Wiley Online Library},
  ids = {ghosal2024multivariate},
}

@article{Ghosal2025Distributional,
  author = {Ghosal, Rahul and Ghosh, Sujit K and Schrack, Jennifer A and Zipunnikov, Vadim},
  title = {Distributional outcome regression via quantile functions and its application to modelling continuously monitored heart rate and physical activity},
  journal = {Journal of the American Statistical Association},
  year = {2025},
  pages = {1--20},
  publisher = {Taylor \& Francis},
}

@article{richardson2025normal,
  title={Normal Reference Range for Glucose Rates of Change in Nondiabetic Individuals Using Continuous Glucose Monitoring},
  author={Richardson, Robert R},
  journal={Diabetes Technology \& Therapeutics},
  pages={15209156251390822},
  year={2025},
  publisher={SAGE Publications Sage CA: Los Angeles, CA}
}

@article{Jain2010Data,
  author = {Jain, Anil K},
  title = {Data clustering: 50 years beyond K-means},
  journal = {Pattern recognition letters},
  year = {2010},
  volume = {31},
  number = {8},
  pages = {651--666},
  publisher = {Elsevier},
  ids = {jain2010data},
}

@inproceedings{Jia2019Neural,
  author = {Jia, Junteng and Benson, Austin R},
  title = {Neural Jump Stochastic Differential Equations},
  booktitle = {Advances in Neural Information Processing Systems (NeurIPS)},
  year = {2019},
  volume = {32},
  publisher = {Curran Associates, Inc.},
  ids = {NEURIPS2019_njumpsdes},
}

@inproceedings{Katta2024Interpretable,
  author = {Katta, Srikar and Parikh, Harsh and Rudin, Cynthia and Volfovsky, Alexander},
  title = {Interpretable Causal Inference for Analyzing Wearable, Sensor, and Distributional Data},
  booktitle = {International Conference on Artificial Intelligence and Statistics},
  year = {2024},
  pages = {3340--3348},
  organization = {PMLR},
  ids = {katta2024interpretable},
}

@inproceedings{Kidger2020Neural,
  author = {Kidger, Patrick and Morrill, James and Foster, James and Lyons, Terry},
  title = {Neural Controlled Differential Equations for Irregular Time Series},
  booktitle = {Advances in Neural Information Processing Systems (NeurIPS)},
  year = {2020},
  volume = {33},
  pages = {6696--6707},
  publisher = {Curran Associates, Inc.},
  ids = {kidger2020neural},
}

@article{Lugosi2024Uncertainty,
  author = {Lugosi, G\'{a}bor and Matabuena, Marcos},
  title = {Uncertainty quantification in metric spaces},
  journal = {arXiv preprint arXiv:2405.05110},
  year = {2024},
  ids = {lugosi2024uncertainty},
}

@inproceedings{Massaroli2020Dissecting,
  author = {Massaroli, Stefano and Poli, Michael and Park, Jinkyoo and Yamashita, Atsushi and Asama, Hajime},
  title = {Dissecting Neural {ODEs}},
  booktitle = {Advances in Neural Information Processing Systems (NeurIPS)},
  year = {2020},
  volume = {33},
  pages = {3952--3963},
  publisher = {Curran Associates, Inc.},
  ids = {NEURIPS2020_massaroli, massaroli2020dissecting},
}

@article{Matabuena2021Glucodensities,
  author = {Matabuena, Marcos and Petersen, Alexander and Vidal, Juan C and Gude, Francisco},
  title = {Glucodensities: A new representation of glucose profiles using distributional data analysis},
  journal = {Statistical methods in medical research},
  year = {2021},
  volume = {30},
  number = {6},
  pages = {1445--1464},
  publisher = {SAGE Publications Sage UK: London, England}
}

@article{Matabuena2022Physical,
  author = {Matabuena, Marcos and F\'{e}lix, Paulo and Hammouri, Ziad Akram Ali and Mota, Jorge and del Pozo Cruz, Borja},
  title = {Physical activity phenotypes and mortality in older adults: a novel distributional data analysis of accelerometry in the {NHANES}},
  journal = {Aging Clinical and Experimental Research},
  year = {2022},
  month = {10},
  volume = {34},
  number = {12},
  pages = {3107--3114},
  publisher = {Springer},
  doi = {10.1007/s40520-022-02260-3},
  ids = {Matabuena2022, matabuena2022physical},
  day = {02},
  issn = {1720-8319},
}

@article{Matabuena2024Glucodensity,
  author = {Matabuena, Marcos and Ghosal, Rahul and Aguilar, Javier Enrique and Keshet, Ayya and Wagner, Robert and Fernández Merino, Carmen and Sánchez Castro, Juan and Zipunnikov, Vadim and Onnela, Jukka-Pekka and Gude, Francisco},
  title = {Glucodensity functional profiles outperform traditional continuous glucose monitoring metrics},
  journal = {Scientific Reports},
  year = {2025},
  month = sep,
  day = {29},
  volume = {15},
  number = {1},
  pages = {33662},
  issn = {2045-2322},
  doi = {10.1038/s41598-025-18119-2},
  url = {https://doi.org/10.1038/s41598-025-18119-2}
}

@article{Mesko2023Imperative,
  author = {Mesk\'{o}, Bertalan and Topol, Eric J},
  title = {The imperative for regulatory oversight of large language models (or generative {AI}) in healthcare},
  journal = {NPJ digital medicine},
  year = {2023},
  volume = {6},
  number = {1},
  pages = {120},
  publisher = {Nature Publishing Group UK London},
  ids = {mesko2023imperative},
}

@article{Muandet2017Kernel,
  author = {Muandet, Krikamol and Fukumizu, Kenji and Sriperumbudur, Bharath and Sch\"{o}lkopf, Bernhard},
  title = {Kernel mean embedding of distributions: A review and beyond},
  journal = {Foundations and Trends{\textregistered} in Machine Learning},
  year = {2017},
  volume = {10},
  number = {1-2},
  pages = {1--141},
  publisher = {Now Publishers, Inc.},
  ids = {muandet2017kernel, muandet2016kernel},
}

@inproceedings{Papamakarios2017Masked,
  author = {Papamakarios, George and Pavlakou, Theo and Murray, Iain},
  title = {Masked Autoregressive Flow for Density Estimation},
  booktitle = {Advances in Neural Information Processing Systems (NeurIPS)},
  year = {2017},
  volume = {30},
  publisher = {Curran Associates, Inc.}
}

@article{Papamakarios2021Normalizing,
  author = {Papamakarios, George and Nalisnick, Eric and Rezende, Danilo Jimenez and Mohamed, Shakir and Lakshminarayanan, Balaji},
  title = {Normalizing flows for probabilistic modeling and inference},
  journal = {Journal of Machine Learning Research},
  year = {2021},
  volume = {22},
  number = {57},
  pages = {1--64},
  eprint = {1912.02762},
  eprinttype = {arxiv},
  url = {https://jmlr.org/papers/v22/19-1028.html},
}

@article{Park2025Beyond,
  author = {Park, Junyoung and Kok, Neo and Gaynanova, Irina},
  title = {Beyond fixed thresholds: optimizing summaries of wearable device data via piecewise linearization of quantile functions},
  journal = {arXiv preprint arXiv:2501.11777},
  year = {2025},
  ids = {park2025beyond},
}

@inproceedings{Qian2021Integrating,
  author = {Qian, Zhaozhi and Zame, William and Fleuren, Lucas and Elbers, Paul and van der Schaar, Mihaela},
  title = {Integrating expert {ODEs} into neural {ODEs}: pharmacology and disease progression},
  booktitle = {Advances in Neural Information Processing Systems (NeurIPS)},
  year = {2021},
  volume = {34},
  pages = {11364--11383},
  ids = {qian2021integrating},
}

@article{Rigby2005Generalized,
  author = {Rigby, Robert A. and Stasinopoulos, D. Mikis},
  title = {Generalized additive models for location, scale and shape},
  journal = {Journal of the Royal Statistical Society Series C: Applied Statistics},
  year = {2005},
  volume = {54},
  number = {3},
  pages = {507--554},
  doi = {10.1111/j.1467-9876.2005.00510.x},
  ids = {rigby2005generalized, rigby2005,Rigby2005},
}

@inproceedings{Rubanova2019Latent,
  author = {Rubanova, Yulia and Chen, Ricky T. Q. and Duvenaud, David K},
  title = {Latent Ordinary Differential Equations for Irregularly-Sampled Time Series},
  booktitle = {Advances in Neural Information Processing Systems (NeurIPS)},
  year = {2019},
  volume = {32},
  publisher = {Curran Associates, Inc.},
  ids = {NEURIPS2019_rubanova},
}

@article{Sejdinovic2013Equivalence,
  author = {Sejdinovic, Dino and Sriperumbudur, Bharath and Gretton, Arthur and Fukumizu, Kenji},
  title = {Equivalence of distance-based and {RKHS}-based statistics in hypothesis testing},
  journal = {The Annals of Statistics},
  year = {2013},
  pages = {2263--2291},
  publisher = {JSTOR},
  ids = {sejdinovic2013equivalence},
}

@book{Serfling2009Approximation,
  author = {Serfling, Robert J},
  title = {Approximation theorems of mathematical statistics},
  year = {2009},
  publisher = {John Wiley \& Sons},
  ids = {serfling2009approximation},
}

@book{Silverman1986Density,
  author = {Silverman, Bernard W},
  title = {Density estimation for statistics and data analysis},
  year = {1986},
  volume = {26},
  publisher = {CRC press},
  ids = {silverman1986density},
}

@book{Silverman2018Density,
  author = {Silverman, Bernard W},
  title = {Density estimation for statistics and data analysis},
  year = {2018},
  publisher = {Routledge},
  ids = {silverman2018density},
}

@article{Sriperumbudur2011Universality,
  author = {Sriperumbudur, Bharath K and Fukumizu, Kenji and Lanckriet, Gert Rg},
  title = {Universality, Characteristic Kernels and {RKHS} Embedding of Measures.},
  journal = {Journal of Machine Learning Research},
  year = {2011},
  volume = {12},
  number = {7},
  ids = {sriperumbudur2011universality},
}

@article{Szabo2016Learning,
  author = {Szab\'{o}, Zolt\'{a}n and Sriperumbudur, Bharath K and P\'{o}czos, Barnab\'{a}s and Gretton, Arthur},
  title = {Learning theory for distribution regression},
  journal = {Journal of Machine Learning Research},
  year = {2016},
  volume = {17},
  number = {152},
  pages = {1--40},
  ids = {szabo2016learning},
}

@book{Tsybakov2008Introduction,
  author = {Tsybakov, Alexandre B.},
  title = {Introduction to Nonparametric Estimation},
  year = {2008},
  publisher = {Springer Publishing Company, Incorporated},
  edition = {1st},
  isbn = {0387790519, 9780387790510},
  ids = {Tsybakov:2008:INE:1522486},
}

@inproceedings{Wang2024Timemixer,
  author = {Wang, S.},
  title = {Timemixer: Decomposable multiscale mixing for time series forecasting},
  booktitle = {International Conference on Learning Representations (ICLR)},
  year = {2024},
  ids = {wang2024timemixer},
}

@article{Wiener1932Tauberian,
  author = {Wiener, Norbert},
  title = {Tauberian Theorems},
  journal = {Annals of Mathematics},
  year = {1932},
  volume = {33},
  number = {1},
  pages = {1--100},
  publisher = {[Annals of Mathematics, Trustees of Princeton University on Behalf of the Annals of Mathematics, Mathematics Department, Princeton University]},
  urldate = {2025-05-12},
  ids = {wiener1932},
  issn = {0003486X, 19398980},
}

@inproceedings{Wu2023Timesnet,
  author = {Wu, Haixu and Hu, Tengge and Liu, Yong and Zhou, Hang and Wang, Jianmin and Long, Mingsheng},
  title = {TimesNet: Temporal 2D-Variation Modeling for General Time Series Analysis},
  booktitle = {International Conference on Learning Representations (ICLR)},
  year = {2023},
  ids = {wu2023timesnet},
}

\newpage
\appendix
\phantomsection\label{Supplemental-material}

\part*{Supplementary Material}

\section{Statistical theory and proofs}\label{sec:the}

We provide theoretical guarantees for the discrete-time MMD fitting step at each observed time point \(t_i\in\tau_m\), prior to the neural-ODE smoothing stage. The results below formalize (i) approximation by a shared Gaussian dictionary and (ii) finite-sample stability of the weight estimator in \eqref{eq:local_qp}.

\begin{theorem}[Universality]\label{thm:uniform_shared_dictionary}
Let \(\{f_t\}_{t\in[0,T]}\subset L^1(\R^d)\) be a family of probability densities. Assume:
\begin{enumerate}
\item\label{item:tightness} For every $\eta>0$ there exists $R<\infty$ such that
$\int_{\|x\|>R}f_t(x)\,\mathrm{d} x<\eta$ for all $t\in[0,T]$;
\item\label{item:equicont} $\displaystyle \lim_{\|h\|\to0}\sup_{t\in[0,T]}\|f_t(\cdot+h)-f_t(\cdot)\|_{L^1(\R^d)}=0$.
\end{enumerate}
Then for every $\varepsilon>0$ there exist $K\in\N$, $\sigma^2>0$ and centers $\{\mu_s\}_{s\in[K]}\subset\R^d$ such that, for each $t\in[0,T]$, one can choose $\alpha(t)\in\Delta^{K-1}$ with
\[
\sup_{t\in[0,T]}
\left\|
f_t(\cdot)-\sum_{s=1}^K \alpha_s(t)\,\mathcal N\left(\cdot\mid m_s,\sigma^2\mathsf{Id}\right)
\right\|_{L^1(\R^d)}
<\varepsilon,
\]
If, in addition, $t\mapsto f_t$ is continuous in $L^1$, then $t\mapsto\alpha(t)$ can be chosen continuous.
\end{theorem}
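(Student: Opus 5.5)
Three uniform approximations do the work: smoothing by a Gaussian, truncation to a ball, and discretisation of the convolution integral by a Riemann sum on a fixed grid. Write $g_\sigma\coloneqq\mathcal N(\cdot\mid 0,\sigma^2\mathsf{Id})$.

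\emph{Step 1: smoothing.} By Minkowski's integral inequality,
\[
\|f_t*g_\sigma-f_t\|_{L^1}\le\int g_\sigma(h)\,\|f_t(\cdot-h)-f_t\|_{L^1}\diff h .
\]
Split the integral at $\|h\|=\delta$. By assumption \ref{item:equicont}, the part with $\|h\|\le\delta$ is small uniformly in $t$ once $\delta$ is chosen small. On $\|h\|>\delta$ the integrand is bounded by $2\int_{\|h\|>\delta}g_\sigma$, which tends to $0$ as $\sigma\to0$. Fix $\sigma$ so that $\sup_t\|f_t*g_\sigma-f_t\|_{L^1}<\varepsilon/3$.

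\emph{Step 2: truncation.} Use assumption \ref{item:tightness} to choose $R$ with $\int_{\|x\|>R}f_t<\varepsilon/6$ for all $t$. Set $\tilde f_t\coloneqq f_t\mathbf 1_{B_R}/c_t$, where $c_t\coloneqq\int_{B_R}f_t\in(1-\varepsilon/6,1]$. Then $\|\tilde f_t-f_t\|_{L^1}\le 2(1-c_t)<\varepsilon/3$. Convolution with $g_\sigma$ is an $L^1$ contraction, so $\|\tilde f_t*g_\sigma-f_t*g_\sigma\|_{L^1}<\varepsilon/3$.

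\emph{Step 3: discretisation.} Partition $B_R$ into finitely many measurable cells $Q_1,\dots,Q_K$ of diameter at most $\delta'$, and let $m_s$ be a point of $Q_s$. These $m_s$ are the centres (the $\mu_s$ of the statement). Define the weights
\[
\alpha_s(t)\coloneqq\int_{Q_s}\tilde f_t .
\]
They are nonnegative and sum to $1$, so $\alpha(t)\in\Delta^{K-1}$. Then
\[
\Bigl\|\tilde f_t*g_\sigma-\sum_s\alpha_s(t)g_\sigma(\cdot-m_s)\Bigr\|_{L^1}\le\sum_s\int_{Q_s}\tilde f_t(y)\,\|g_\sigma(\cdot-y)-g_\sigma(\cdot-m_s)\|_{L^1}\diff y\le\omega_\sigma(\delta'),
\]
where $\omega_\sigma(\delta')\coloneqq\sup_{\|h\|\le\delta'}\|g_\sigma(\cdot-h)-g_\sigma\|_{L^1}$. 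This modulus tends to $0$ as $\delta'\to0$ by $L^1$-continuity of translation for the single fixed function $g_\sigma$.

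This is the crux. Once $\sigma$ is frozen, the bound involves only the fixed Gaussian and not $f_t$, so it is automatically uniform in $t$. Choose $\delta'$ with $\omega_\sigma(\delta')<\varepsilon/3$, which fixes $K$. The triangle inequality over Steps 1–3 then gives a bound of at most $\varepsilon$ for every $t$. To get the strict inequality in the statement, run the whole argument with $\varepsilon/2$ in place of $\varepsilon$.

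\emph{Continuity.} Suppose $t\mapsto f_t$ is continuous in $L^1$. Then $t\mapsto c_t$ is continuous, and $c_t$ is bounded away from $0$. Hence $t\mapsto\tilde f_t$ is continuous in $L^1$, and each $\alpha_s(t)$ is continuous because $|\alpha_s(t)-\alpha_s(t')|\le\|\tilde f_t-\tilde f_{t'}\|_{L^1}$. This explicit construction of the weights is why I define them by integrating over cells rather than by solving an optimisation problem: it yields continuity for free.

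The main obstacle is the order of quantifiers. $\sigma$ must be chosen before the grid, because the discretisation error depends on $\sigma$. The equicontinuity assumption is what allows that choice to be made uniformly in $t$.
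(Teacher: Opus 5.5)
Your proof is correct and follows essentially the same route as the paper. Both arguments use three steps: Gaussian mollification made uniform in $t$ by the equicontinuity assumption, truncation to $B_R$ via tightness, and a Riemann-sum discretisation of the convolution on a fixed partition of $B_R$. Both also use the same weights $\alpha_s(t)=\int_{Q_s}f_t/\int_{B_R}f_t$ and the same continuity argument. The differences are cosmetic: you normalise before convolving, and you bound the translation error by the abstract modulus $\omega_\sigma(\delta')$, whereas the paper normalises at the end and uses the explicit bound $\|\nabla\varphi_\sigma\|_{L^1}\,h\le C_d\,h/\sigma$.
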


Since Gaussian location mixtures with common variance form a subclass of the shared-dictionary Gaussian mixture model introduced in \Cref{sec:ourmodel}, this result provides theoretical support for the approximation capacity of the proposed representation.

\begin{theorem}[Finite-sample stability]\label{thm:weights_rate}
Fix $t_i\in\tau$ and consider the quadratic program \eqref{eq:local_qp}, with solution $\widehat\alpha_i$ based on the sample $(X_{t_i,j})_{j=1}^{N_i}$.
Let $J_i^\star\in\R^K$ be defined by
\[
(J_i^\star)_s \;\coloneqq\; \E\left[J_{i,s,1}\right],
\qquad
J_{i,s,1}
=
\int k_i(X_{t_i,1},y)\,\mathcal{N}(y\mid m_s,\Sigma_s)\diff y,
\qquad s\in[K],
\]
and, in addition, define $\alpha_i^\star$ by
\[
\alpha_i^\star
\;\coloneqq\;
\operatorname{argmin}_{\alpha\in\Delta^{K-1}}
\left\{
\alpha^\top I_i \alpha - 2\alpha^\top J_i^\star + \sum_{s=1}^K \lambda_s \alpha_s^2
\right\}.
\]
Assume \(\lambda_{\min}\coloneqq \min_{s\in[K]}\lambda_s>0\) and \(\sup_x k_i(x,x)\le 1\). Then, for any \(\delta\in(0,1)\), with probability at least \(1-\delta\),
\[
\|\widehat\alpha_i-\alpha_i^\star\|_2
\;\le\;
\frac{1}{\lambda_{\min}}
\sqrt{\frac{K\log(2K/\delta)}{2N_i}}.
\]
Moreover, with probability at least \(1-\delta\),
\[
\max_{0\le i\le m}\|\widehat\alpha_i-\alpha_i^\star\|_2
\;\le\;
\frac{1}{\lambda_{\min}}
\sqrt{\frac{K\log\bigl(2K(m+1)/\delta\bigr)}{2N_*}},
\qquad
N_*=\min_{0\le i\le m} N_i.
\]
\end{theorem}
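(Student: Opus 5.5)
The proof is a perturbation argument for strongly convex quadratic programs over the simplex, combined with a Hoeffding bound on the random linear term. Write the objective as
\[
Q_J(\alpha)=\alpha^\top (I_i+\Lambda)\alpha-2\alpha^\top J,\qquad \Lambda=\operatorname{diag}(\lambda_1,\dots,\lambda_K).
\]
Then $\widehat\alpha_i$ minimizes $Q_{J_i}$ and $\alpha_i^\star$ minimizes $Q_{J_i^\star}$ over the convex set $\Delta^{K-1}$.

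\textbf{Step 1: deterministic stability.} The matrix $I_i$ is a Gram matrix, since $(I_i)_{s,r}=\langle m_{\mathcal N_s},m_{\mathcal N_r}\rangle_{\mathcal H}$. It is therefore positive semidefinite, and $I_i+\Lambda\succeq \lambda_{\min}\mathsf{Id}$, so $Q_J$ is $2\lambda_{\min}$-strongly convex. Write down the first-order optimality conditions for both problems:
\[
\langle 2(I_i+\Lambda)\widehat\alpha_i-2J_i,\;\beta-\widehat\alpha_i\rangle\ge0,\qquad
\langle 2(I_i+\Lambda)\alpha_i^\star-2J_i^\star,\;\beta-\alpha_i^\star\rangle\ge0,
\]
valid for all $\beta\in\Delta^{K-1}$. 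Take $\beta=\alpha_i^\star$ in the first and $\beta=\widehat\alpha_i$ in the second, then add. This gives
\[
(\widehat\alpha_i-\alpha_i^\star)^\top(I_i+\Lambda)(\widehat\alpha_i-\alpha_i^\star)\le \langle J_i-J_i^\star,\widehat\alpha_i-\alpha_i^\star\rangle.
\]
Hence $\lambda_{\min}\|\widehat\alpha_i-\alpha_i^\star\|_2^2\le\|J_i-J_i^\star\|_2\|\widehat\alpha_i-\alpha_i^\star\|_2$, and so
\[
\|\widehat\alpha_i-\alpha_i^\star\|_2\le \lambda_{\min}^{-1}\|J_i-J_i^\star\|_2 .
\]
In other words, the solution map is $\lambda_{\min}^{-1}$-Lipschitz in the linear term.

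\textbf{Step 2: concentration of $J_i$.} Each coordinate is an empirical mean, $(J_i)_s=N_i^{-1}\sum_j J_{i,s,j}$ with $J_{i,s,j}=\int k_i(X_{t_i,j},y)\mathcal N(y\mid m_s,\Sigma_s)\diff y$. Because $0\le k_i\le1$ (using $\sup k_i(x,x)\le1$ and Cauchy–Schwarz, $|k_i(x,y)|\le\sqrt{k_i(x,x)k_i(y,y)}$; the Gaussian kernel is nonnegative), each summand lies in $[0,1]$.

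I will treat the $X_{t_i,j}$ as i.i.d. from $\mu_{t_i}$ at a fixed time. This is implicit in the statement, since otherwise Hoeffding does not apply; I will flag it as the assumption used. Under it, Hoeffding gives
\[
\mathbb P\bigl(|(J_i)_s-(J_i^\star)_s|>u\bigr)\le 2e^{-2N_iu^2}.
\]
A union bound over the $K$ coordinates with $u=\sqrt{\log(2K/\delta)/(2N_i)}$ shows that, with probability at least $1-\delta$, $\|J_i-J_i^\star\|_\infty\le u$. Then $\|J_i-J_i^\star\|_2\le\sqrt K\,u$, and combining with Step 1 yields the first bound.

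\textbf{Step 3: uniform bound over the grid.} Replace $\delta$ by $\delta/(m+1)$ and take a union bound over $i=0,\dots,m$. Since the bound is monotone in $N_i$, each $N_i$ can be replaced by $N_*$, which gives the second display.

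The only delicate point is Step 1. The key is to use the variational inequalities rather than unconstrained closed forms, because the simplex constraint may be active. One must also observe that $I_i$ does not depend on the sample, so only $J$ is perturbed, and that the positive semidefiniteness of $I_i$ is what makes $\lambda_{\min}$ alone control the modulus.
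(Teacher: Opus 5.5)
Your proposal is correct and follows the paper's proof essentially step for step. The paired variational inequalities with $I_i+\mathrm{diag}(\lambda)\succeq\lambda_{\min}\mathsf{Id}$ give the $\lambda_{\min}^{-1}$-Lipschitz dependence on $J$, and Hoeffding with union bounds over $s\in[K]$ and then over $i$ gives both displays; your explicit flag that the within-time samples must be i.i.d. is apt, since the paper simply asserts this when invoking Hoeffding.
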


\begin{remark}\label{rem:translate}
For fixed $(m_s,\Sigma_s)_{s\in[K]}$, the mixture density is linear in the weights. Hence
\[
\|f_{\widehat\theta_i}-f_{\theta_i^\star}\|_{L^1(\R^d)}
\le \|\widehat\alpha_i-\alpha_i^\star\|_1
\le \sqrt{K}\,\|\widehat\alpha_i-\alpha_i^\star\|_2.
\]
Moreover, for the component.wise distribution function defined in \eqref{eqn:eq1}, we have
\[
|F_{\widehat\theta_i}(x)-F_{\theta_i^\star}(x)|
\le \|\widehat\alpha_i-\alpha_i^\star\|_1\qquad\text{for all }x\in\R^d.
\]
\end{remark}

\subsection*{Proofs}

\begin{proof}[Proof of \cref{thm:uniform_shared_dictionary}]
Fix $\varepsilon>0$. By assumption \ref{item:tightness}, choose $R<\infty$ such that
\[\sup_{t\in[0,T]}\int_{\|x\|>R} f_t(x)\diff x<\varepsilon/4.\]
Let \(\varphi_\sigma\) be the Gaussian mollifier and set \(g_t=f_t\ast\varphi_\sigma\). By the approximate-identity property and assumption \ref{item:equicont}, for \(\sigma\) sufficiently small we have
\[
\sup_{t\in[0,T]}\|f_t-g_t\|_{L^1(\R^d)}<\varepsilon/4.
\]
Fix such a value of \(\sigma\). Partition \(B_R=\{x\in\R^d:\|x\|\le R\}\) into finitely many sets \((C_s)_{s=1}^K\) of diameter at most \(h\), and pick \(m_s\in C_s\). Define
\[
\tilde\alpha_s(t)=\int_{C_s} f_t(y)\diff y,
\qquad
Z_t=\sum_{s=1}^K \tilde\alpha_s(t)=\int_{B_R} f_t(y)\diff y \in [1-\varepsilon/4,1],
\]
and then set $\alpha_s(t)=\tilde\alpha_s(t)/Z_t$. We decompose
\[
g_t(x)=\sum_{s=1}^K\int_{C_s} f_t(y)\varphi_\sigma(x-y)\diff y
+\int_{B_R^c} f_t(y)\varphi_\sigma(x-y)\diff y.
\]
The last term has \(L^1\)-norm bounded by \(1-Z_t\le \varepsilon/4\). For the first term, use
\[
\|\varphi_\sigma(\cdot-y)-\varphi_\sigma(\cdot-m_s)\|_{L^1}
\le
\|\nabla\varphi_\sigma\|_{L^1}\|y-m_s\|
\le
C_d\,\frac{h}{\sigma},
\]
to obtain
\[
\sup_{t\in[0,T]}
\left\|
g_t-\sum_{s=1}^K \tilde\alpha_s(t)\varphi_\sigma(\cdot-m_s)
\right\|_{L^1}
\le
\varepsilon/4 + C_d\,\frac{h}{\sigma}.
\]
Passing from \(\tilde\alpha\) to \(\alpha\) changes the \(L^1\)-error by at most \(|1-Z_t|\le \varepsilon/4\), since \(\|\varphi_\sigma\|_{L^1}=1\). Choosing \(h\le \sigma\varepsilon/(4C_d)\) and combining the bounds yields
\[
\sup_{t\in[0,T]}
\left\|
f_t(\cdot)-\sum_{s=1}^K \alpha_s(t)\,\varphi_\sigma(\cdot-m_s)
\right\|_{L^1(\R^d)}
<\varepsilon.
\]
Since \(\varphi_\sigma(\cdot-m_s)\) is the density of \(\mathcal N(m_s,\sigma^2 \mathsf{Id})\), this proves the approximation claim.

Finally, if \(t\mapsto f_t\) is continuous in \(L^1(\R^d)\), then each map
\[
t\mapsto \tilde\alpha_s(t)=\int_{C_s} f_t(y)\,\mathrm{d}y
\]
is continuous, because
\[
|\tilde\alpha_s(t)-\tilde\alpha_s(t')|
\le
\int_{C_s}|f_t(y)-f_{t'}(y)|\,\mathrm{d}y
\le
\|f_t-f_{t'}\|_{L^1(\R^d)}.
\]
Since \(Z_t\ge 1-\varepsilon/4>0\), the normalization \(t\mapsto \alpha_s(t)=\tilde\alpha_s(t)/Z_t\) is also continuous.
\end{proof}

\begin{proof}[Proof of \cref{thm:weights_rate}]
Let $A_i\coloneqq I_i+\mathrm{diag}(\lambda)$, which satisfies $A_i\succeq\lambda_{\min}\mathsf{Id}$  because $I_i\succeq0$ and $\lambda_{\min}>0$. The optimality conditions for minimizers over $\Delta^{K-1}$ give
\[
\langle 2A_i\widehat\alpha_i-2J_i,\ \alpha-\widehat\alpha_i\rangle\ge 0,\qquad
\langle 2A_i\alpha_i^\star-2J_i^\star,\ \alpha-\alpha_i^\star\rangle\ge 0,
\qquad \text{for all } \alpha\in\Delta^{K-1}.
\]
Taking \(\alpha=\alpha_i^\star\) in the first inequality and \(\alpha=\widehat\alpha_i\) in the second, and then adding, yields
\[
\langle A_i(\widehat\alpha_i-\alpha_i^\star),\,\widehat\alpha_i-\alpha_i^\star\rangle
\le
\langle J_i-J_i^\star,\,\widehat\alpha_i-\alpha_i^\star\rangle.
\]
Using \(A_i\succeq \lambda_{\min}\mathsf{Id}\) and Cauchy--Schwarz, we obtain
\[
\|\widehat\alpha_i-\alpha_i^\star\|_2
\le
\lambda_{\min}^{-1}\|J_i-J_i^\star\|_2.
\]

For each \(s\in[K]\), define
\[
W_{i,s,j}
\;\coloneqq\;
\int k_i(X_{t_i,j},y)\,\mathcal N(y\mid m_s,\Sigma_s)\,\mathrm{d}y.
\]
Then \((J_i)_s = N_i^{-1}\sum_{j=1}^{N_i} W_{i,s,j}\), and the variables \(W_{i,s,j}\) are i.i.d. with mean \((J_i^\star)_s\). Since \(0\le k_i(x,y)\le 1\) and \(\mathcal N(\cdot\mid m_s,\Sigma_s)\) integrates to \(1\), we have $0\le W_{i,s,j}\le 1.$

By Hoeffding's inequality and a union bound over \(s\in[K]\), with probability at least \(1-\delta\),
\[
\max_{s\in[K]} |(J_i)_s-(J_i^\star)_s|
\le
\sqrt{\frac{\log(2K/\delta)}{2N_i}}.
\]
Therefore,
\[
\|J_i-J_i^\star\|_2
\le
\sqrt{K}\,\max_{s\in[K]} |(J_i)_s-(J_i^\star)_s|
\le
\sqrt{\frac{K\log(2K/\delta)}{2N_i}}.
\]
Combining the last display with the previous bound yields
\[
\|\widehat\alpha_i-\alpha_i^\star\|_2
\le
\frac{1}{\lambda_{\min}}
\sqrt{\frac{K\log(2K/\delta)}{2N_i}},
\]
which proves the first claim.

For the uniform-in-\(i\) bound, apply the same argument together with a union bound over \(i=0,\dots,m\), and use \(N_*=\min_{0\le i\le m} N_i\).
\end{proof}


\section{Inference based on estimated weight trajectories}\label{sec:inference}

We now turn to statistical inference and describe how the estimated mixture-weight trajectories can be used to compare treatment arms in a randomized clinical trial. For simplicity, suppose that there are two study arms, indexed by \(a\in\{0,1\}\), where \(a=0\) denotes the control group and \(a=1\) the treatment group. Let \(n_a\) be the number of subjects in the arm \(a\), with the total sample size \(n=n_0+n_1\), and assume that the subjects are independent within and between groups.

For subject \(p\in[n_a]\) in arm \(a\), let
\[
\hat{\alpha}_k^{(p,a)}(t), \qquad k\in[K], \quad t\in[0,T],
\]
denote the estimated weight trajectory of the mixture component \(k\). At any fixed time \(t\), the quantity \(\hat{\alpha}_k^{(p,a)}(t)\) represents the estimated contribution of component \(k\) to the subject-specific latent distribution. Consequently, the trajectory \(t\mapsto \hat{\alpha}_k^{(p,a)}(t)\) summarizes how the subject's distributional profile evolves over time. Comparing these trajectories across treatment arms provides a natural and interpretable way to assess treatment-related distributional differences, while also allowing for heterogeneity within each arm.

Let \(0=t_1<\cdots<t_m\in[0,T]\) be a common grid of time points at which inference is made. For each component \(k\in[K]\) and time point \(t_j\), we consider the two samples
\[
\mathcal X_{k,j}
=
\left\{\hat{\alpha}_k^{(p,0)}(t_j):p\in[n_0]\right\},
\qquad
\mathcal Y_{k,j}
=
\left\{\hat{\alpha}_k^{(q,1)}(t_j):q\in[n_1]\right\},
\]
\noindent corresponding to the estimated weights of the component \(k\) in the control and treatment groups, respectively.

To formalize the comparison, let \(F_{k,a}^{(j)}\) denote the distribution of the random mixture coefficient \(\alpha_k(t_j)\) in arm \(a\). For each \((k,j)\), we test the pointwise null hypothesis
\[
H_{0,k,j}: \; F_{k,0}^{(j)} = F_{k,1}^{(j)}.
\]
Equivalently,
\[
H_{0,k,j}:\;
\mathcal L\!\left(\alpha_k(t_j)\mid a=0\right)
=
\mathcal L\!\left(\alpha_k(t_j)\mid a=1\right).
\]
Thus, under \(H_{0,k,j}\), the distribution of the weight of the \(k\)th mixture at time \(t_j\) is the same in the two study arms. This is a fully distributional hypothesis, not merely a comparison of means, and is therefore sensitive to differences in spread, skewness, or multimodality in addition to location shifts. In practice, inference is based on plug-in estimates \(\hat{\alpha}_k^{(p,a)}(t_j)\).

\paragraph{Two-sample MMD statistic.}
Fix \(k\in[K]\) and \(t_j\in[0,T]\), and write
\[
X_p=\hat{\alpha}_k^{(p,0)}(t_j), \qquad p=1,\dots,n_0,
\]
and
\[
Y_q=\hat{\alpha}_k^{(q,1)}(t_j), \qquad q=1,\dots,n_1.
\]

\noindent To compare the distributions of \(\{X_p\}^{n_0}_{p=1}\) and \(\{Y_q\}^{n_1}_{q=1}\), we use the maximum mean discrepancy (MMD) \cite{gretton2012kernel} with the Gaussian radial basis function kernel

\[
\kappa_\sigma(x,y)
=
\exp\!\left(-\frac{\|x-y\|_2^2}{2\sigma^2}\right),
\]
\noindent where the bandwidth \(\sigma>0\) is selected using the median heuristic on the pooled sample
\(\mathcal X_{k,j}\cup\mathcal Y_{k,j}\).

The empirical MMD in V-statistic form is
\[
\widehat{\mathrm{MMD}}_{k,j}^2
=
\frac{1}{n_0^2}\sum_{p=1}^{n_0}\sum_{p'=1}^{n_0}\kappa_\sigma(X_p,X_{p'})
+
\frac{1}{n_1^2}\sum_{q=1}^{n_1}\sum_{q'=1}^{n_1}\kappa_\sigma(Y_q,Y_{q'})
-
\frac{2}{n_0n_1}\sum_{p=1}^{n_0}\sum_{q=1}^{n_1}\kappa_\sigma(X_p,Y_q).
\]

\noindent We use the scaled statistic
\[
T_{k,j}
\coloneqq
\frac{n_0n_1}{n_0+n_1}\,\widehat{\mathrm{MMD}}_{k,j}^2.
\]

\noindent Large values of \(T_{k,j}\) indicate stronger evidence against \(H_{0,k,j}\), corresponding to a greater discrepancy between the two arm-specific distributions of \(\alpha_k(t_j)\).

\paragraph{Wild bootstrap calibration.}

 The null distribution of $T_{k,j}$ is generally not available in closed form, especially in finite samples, and remains difficult to derive even asymptotically under dependence.  We therefore approximate it using a multiplier (wild) bootstrap \cite{chwialkowski2014wild}, following \cite{leucht2013dependent}.
. Specifically, for each \((k,j)\), we generate bootstrap replicates of $B$.
\[
T_{k,j}^{*(1)},\dots,T_{k,j}^{*(B)},
\]
\noindent under \(H_{0,k,j}\), where \(B\) is chosen sufficiently large. The bootstrap \(p\)-value is then
\[
\hat p_{k,j}
=
\frac{1}{B}\sum_{b=1}^B
\mathbf 1\!\left\{T_{k,j}^{*(b)}\ge T_{k,j}\right\}.
\]
Hence, \(\hat p_{k,j}\) is the proportion of bootstrap replicas that is at least as large as the observed test statistic. Small values of \(\hat p_{k,j}\) indicate that the observed discrepancy between treatment arms would be unlikely under the null. Consequently, for a nominal significance level \(\alpha\), we reject \(H_{0,k,j}\) whenever
\[
\hat p_{k,j}<\alpha.
\]

\noindent The appeal of the wild bootstrap in this setting is that it perturbs the statistic through auxiliary mean-zero multipliers while keeping the observed sample fixed. This is particularly convenient for kernel-based statistics such as MMD and is also well suited to extensions in which weak dependence must be taken into account.

\paragraph{Relation to permutation calibration.}
When observations are independent and identically distributed among subjects at a fixed time point \(t_j\), permutation calibration is also valid and may be used as a simpler alternative to calibrate an exact test statistic \cite{lehmann2005testing}. We prefer the wild bootstrap because it extends more naturally to settings in which a dependence-aware calibration is desired.

\paragraph{Temporal interpretation.}
Repeating the above test on the time grid \(t_1,\dots,t_m\) yields, for each component \(k\), a sequence of values \(p\).
\[
\{\hat p_{k,j}:j=1,\dots,m\}.
\]

\noindent Plotting these values as a function of time produces a significance curve that indicates when the treatment and control groups differ in the distribution of the weight of the \(k\)th mixture component. In this way, the trajectory \(t\mapsto \alpha_k(t)\) serves as an interpretable temporal marker of distributional differences between treatment arms.

\newpage
\section{Simulation study}\label{sec:simulations_DH2}
Below, we describe the synthetic data, the competing methods against which we benchmark our proposed approach, and the simulation results.

\paragraph{Data-generating process.}
Fix $T=1$ and $d\ge1$. The target density is a 3-component Gaussian mixture with time-varying means and a common time-varying variance:
\begin{equation}\label{eq:mixture}
  f_t(x)
  = \frac{1}{3}\sum_{s=1}^{3}
      \mathcal{N}\bigl(x \mid m_s(t),\,\sigma^{2}(t)\,\mathsf{Id}\bigr),
  \qquad (t,x)\in[0,1]\times\R^d,
\end{equation}
where  $\sigma^{2}(t) = 1 + t,$ and 
\begin{equation}\label{eq:means}
  m_1(t) = -2 + 20t,\qquad
  m_2(t) = 16t,\qquad
  m_3(t) = 5 + 6t.
\end{equation}
If \(d \ge 2\), then each \(m_s(t)\in\R^d\) has identical coordinates given by
\eqref{eq:means}. This data-generating process design captures both multimodal and unimodal regimes over time. 

We evaluate the models on the regular grid $t_i=i/10$ for $i=0,\dots,10$ (yielding $m+1=11$ time points) and generate $B=100$ independent replicates. At each $t_i \in \tau_m$, we draw $n$ independent observations, with sample sizes
\[
n \in \{20,\,50,\,100,\,200,\,300,\,500\}.
\]
This simulation scenario is intentionally more general than the working model in \Cref{sec:ourmodel}, because both the component means and the variance vary with time. It is included to evaluate how well the method approximates smoothly evolving distributions beyond the ideal shared-dictionary setting.

\subsection*{Competitors}

We compare the proposed estimator with three baselines for estimating the time-indexed density $f_t$ from snapshot samples observed on the discrete time grid:\newline
(i) a univariate generalized additive model for location, scale, and shape (GAMLSS) \cite{rigby2005generalized};\newline
(ii) a time-conditional kernel density estimator (KDE), see \cite{silverman1986density,chacon2018multivariate,Tsybakov2008Introduction};\newline
(iii) a conditional masked autoregressive flow (MAF), see \cite{Papamakarios2017Masked,Papamakarios2021Normalizing}.

Hyperparameters for KDE and MAF are reported in \Cref{tab:all-hyperparams}.

\paragraph*{Generalized additive models for location, scale, and shape.}
We fit a univariate Gaussian distributional regression model with time-varying mean and variance,
\[
  X_t\sim \mathcal{N}\bigl(m(t),\,\sigma^2(t)\bigr),
\]
where $m(\cdot)$ and $\log\sigma(\cdot)$ are modeled as smooth spline functions of $t$ and estimated using the \texttt{gamlss} package in \textsf{R}, see \cite{rigby2005generalized}. We include this baseline only for $d=1$, as multivariate extensions would require additional strong modeling assumptions on the dependence structure.

\paragraph*{Time-conditional kernel density estimator.}
At each observed time $t_i\in\tau_m$, we estimate $f_{t_i}$ using a KDE:
\[
  f^{\text{KDE}}_{t_i}(x)
  =\frac{1}{N_i}\sum_{j=1}^{N_i} K_{h_i}\left(x-X_{t_i,j}\right),
  \qquad 
  K_{h}(z)=h^{-d}K(z/h),
\]
using a Gaussian kernel $K$ and Scott's bandwidth rule for $h_i$ \cite{silverman1986density,chacon2018multivariate}. 
For intermediate times $t\in[t_i,t_{i+1}]$, we linearly interpolate the endpoint densities:
\[
  f^{\text{KDE}}_{t}(x)\coloneqq(1-\lambda)f^{\text{KDE}}_{t_i}(x)+\lambda f^{\text{KDE}}_{t_{i+1}}(x),
  \qquad 
  \lambda=\frac{t-t_i}{t_{i+1}-t_i}.
\]
As a fully nonparametric method, KDE is inherently affected by the curse of dimensionality, see \cite{Tsybakov2008Introduction}.

\paragraph*{Masked autoregressive flow.}
We model $f_t$ with a conditional normalizing flow $f_\theta(x\mid t)$ built from an invertible map $x=\Phi^\theta(u;t)$ with base noise $u\sim\mathcal{N}(0,\mathsf{Id})$. The conditional density is given by the change-of-variables formula:
\[
  f_\theta(x\mid t)
  =\mathcal{N}\left((\Phi^\theta)^{-1}(x;t)\mid 0,\mathsf{Id}\right)\,
   \left|\det\nabla_x (\Phi^\theta)^{-1}(x;t)\right|.
\]
In MAF (\cite{Papamakarios2017Masked}), the inverse map is autoregressive (implemented via masked networks), rendering the Jacobian triangular and the log-likelihood tractable. Parameters $\theta$ are learned via conditional maximum likelihood over the full sample $\{(t_i,X_{t_i,j})\}_{i,j}$, see  \cite{Papamakarios2021Normalizing}.

\subsection*{Results}

We report results for a low-dimensional ($d=1$) and a higher-dimensional setting ($d=10$).

\paragraph*{Low dimension ($d=1$).} \Cref{fig:l2errsd1} shows that our model is competitive with the baselines. While MAF can achieve slightly smaller $L^2$ errors in certain sample-size regimes, our approach remains accurate while simultaneously providing directly interpretable weight trajectories $\alpha(t)$.

\begin{figure}[h!]
  \centering
    \includegraphics[width=\textwidth]{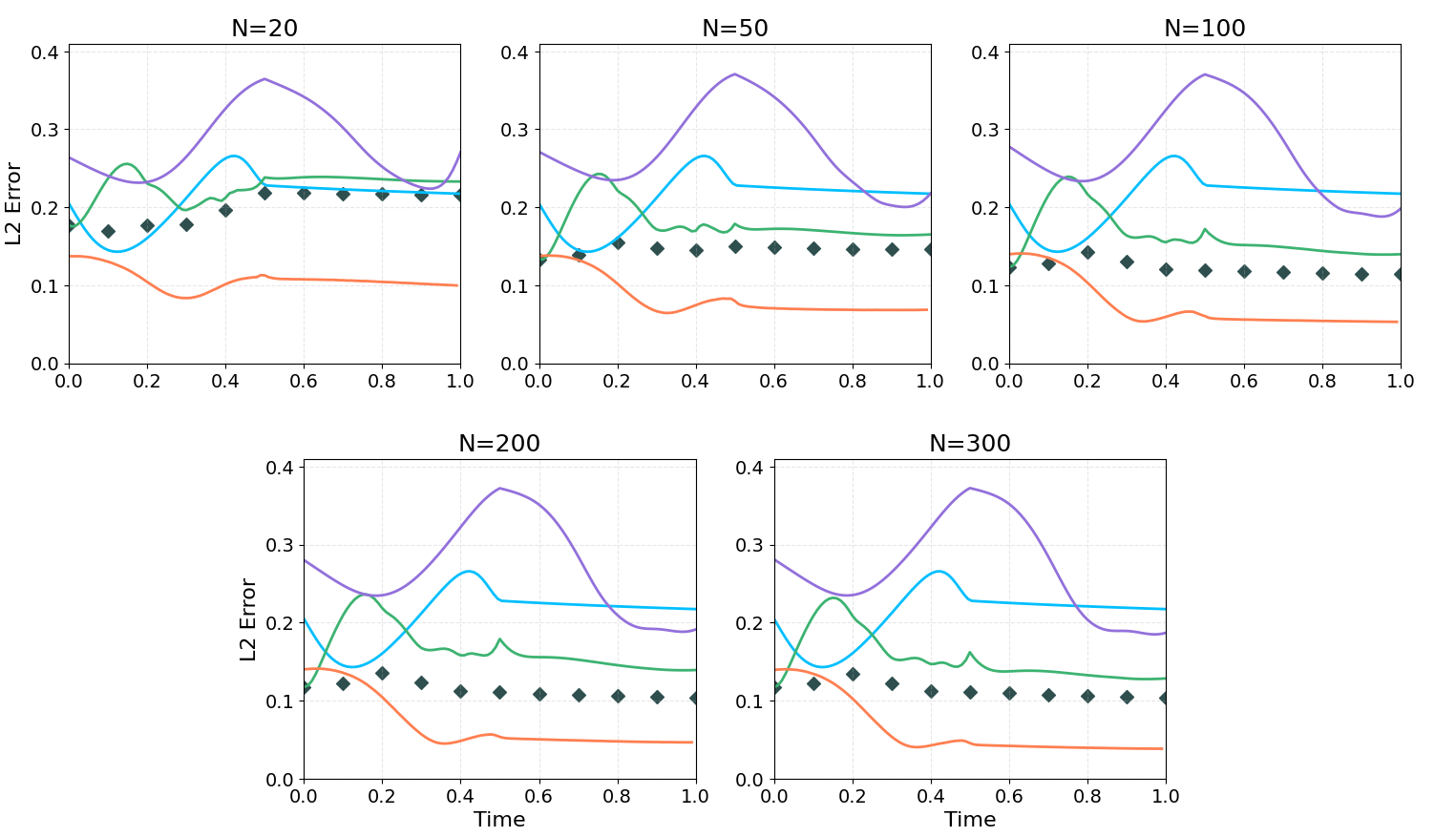}
  \caption{Pointwise $L^2$-error over time for $d=1$. We compare {\color{ForestGreen}our model}, {\color{cyan}KDE}, {\color{orange}MAF}, and {\color{BlueViolet}GAMLSS}. Curves represent averages over $100$ independent runs. Errors corresponding to the discrete-time MMD stage are shown as \rotatebox[origin=c]{45}{$\blacksquare$}. $L^2$-errors are approximated by Monte Carlo integration.}
  \label{fig:l2errsd1}
\end{figure}

\paragraph*{High dimension ($d=10$).}
\Cref{fig:l2errsd10} shows that our model performs better overall. The KDE performs the worst, which is consistent with its well-known sensitivity to high dimensionality \cite{Tsybakov2008Introduction}.

\begin{figure}[h!]
  \centering
    \includegraphics[width=\textwidth]{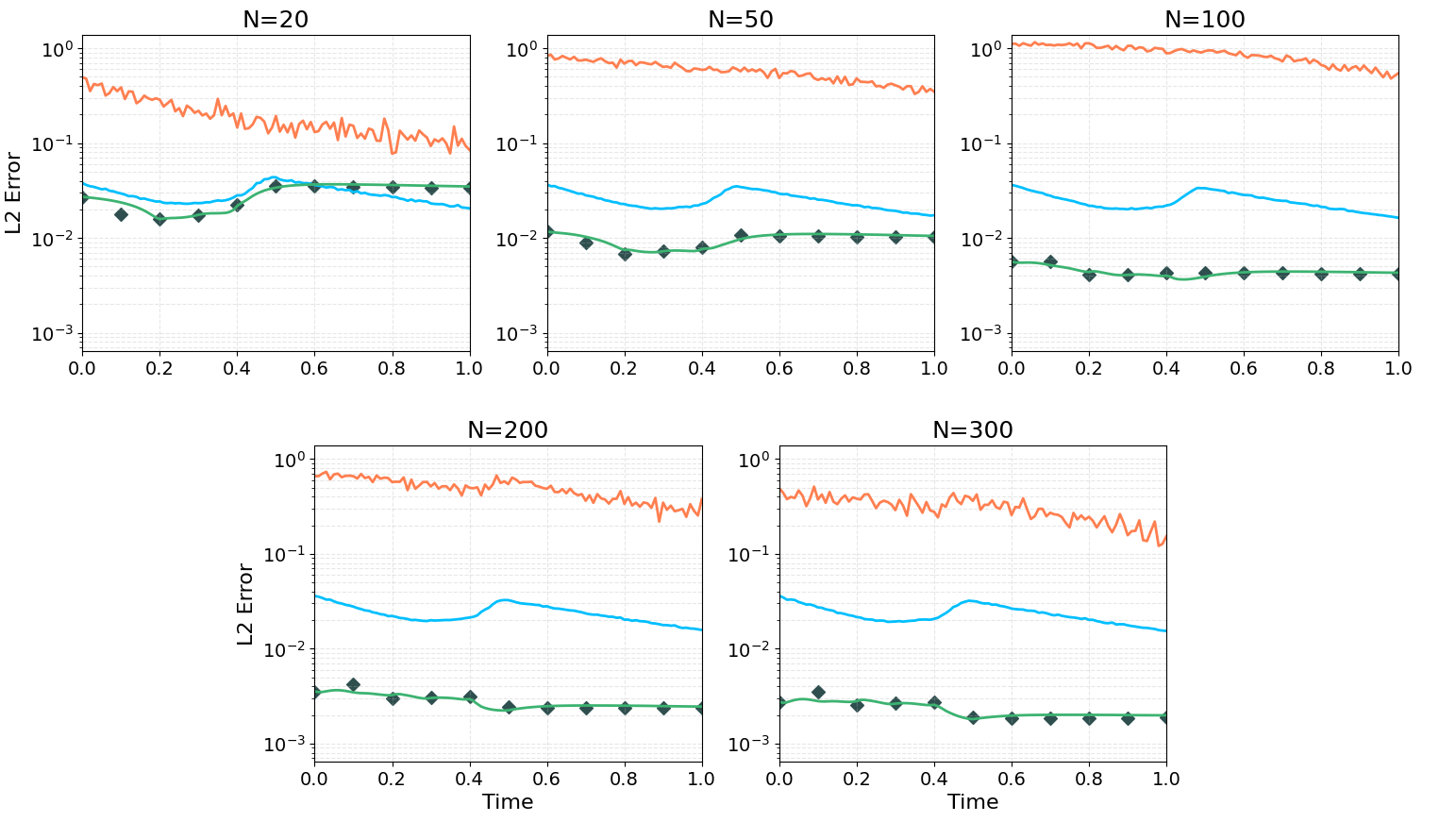}
  \caption{Pointwise $L^2$-error over time for $d=10$. We compare {\color{ForestGreen}our model}, {\color{cyan}KDE} and {\color{orange}MAF}. Curves represent averages over $100$ independent runs. Errors corresponding to the discrete-time MMD stage are shown as \rotatebox[origin=c]{45}{$\blacksquare$}.}
  \label{fig:l2errsd10}
\end{figure}

\newpage

\section{Case study for univariate model with \texorpdfstring{$K=3$.}{K=3.}}

To examine how the analysis changes with fewer mixture components, and thus with reduced model expressiveness, we applied the method to univariate probability distributions ($d=1$) with a smaller number of components, namely $K=3$. Overall, the conclusions differ somewhat: (i) statistical power decreases and the differences become more borderline; (ii) the responder analysis becomes more heterogeneous, with fewer clear differences, likely because more components are needed to adequately capture the complexity of glucose dynamics over time; and (iii) while differences between the densities across treatment arms remain, they appear less substantial than those reported in the main paper. These findings indicate that increasing the number of components in the mixture is important to improve the expressive capacity of the models and to ensure their practical relevance in this type of digital health application.

Taken together, these results support the use of the richer bivariate $K=5$ representation in the main text.

\begin{figure}[h!]
    \centering
    \begin{subfigure}[b]{0.32\textwidth}
        \centering
        \includegraphics[width=\textwidth]{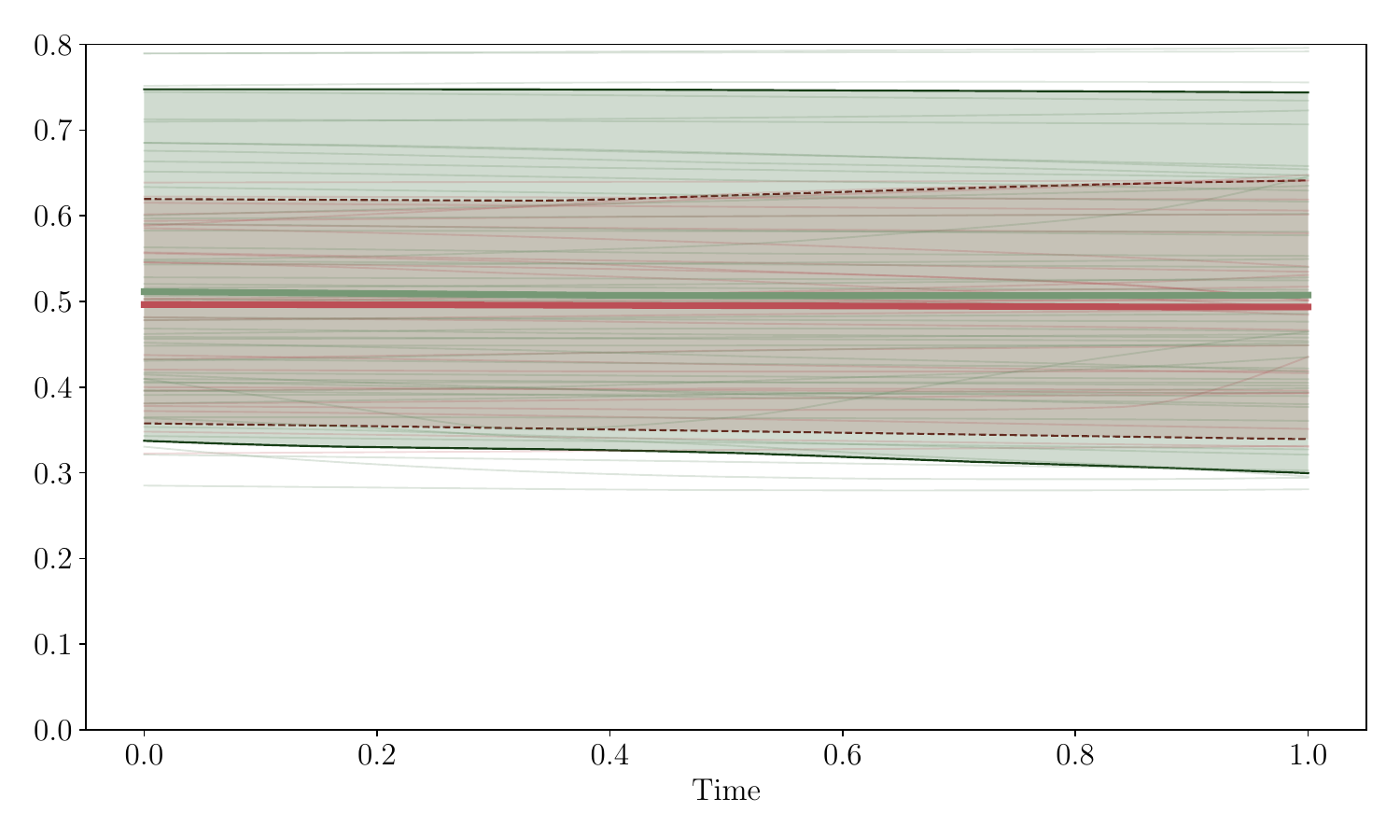}
        \caption{Component 1}
        \label{fig:traj_comp1_d1k3}
    \end{subfigure}
    \hfill
    \begin{subfigure}[b]{0.32\textwidth}
        \centering
        \includegraphics[width=\textwidth]{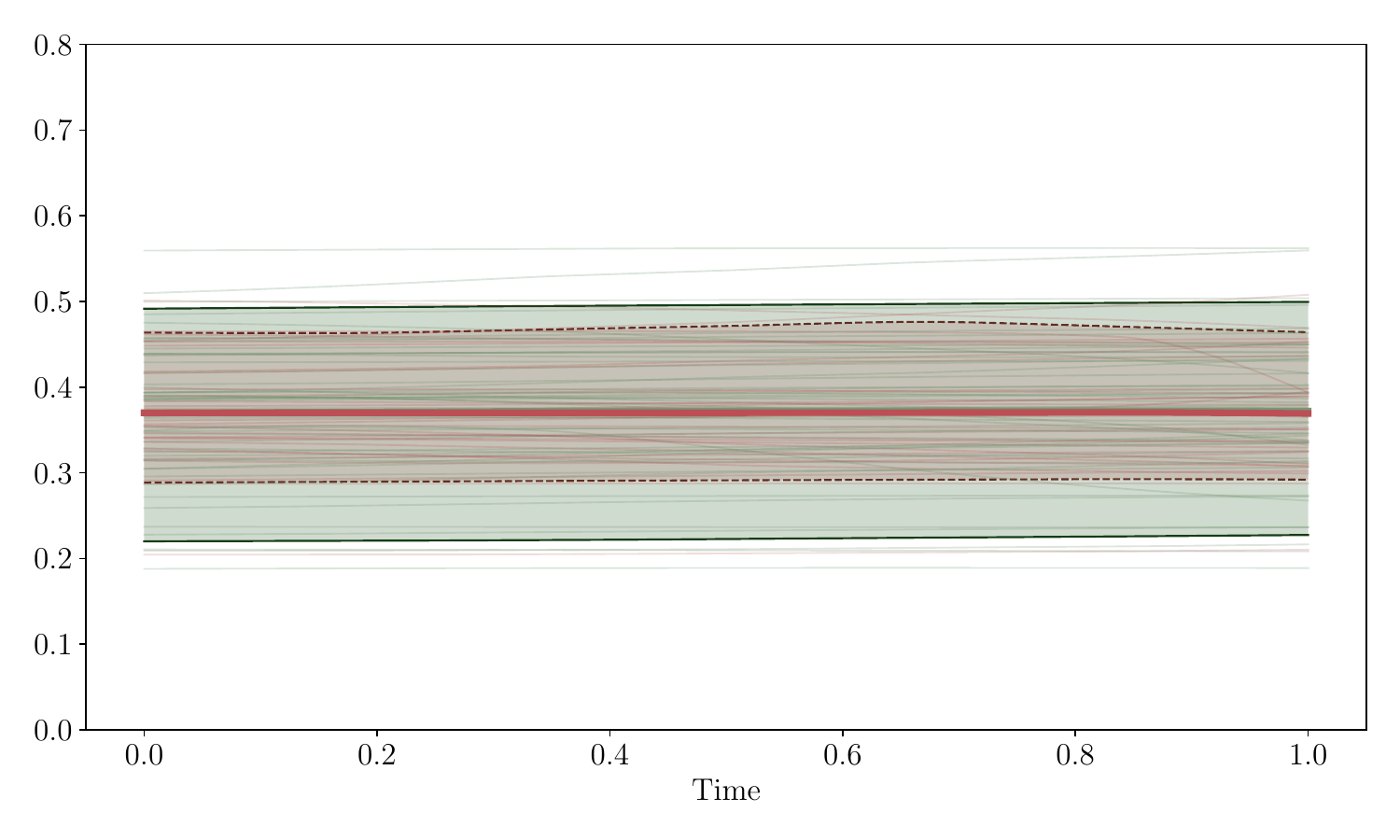}
        \caption{Component 2}
        \label{fig:traj_comp2_d1k3}
    \end{subfigure}
    \hfill
    \begin{subfigure}[b]{0.32\textwidth}
        \centering
        \includegraphics[width=\textwidth]{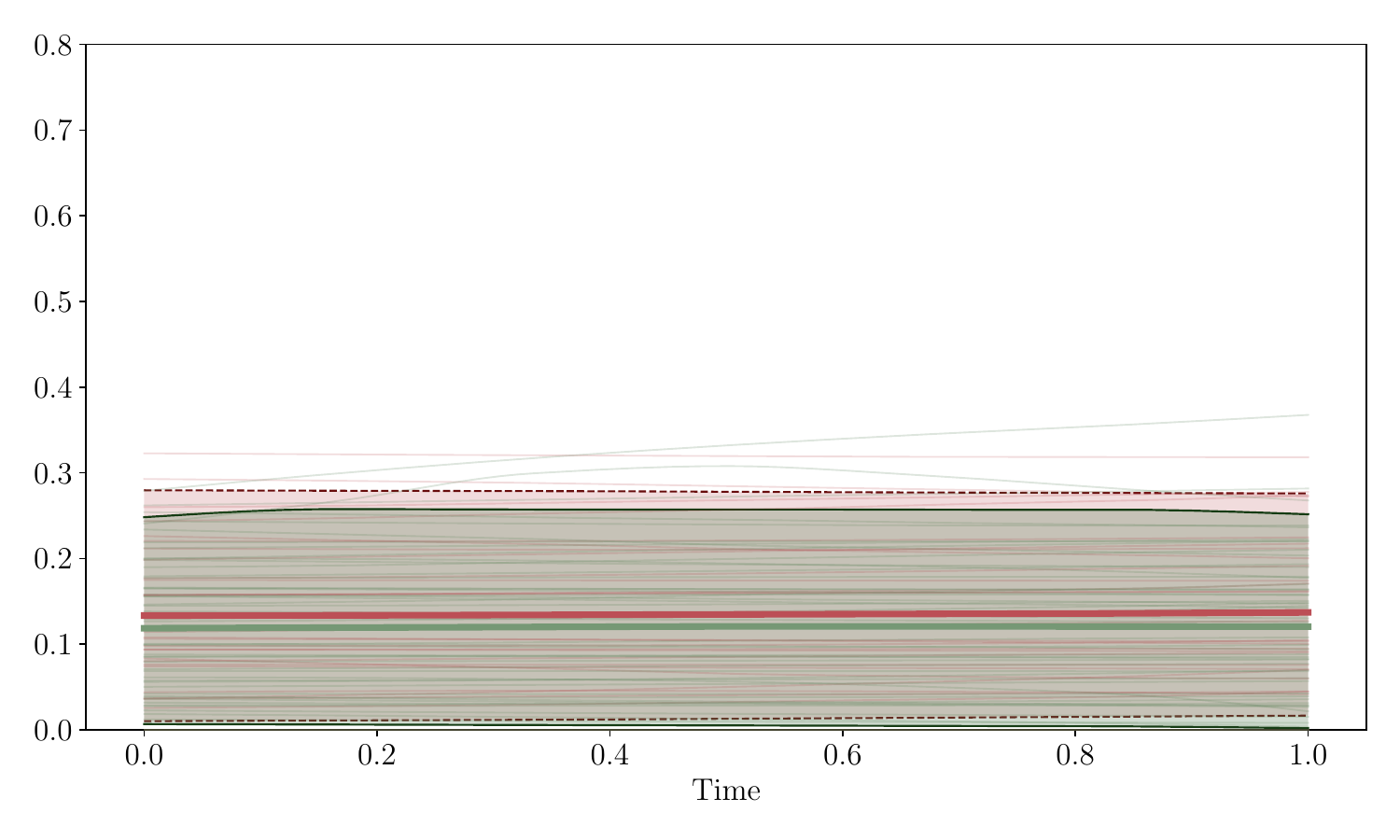}
        \caption{Component 3}
        \label{fig:traj_comp3_d1k3}
    \end{subfigure}

    \caption{Comparison of weight trajectory dynamics between Treatment ({\color{ForestGreen}green}) and Control ({\color{red}red}) groups for the univariate model ($d=1$) with $K=3$ mixture components. Each panel shows the evolution of component weights $\alpha_s(t)$ between weeks 20--26 over normalized time $t \in [0,1]$. Group means are shown as thick dashed lines. The shaded bands represent a statistical envelope around the mean, where darker shading indicates a higher density of trajectories.}
    \label{fig:trajectories_comparison_d1k3}
\end{figure}

\begin{figure}[h!]
    \centering
    \begin{subfigure}[b]{0.32\textwidth}
        \centering
        \includegraphics[width=\textwidth]{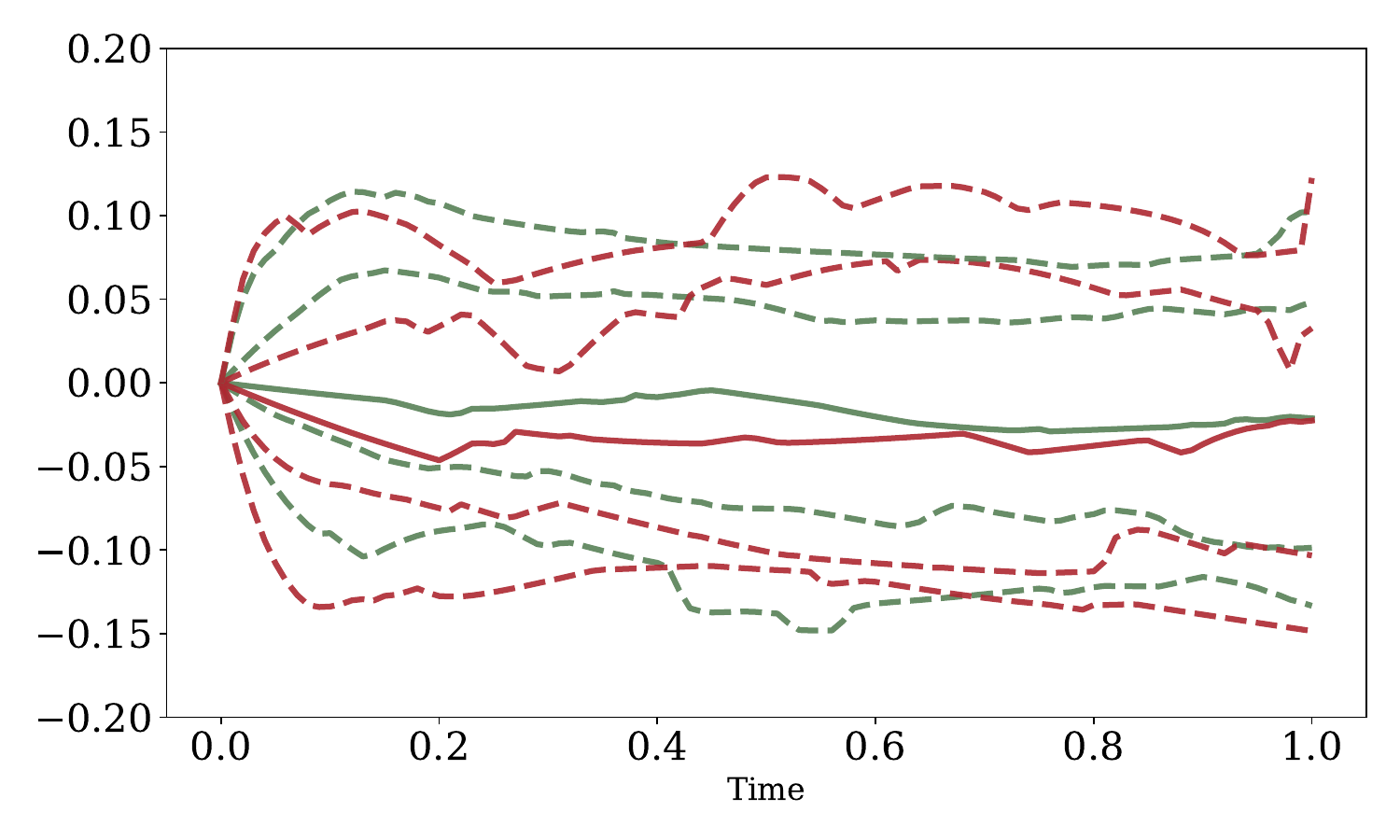}
        \caption{Component 1}
        \label{fig:quant_comp1_d1k3}
    \end{subfigure}
    \hfill
    \begin{subfigure}[b]{0.32\textwidth}
        \centering
        \includegraphics[width=\textwidth]{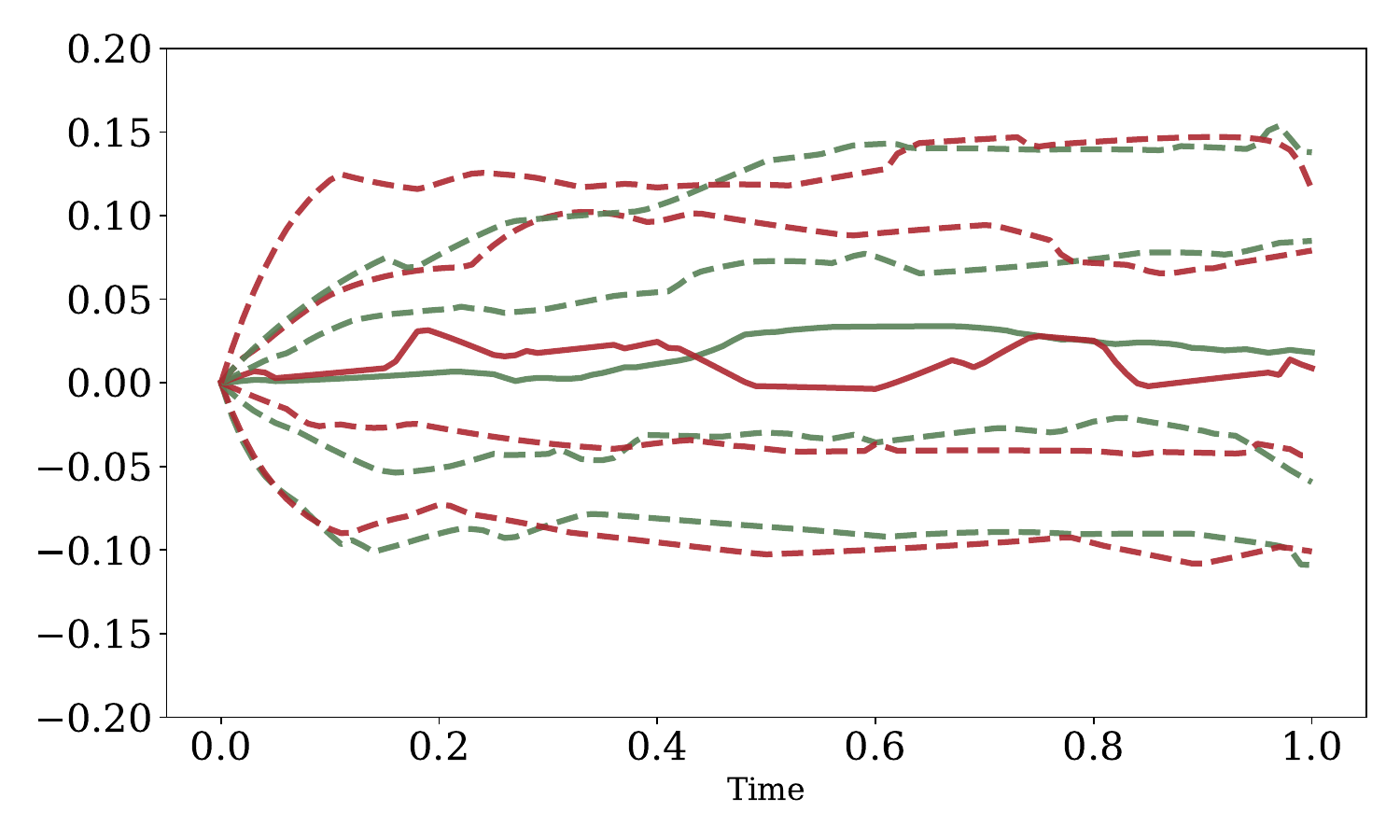}
        \caption{Component 2}
        \label{fig:quant_comp2_d1k3}
    \end{subfigure}
    \hfill
    \begin{subfigure}[b]{0.32\textwidth}
        \centering
        \includegraphics[width=\textwidth]{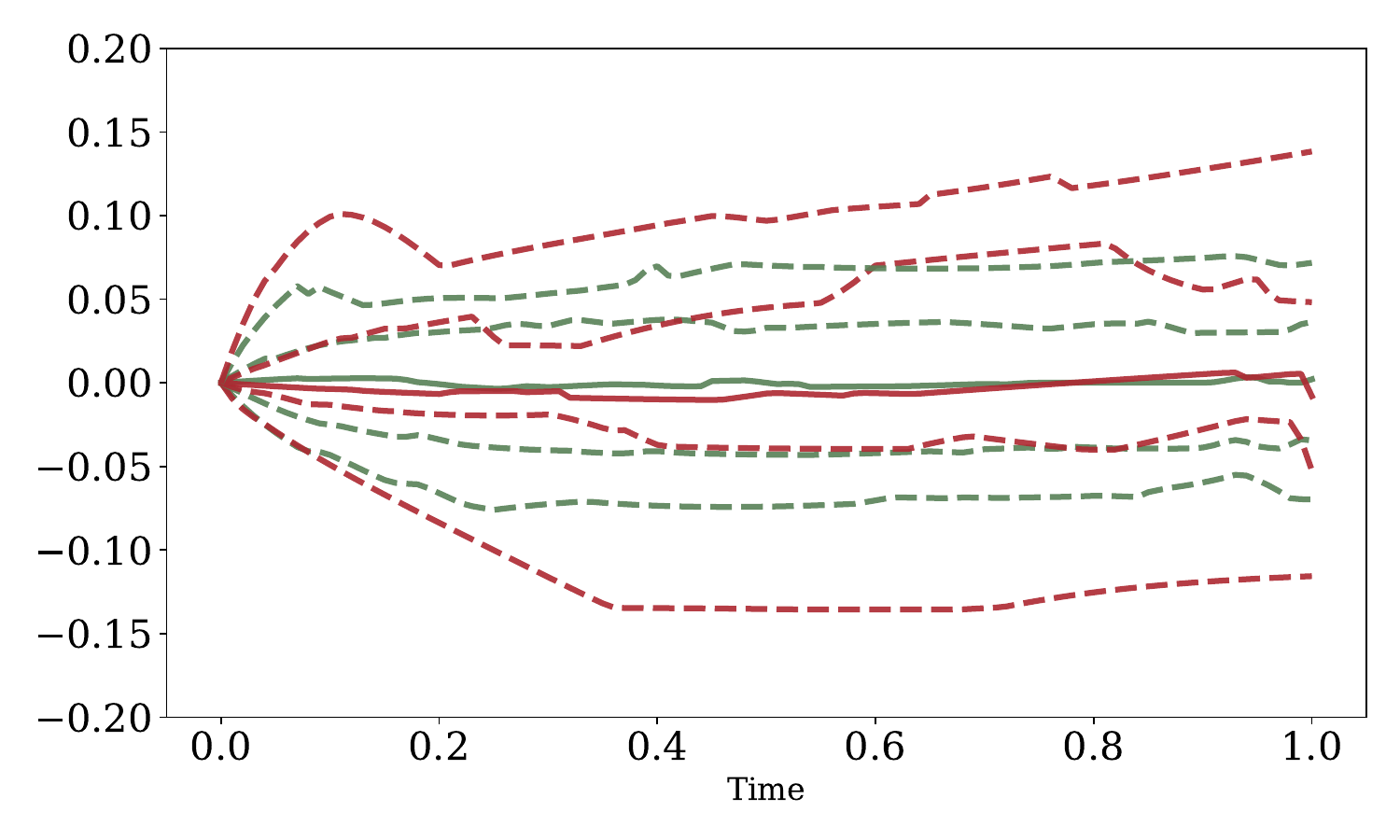}
        \caption{Component 3}
        \label{fig:quant_comp3_d1k3}
    \end{subfigure}

    \caption{Quantile curves (median and 25\%--75\% bands) of the change in GMM mixture weights for each of the $K=3$ components over time, relative to their initial value, for Treatment ({\color{ForestGreen}green}) and Control ({\color{red}red}) groups in the univariate model ($d=1$). Each panel shows the temporal evolution of a component's weight deviation from baseline.}
    \label{fig:quantiles_comparison_d1k3}
\end{figure}

\begin{figure}[h!]
    \centering
    
    \begin{subfigure}[b]{0.48\textwidth}
        \centering
        \includegraphics[width=\textwidth]{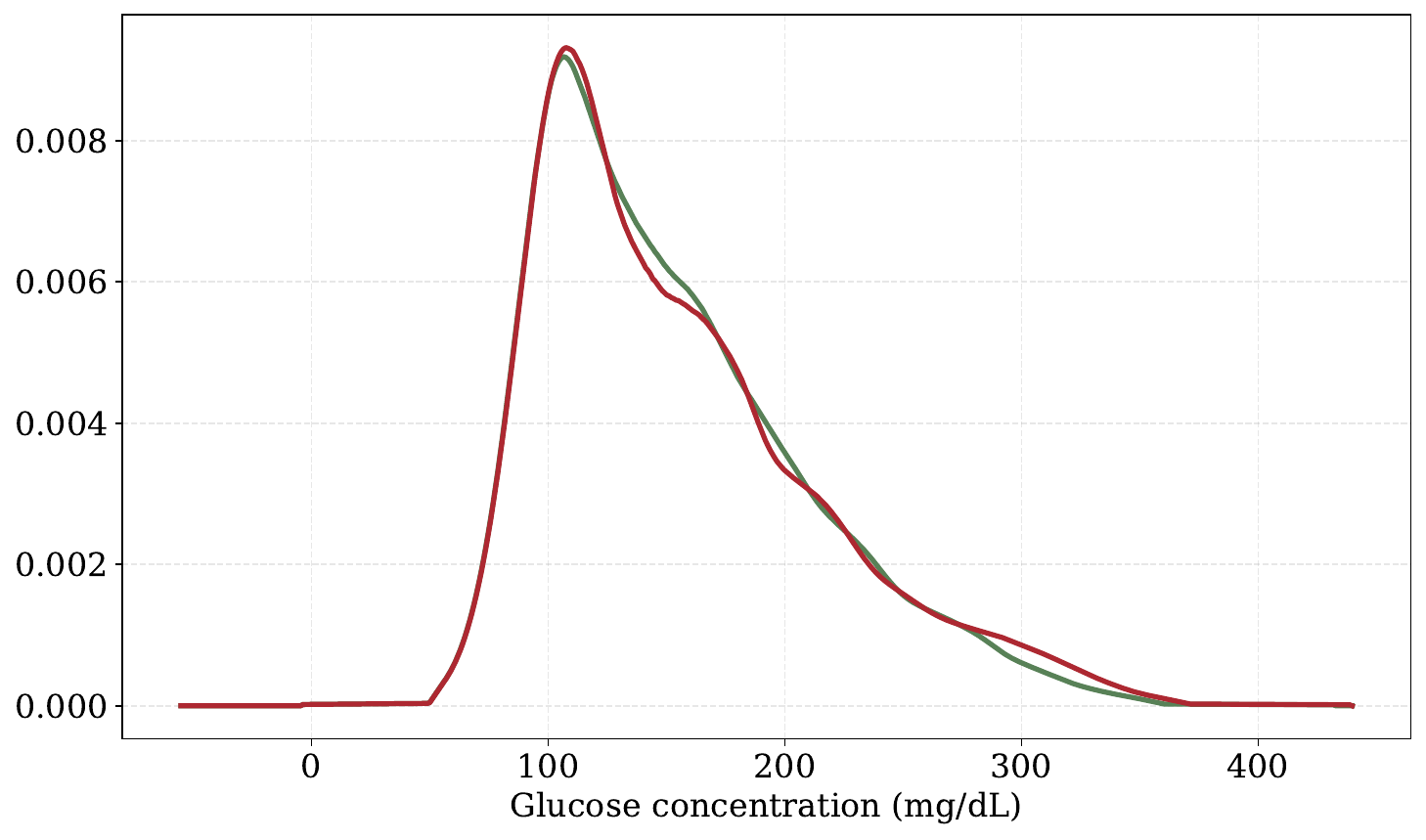}
        \caption{Initial (Week 20): Treatment vs. Control}
        \label{fig:dens_init_comp_d1k3}
    \end{subfigure}
    \hfill
    \begin{subfigure}[b]{0.48\textwidth}
        \centering
        \includegraphics[width=\textwidth]{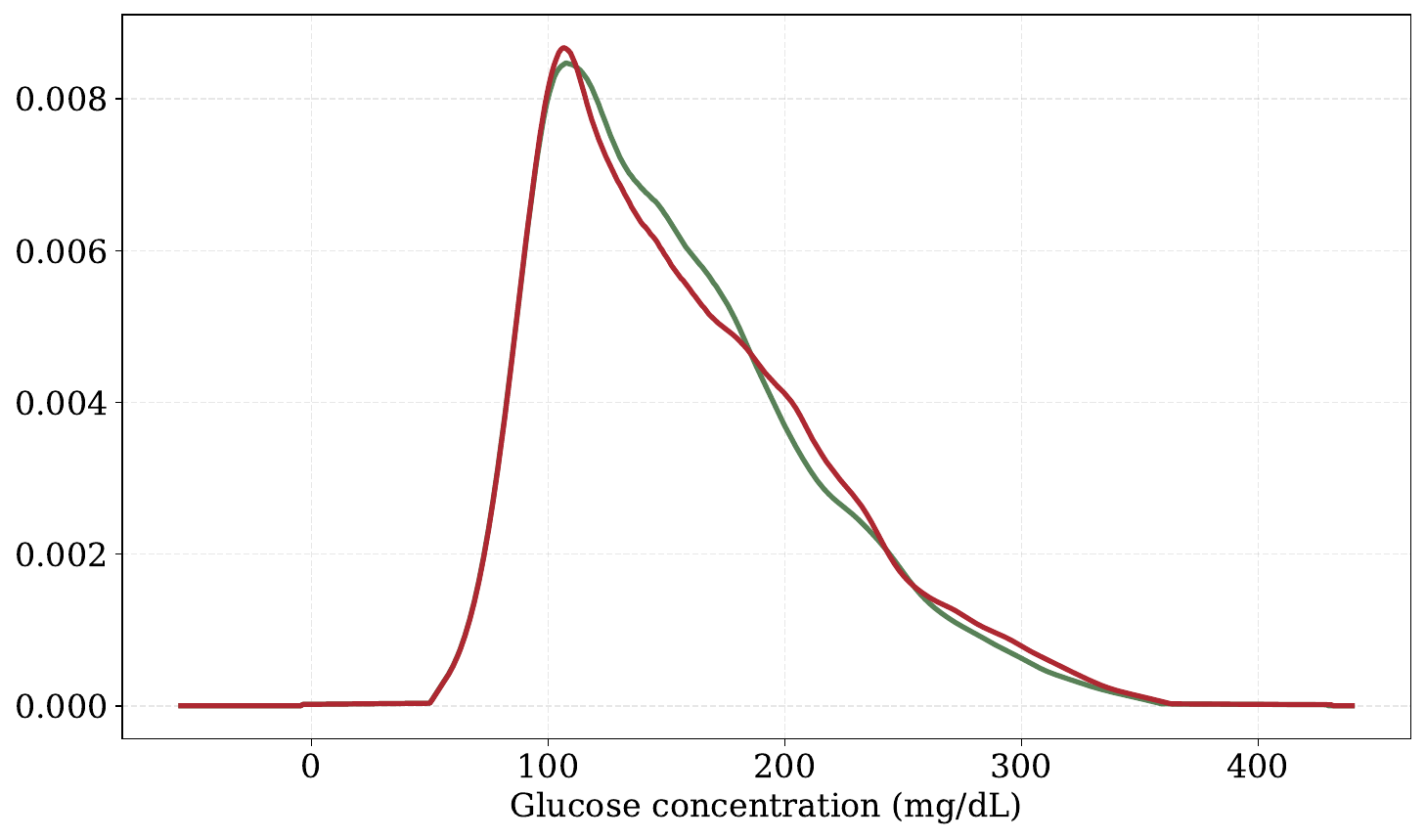}
        \caption{Final (Week 26): Treatment vs. Control}
        \label{fig:dens_fin_comp_d1k3}
    \end{subfigure}
    
    \vspace{1.5em}
    
    \begin{subfigure}[b]{0.48\textwidth}
        \centering
        \includegraphics[width=\textwidth]{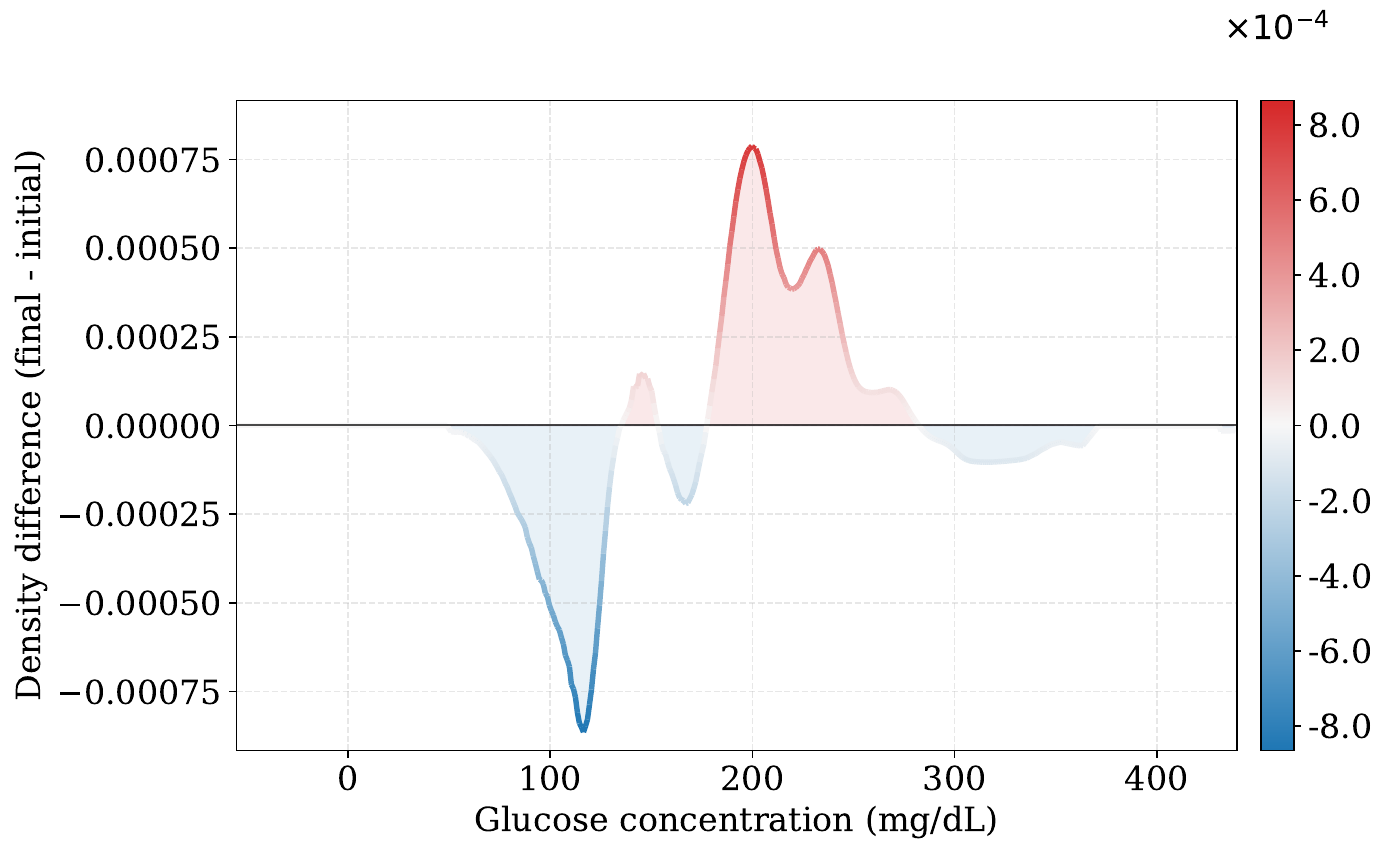}
        \caption{Difference: Control}
        \label{fig:dens_diff_ctrl_d1k3}
    \end{subfigure}
    \hfill
    \begin{subfigure}[b]{0.48\textwidth}
        \centering
        \includegraphics[width=\textwidth]{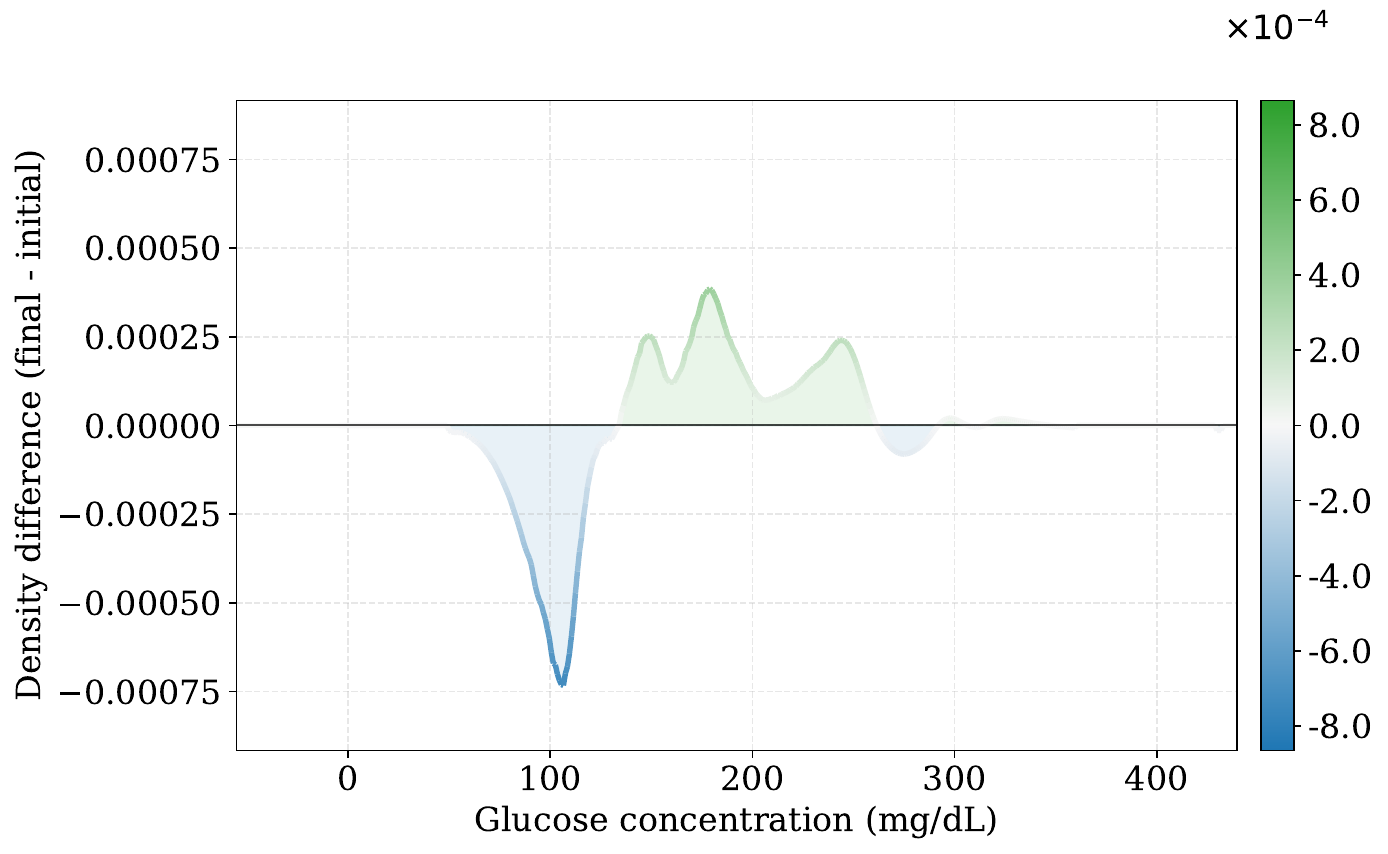}
        \caption{Difference: Treatment}
        \label{fig:dens_diff_treat_d1k3}
    \end{subfigure}

    \caption{Predicted glucose density distributions for the univariate model ($d=1$) with $K=3$ mixture components. The curves represent the marginal density over glucose concentration, computed as the Fréchet mean (1D Wasserstein barycenter) across individuals within each group. The top row compares the Treatment ({\color{ForestGreen}green}) and Control ({\color{red}red}) groups at the initial observation (week 20, left) and final observation (week 26, right). The bottom row displays the difference between the final and initial distributions for the Control group (left) and the Treatment group (right).}
    \label{fig:density_comparison_d1k3}
\end{figure}

\begin{figure}[h!]
    \centering
    \includegraphics[width=0.5\linewidth]{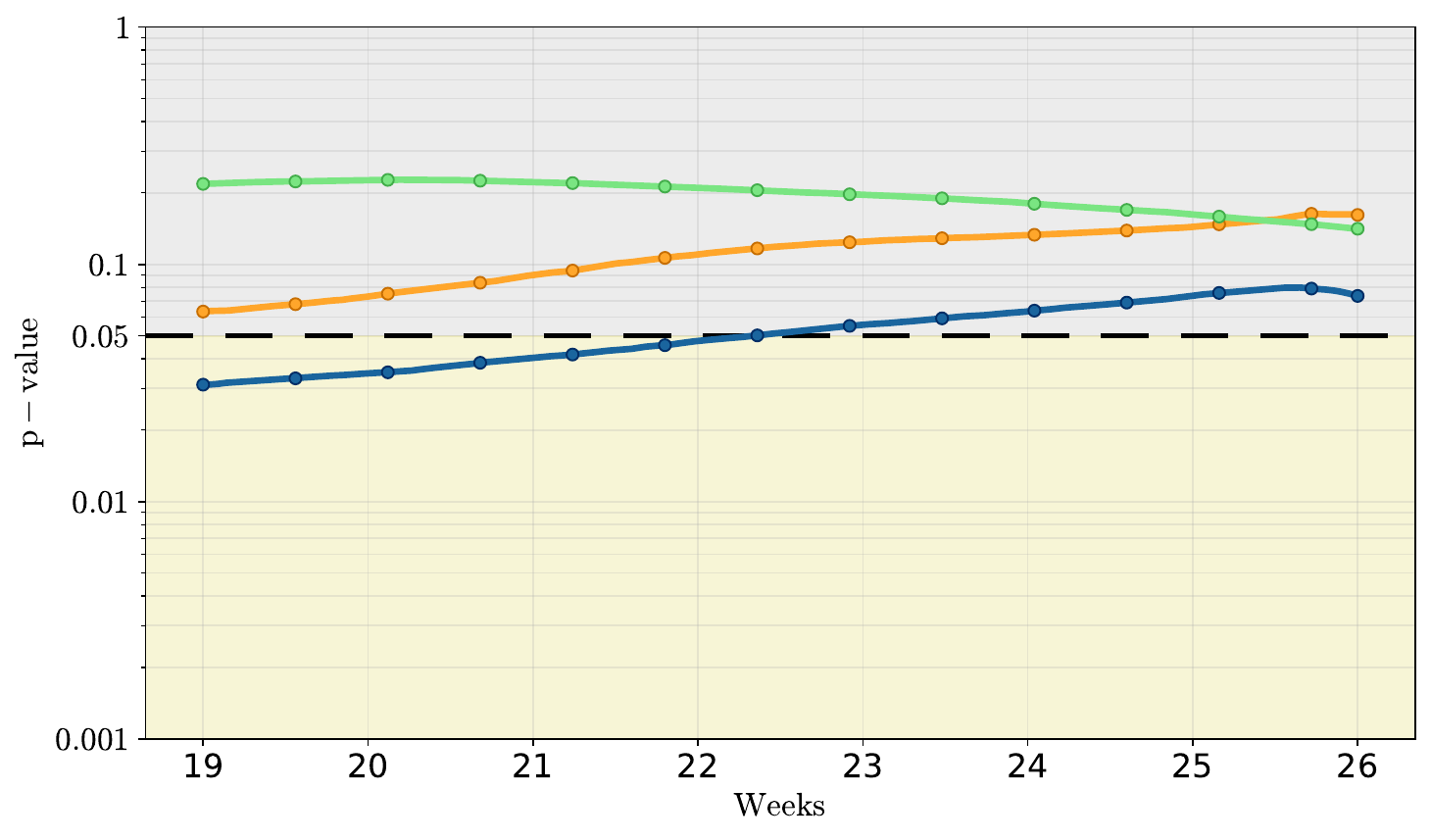}
    \caption{Wild Bootstrap MMD test $p$-values comparing Treatment vs. Control groups over time for the univariate model ($d=1$) with $K=3$ mixture components. The dashed black line indicates the significance threshold $\alpha = 0.05$. The colors correspond to the different components: {\color{blue}blue} (component 1), {\color{orange}orange} (component 2), and {\color{ForestGreen}green} (component 3).}
    \label{fig:mmd_wildbootstrap_d1k3}
\end{figure}

\newpage

\section{Implementation details}

We report the main hyperparameters used in our experiments. Unless stated otherwise, we reuse the same settings in simulations and in the diabetes case study.

\begin{table}[ht]
  \centering
  \small
  \caption{Calibrated global statistics for univariate and bivariate data.}
  \label{tab:global-stats}
  \resizebox{\linewidth}{!}{%
  \renewcommand{\arraystretch}{1.1} 
  \setlength{\arraycolsep}{3pt}
  \setlength{\tabcolsep}{4pt}
  \begin{tabular}{@{} l *{5}{c} @{}}
    \toprule
    \multicolumn{6}{c}{\textbf{Univariate (\(d=1\))}} \\
    \midrule
    \textbf{Param.} & \boldmath\(s=1\) & \boldmath\(s=2\) & \boldmath\(s=3\) & & \\
    \midrule
    \(\mu_s\)       & \(109.38\) & \(178.38\) & \(275.16\) & & \\
    \(\sigma_s^2\)  & \(421.86\) & \(613.47\) & \(1692.82\) & & \\
    \midrule\midrule
    \multicolumn{6}{c}{\textbf{Bivariate (\(d=2\))}} \\
    \midrule
    \textbf{Param.} & \boldmath\(s=1\) & \boldmath\(s=2\) & \boldmath\(s=3\) & \boldmath\(s=4\) & \boldmath\(s=5\) \\
    \midrule
    \({\mu}_s\) &
      \(\begin{bmatrix}90.74 \\ -0.0081\end{bmatrix}\) &
      \(\begin{bmatrix}129.24 \\ -0.0152\end{bmatrix}\) &
      \(\begin{bmatrix}173.03 \\ -0.0165\end{bmatrix}\) &
      \(\begin{bmatrix}229.53 \\ 0.0902\end{bmatrix}\) &
      \(\begin{bmatrix}305.20 \\ -0.1327\end{bmatrix}\) \\
    \addlinespace[0.4em]
    \({\Sigma}_s\) &
      \(\begin{bmatrix}194.95 & -0.1161\\[-1.5pt] -0.1161 & 0.9723\end{bmatrix}\) &
      \(\begin{bmatrix}129.48 & 0.0554\\[-1.5pt] 0.0554 & 1.0150\end{bmatrix}\) &
      \(\begin{bmatrix}194.27 & -0.4217\\[-1.5pt] -0.4217 & 1.6905\end{bmatrix}\) &
      \(\begin{bmatrix}367.18 & 0.0451\\[-1.5pt] 0.0451 & 5.8084\end{bmatrix}\) &
      \(\begin{bmatrix}889.62 & 7.1861\\[-1.5pt] 7.1861 & 5.4221\end{bmatrix}\) \\
    \bottomrule
  \end{tabular}%
  }
\end{table}

\begin{table}[ht]
\centering
\footnotesize 
\caption{Hyperparameters for the proposed estimator and baselines.}
\label{tab:all-hyperparams}
\begin{tabular*}{\linewidth}{@{\extracolsep{\fill}} llll @{}}
\toprule
\multicolumn{4}{c}{\textbf{Proposed Two-Stage Estimator}} \\
\midrule
\multicolumn{2}{l}{\textit{Step 1: per-time MMD mixture fit}} & \multicolumn{2}{l}{\textit{Step 2: time-series neural ODE fit}} \\
Mixture components ($K$)         & $5$                & Vector field MLP              & 2 layers, width 64 \\
Kernel bandwidth ($\sigma$)      & median heuristic   & Integration horizon ($T$)     & 1.0 \\
Ridge regularization ($\lambda$) & $10^{-2}$          & Solver step size              & 0.01 \\
Adam learning rate               & $10^{-3}$          & Adam learning rate            & $10^{-3}$ \\
Iterations / inner grad. steps   & $50$ / $20$        & Max epochs                    & 2000 \\
                                 &                    & Ridge regularization ($\nu$)  & $10^{-10}$ \\
\midrule
\multicolumn{4}{c}{\textbf{Normalizing Flow Baseline (MAF)}} \\
\midrule
Context embedding MLP            & 128                & Epochs                        & 50 \\
Flow transform hidden            & 64                 & Batch size                    & 64 \\
Blocks (Coupling/AR)             & 6                  & Optimizer                     & Adam \\
Stacked blocks                   & 2                  & Learning rate                 & $10^{-3}$ \\
\midrule
\multicolumn{4}{c}{\textbf{Time-Conditional KDE Baseline}} \\
\midrule
Context embedding MLP            & 128                & Bandwidth selection           & \texttt{scott} \\
\bottomrule
\end{tabular*}
\end{table}

\end{document}